\def\eqref#1{equation~\ref{#1}}
\def\1{\bm{1}}
\DeclareMathAlphabet{\mathsfit}{\encodingdefault}{\sfdefault}{m}{sl}
\SetMathAlphabet{\mathsfit}{bold}{\encodingdefault}{\sfdefault}{bx}{n}
\newcommand{\E}{\mathbb{E}}
\newcommand{\R}{\mathbb{R}}
\newcommand\numberthis{\addtocounter{equation}{1}\tag{\theequation}}
\newtheorem{theorem}{Theorem}[section]
\crefname{theorem}{Theorem}{Theorems}
\newaliascnt{lemma}{theorem}
\crefname{lemma}{Lemma}{Lemmas}
\newaliascnt{proposition}{theorem}
\crefname{proposition}{Proposition}{Propositions}
\newaliascnt{corollary}{theorem}
\newtheorem{corollary}[corollary]{Corollary}
\crefname{corollary}{Corollary}{Corollaries}
\newaliascnt{fact}{theorem}
\crefname{fact}{Fact}{Facts}
\newaliascnt{definition}{theorem}
\newtheorem{definition}[definition]{Definition}
\crefname{definition}{Definition}{Definitions}
\newaliascnt{remark}{theorem}
\crefname{remark}{Remark}{Remarks}
\newaliascnt{conjecture}{theorem}
\crefname{conjecture}{Conjecture}{Conjectures}
\newaliascnt{claim}{theorem}
\newtheorem{claim}[claim]{Claim}
\crefname{claim}{Claim}{Claims}
\newaliascnt{question}{theorem}
\crefname{question}{Question}{Questions}
\newaliascnt{exercise}{theorem}
\crefname{exercise}{Exercise}{Exercises}
\newaliascnt{example}{theorem}
\crefname{example}{Example}{Examples}
\newaliascnt{notation}{theorem}
\crefname{notation}{Notation}{Notations}
\newaliascnt{problem}{theorem}
\crefname{problem}{Problem}{Problems}
\newcommand{\norm}[1]{\lVert#1\rVert}
\def\E{\mathbb E}
\newcommand{\kk}{\ensuremath{\mathrm{\mathbf{k}}}}
\newenvironment{CompactItemize}{
\begin{list}{\tiny$\bullet$}{%
\setlength{\leftmargin}{10pt}
\setlength{\itemindent}{0pt}
\setlength{\topsep}{-1pt}
\setlength{\itemsep}{0pt}
}}
{\end{list}}
\definecolor{goodgreen}{RGB}{74,146,59}
\definecolor{badred}{RGB}{196,46,34}
\title{Learning from End User Data with Shuffled
Differential Privacy over Kernel Densities}
\author{%
  Tal Wagner\\ 
  The Blavatnik School of Computer Science and AI\\
  Tel-Aviv University\\
  \texttt{tal.wagner@gmail.com} \\
}
\begin{document}

\maketitle

\begin{abstract}
We study a setting of collecting and learning from private data distributed across end users.
In the \emph{shuffled} model of differential privacy, the end users partially protect their data locally before sharing it, and their data is also anonymized during its collection to enhance privacy. 
This model has recently become a prominent alternative to central DP, which requires full trust in a central data curator, and local DP, where fully local data protection takes a steep toll on downstream accuracy.

Our main technical result is a shuffled DP protocol for privately estimating the kernel density function of a distributed dataset, with accuracy essentially matching central DP. 
We use it to privately learn a classifier from the end user data, by learning a private density function per class. 
Moreover, we show that the density function itself can recover the semantic content of its class, despite having been learned in the absence of any unprotected data. 
Our experiments show the favorable downstream performance of our approach, and highlight key downstream considerations and trade-offs in a practical ML deployment of shuffled DP. 
\end{abstract}

\section{Introduction}\label{sec:intro}
Collecting statistics on end user data is commonly required in data analytics and machine learning. As it could leak private user information, privacy guarantees need to be incorporated into the data collection pipeline. Differential Privacy (DP) \citep{dwork2006calibrating} currently serves as the gold standard for privacy in machine learning. Most of its success has been in the \emph{central} DP model, where a centralized data curator holds the private data of all the users and is charged with protecting their privacy. However, this model does not address how to collect the data from end users in the first place. 
The \emph{local} DP model \citep{kasiviswanathan2011can}, where end users protect the privacy of their data locally before sharing it, is often used for private data collection \citep{erlingsson2014rappor,ding2017collecting,apple2017learning}. However, compared to central DP, local DP often comes at a steep price of degraded accuracy in downstream uses of the collected data.




The \emph{shuffled} DP model \citep{bittau2017prochlo,cheu2019distributed,erlingsson2019amplification} has recently emerged as a prominent intermediate alternative. 
In this model, the users partially protect their data locally, and then entrust a centralized authority---called the ``shuffler''---with the single operation of shuffling (or anonymizing) the data from all participating users. Data collection protocols in this model are designed so that the composition of shuffling over local user computations rigorously ensures DP. 
The appeal of shuffled DP lies in the convergence of theoretical and practical properties:  
mathematically, recent work has proved that shuffling can boost the accuracy of local DP to levels that may reach those of central DP \citep{erlingsson2019amplification,balle2019privacy,balle2020privacy,koskela2021tight,girgis2021renyi,feldman2022hiding,feldman2023stronger,zhou2022power}.  
At the same time, the strictly limited functionality of the shuffler lends itself to realistic secure implementations, and a trusted shuffler can be implemented using techniques from secure computation and cryptography, like mixnets, onion routing, trusted execution environments (TEEs), and secure aggregation (SecAgg) \citep{ishai2006cryptography,bittau2017prochlo,gordon2022spreading,kairouz2021distributed,kairouz2021advances}.

There is by now a well-developed body of work on basic operations under shuffled DP, primarily summation (see \Cref{sec:prelim_bitsum}). 
Work on machine learning has mostly focused on iterative settings (see \Cref{sec:related}), where distributed parties contribute local computations on their sensitive data, like gradients, over multiple rounds of shuffled DP communication. 
This is compatible with \emph{distributed (or federated) training} scenarios, in which a known set of parties collaborate in a training process that unfolds over time, typically with each contributing a local dataset and local computational resources (say, for computing local gradients). For example, the parties could be local branches of a large corporation (e.g., a bank), each holding the private data of multiple local customers.

Unfortunately, this is incompatible with \emph{data collection} scenarios, where a ``snapshot'' of user data is collected in one shot from a pool of uncommitted users who hold a single training point (their own private data), which they may opt in or out of sharing, and who do not participate computationally in the training process beyond possibly contributing their data. For example, the users could be end customers of a smartphone app, prompted to privately share statistics about their app activity.

In this work, we study the \emph{data collection} scenario under shuffled DP. We propose a private learning approach which can intuitively be seen as a shuffled DP analog of a nearest neighbor (kNN) classifier. 
A distributed training set of sensitive labeled data is privately collected from users, and like in kNN, subsequent test points are classified according to the most similar training examples. Since using a small number of neighbors in classification may violate their privacy, our classifier uses kernel density estimation (KDE) as a ``smooth'' alternative to kNN, which can be realized with shuffled DP. 
It thus labels test points as the class where their privately estimated density is maximized. 


Moreover, our classifier produces a function representation of each class. We show this representation can be used to recover the semantics of the class---for example, a list of terms that captures the topic of a class in textual data---even though the learner did not observe any unprotected text record from the class before privacy was imposed. We refer to this as \emph{private class decoding}. 

\noindent\textbf{Our results.}
Formally, we consider the following learning setting. Training data is distributed across $n$ users, each holding a single private training point $(x,c)$, where $x\in\R^d$ is a feature vector and $c\in[m]$ is a class label. 
The learner collects data from the users through shuffled DP, and uses them to construct a classifier, which can then be used to classify feature vectors $y\in\R^d$ from an unlabeled test set. The classifier itself needs to be private w.r.t.~the collected dataset; 
this enables labeling an unbounded number of test points without additional loss of privacy. 

To address this setting, 
our main theoretical result is a shuffled DP protocol for KDE estimation, that learns a private KDE function from distributed user data, which can then be used to estimate densities of test points. The utility guarantee is given in terms of the supremum mean squared error over all test points in $\R^d$, so that test points need not be known to the protocol in advance. The proof goes through a reduction to binary summation (abbrev.~\emph{bitsum}), which is among the most well-studied problems in shuffled DP, with a variety of available protocols to employ. 

Experimentally, we evaluate our method with various combinations of kernels and bitsum protocols, yielding different trade-offs between privacy, accuracy and communication, and highlighting key downstream considerations for shuffled DP compared to central DP and local DP baselines. 

\section{Background and Preliminaries}\label{sec:prelim}
\subsection{Central, Local and Shuffled DP}\label{sec:prelim_shuffle}
We review models of differential privacy.
Let $\mathcal X$ be a universe of data elements. A \emph{dataset} is an $n$-tuple $X\in\mathcal X^n$. 
Two datasets $X,X'$ are called \emph{neighboring} if they differ on at most one coordinate. A randomized algorithm $M$, that maps an input dataset to an output from a range of outputs $\mathcal T$, is $(\varepsilon,\delta)$-DP if for every pair of neighboring datasets $X,X'$ and every $T\subset\mathcal T$, it satisfies
\begin{equation}\label{eq:dp}
    \Pr[M(X)\in T] \leq e^\varepsilon\cdot\Pr[M(X')\in T] + \delta.
\end{equation}

In \emph{central} DP, a single data curator holds a dataset $X\in\mathcal X^n$ containing the data of $n$ users, with each coordinate in the $n$-tuple $X$ representing a user. The curator runs $M$ and releases its output. 

In \emph{local} DP, each user holds her own data element, on which she runs $M$ locally, and releases its output. Here, $M$ operates on a single data element (or $1$-tuple), and needs to satisfy \cref{eq:dp} for every $X,X'\in\mathcal X$ (every pair of single elements is neighboring). A central \emph{analyzer} collects the already ``privatized'' outputs from all users and performs an aggregate computation. 
In this model, there is no trusted central party at all, yielding a stronger form of privacy, albeit at the cost of accuracy.


The \emph{shuffled} DP model \citep{bittau2017prochlo,cheu2019distributed,erlingsson2019amplification} bridges the central and local DP models, by introducing a \emph{limited} trusted central party---called a ``shuffler''---whose only function is to anonymize (or randomly permute) the users' outputs before they are shown to the analyzer. The analyzer is considered untrusted, simialrly to local DP and unlike central DP. Formally, a shuffled DP protocol $\Pi$ consists of three randomized algorithms $\Pi=(\Pi_R,\Pi_S,\Pi_A)$:
\begin{CompactItemize}
    \item \emph{Randomizer} $\Pi_R$, which maps a single element from $X$ to some sequence of messages. Each user runs $\Pi_R$ locally on her data element, and forwards the output messages to the shuffler.
    \item \emph{Shuffler} $\Pi_S$, which collects the messages from all users and forwards them to the analyzer in a uniformly random order (thus, intuitively, removing sender identities).
    \item \emph{Analyzer} $\Pi_A$, which receives the permuted messages from the shuffler and outputs the result of an aggregate computation. 
\end{CompactItemize}
The protocol is $(\varepsilon,\delta)$-DP in the shuffled DP model if the output of $\Pi_S$ satisfies \cref{eq:dp} (i.e., it holds with $M(X):=\Pi_S(\cup_{i=1}^n\Pi_R(X_i))$). Due to the DP post-processing property \citep{dwork2014algorithmic}, the output of $\Pi_A$ is $(\varepsilon,\delta)$-DP as well.
The parties in the protocol also have access to a source of shared public randomness, which is considered publicly known, and thus cannot be exploited to compromise privacy (see \citet{kairouz2021distributed}).

\subsection{Bitsum Protocols in the Shuffled DP Model}\label{sec:prelim_bitsum}
Binary summation, which we refer to throughout as \emph{\textbf{bitsum}}, is a fundamental and well-studied problem in DP, and particularly in shuffled DP. Each of $n$ users holds a private bit $X_i\in\{0,1\}$, and the goal is to compute a DP estimate of the sum $S=\sum_{i=1}^nX_i$. 
The accuracy of a randomized estimate $\widetilde S$ is often quantified by its root mean squared error (RMSE), $(\E[(\widetilde S-S)^2])^{1/2}$. 
 

A long line of work on shuffled DP bitsums  \citep{cheu2019distributed,cheu2021pure,ghazi2020private,ghazi2020pure,ghazi2021differentially,ghazi2023pure} had yielded protocols whose RMSE essentially matches central DP, and is significantly better than local DP, along with additional desirable properties, like low communication and pure DP (i.e., $\delta=0$). 
We will use these protocols as black-boxes and not require familiarity with their details. For completeness and intuition, we describe how they work in \Cref{sec:bitsumappendix}. 

\subsection{Private Kernel Density Estimation}\label{sec:prelim_lsq}
Let $\kk:\R^d\times\R^d\rightarrow\R$ be a kernel, such as the Gaussian kernel $\kk(x,y)=\exp(-\norm{x-y}_2^2)$. The kernel density estimation (KDE) map $KDE_X:\R^d\rightarrow\R$, associated with a multiset $X\subset\R^d$ of size $n$, is defined as $KDE_X(y)=\frac1n\sum_{x\in X}\kk(x,y)$. 

Numerous works studied KDE in the central DP model \citep{hall2013differential,wang2016differentially,alda2017bernstein,coleman2020one,wagner2023fast,backurs2024efficiently}, mostly in a setting known as \emph{function release}. In this setting, the data curator holds all of $X$, and her goal is to release a function description $\widetilde K(\cdot)$ which is DP w.r.t.~$X$, such that $\widetilde K(y)$ is an accurate estimate of $KDE_X(y)$ for every $y\in\R^d$. 
We will adapt this problem to the shuffled DP model in \Cref{def:shufdpkde}.
Our approach to this problem will use the following notion of \emph{locality-sensitive quantization} (LSQ), recently introduced in \citet{wagner2023fast} for KDE in the central DP model. 

\begin{definition}[\citet{wagner2023fast}]\label{def:lsq}
    Let $Q,R,S,\beta>0$. Let $\mathcal Q$ be a distribution over pairs of functions $f,g:\R^d\rightarrow[-R,R]^Q$. We say that $\mathcal Q$ is a \emph{$\beta$-approximate $(Q,R,S)$-locality sensitive quantization (abbrev.~LSQ) family} for the kernel $\kk$, if the following 
    are satisfied for all $x,y\in\R^d$:
    \begin{CompactItemize}
        \item $|\kk(x,y) - \E_{(f,g)\sim\mathcal Q}[f(x)^Tg(y)]|\leq\beta$.
        \item $f(x)$ and $g(y)$ have each at most $S$ non-zero coordinates.
    \end{CompactItemize}
If this holds, then $\kk$ is \emph{$\beta$-approximate $(Q,R,S)$-LSQable}. If $\beta=0$, then $\kk$ is    \emph{$(Q,R,S)$-LSQable}. 
\end{definition}
For example, \citet{wagner2023fast} observed that the Gaussian kernel is $(1,\sqrt2,1)$-LSQable by random Fourier features \citep{rahimi2007random}, and the Laplacian and exponential kernels are $\beta$-approximate $(O(\beta^{-1}),1,1)$-LSQable for all $\beta>0$ by locality sensitive hashing \citep{indyk1998approximate}. 
They proved that LSQable kernels admit efficient KDE mechanisms in the central DP model. We will prove an analogous result for shuffled DP. 
While we draw on ideas from their central DP mechanism, our proofs will be different and self-contained.

\subsection{Additional Related Work}\label{sec:related}
Prior work on machine learning with shuffled DP has mostly focused on two iterative learning settings: distributed and federated model training \citep{cheu2021shuffle,girgis2021shuffled,NEURIPS2021_f44ec26e,liu2021flame,kairouz2021distributed}, where users share privately computed gradients; and multi-armed and contextual bandits \citep{tenenbaum2021differentially,chowdhury2022shuffle,zhou2023differentially,tenenbaum2023concurrent}, where users share private contexts and rewards. 
The main difference from our setting is their iterative nature, 
in which the users communicate with the analyzer over multiple rounds of a shuffled DP protocol. 
Of these, \citet{tenenbaum2021differentially} is somewhat akin to us in that they also reduce their problem to bitsums, although in their case the connection is more direct as they assume binary rewards in their multi-armed bandits problem. 
%

Beyond bitsums, there has been much work on shuffled DP protocols for integer and real summation \citep{cheu2019distributed,cheu2022differentially,balle2019privacy,balle2019improved,balle2020private,ghazi2020private,ghazi2020pure,ghazi2021differentially,balcer2021connecting} and other basic operations \citep{balcer2019separating,ghazi2019private,ghazi2021power,chen2020distributed,chang2021locally,scott2021applying,tenenbaum2023concurrent}. 
We discuss real summation in the context of our work in more detail in \Cref{sec:bitvsreal}.

Outside shuffled DP, a relevant work in the central DP model is \citet{backurs2024efficiently}, who also suggested a classifier that maximizes the privately computed similarity to a class. They presented results for the CIFAR-10 dataset by measuring distances to class means. Our experiments in \Cref{sec:experiments} include the same data in a distributed setup, evaluated with our shuffled DP protocol. 

\section{Data Collection and Classification with Shuffled DP}
In this section we present our private data collection and learning protocol. In \Cref{sec:shuffled_dp_considerations}, we discuss certain practical considerations with shuffled DP, that would inform the design of our method. In \Cref{sec:shufdpkde}, we give our shuffled DP result for KDE, which is the main building block in our classifier. In \Cref{sec:hdc}, we use it to privately learn a classifier from collected user data.


\subsection{Practical Considerations with Shuffled DP}\label{sec:shuffled_dp_considerations}

\noindent\textbf{User counts.}
A key limitation of shuffled DP is that protocols are required to know in advance the number of participating users $n$. Technically, the noise added by each local randomizer $\Pi_R$ generally decreases with $n$. This is crucial for boosted accuracy, albeit if some users drop out, the protocol fails to meet its DP guarantee for the remaining users. This limitation may be acceptable in the \emph{distributed training} scenario from \Cref{sec:intro}, where a predetermined group of parties is expected to collaborate on training and reliably execute the protocol. However, in our \emph{data collection} scenario, it would make less sense to assume that the number of participating users is known in advance. We will therefore designate a preliminary communication round for allowing users to opt into participation.

\noindent\textbf{Privacy threat models.} 
There are several possible places to impose DP in an ML pipeline. \citet{ponomareva2023dp} outline three options, from the most stringent to most lenient form of privacy: \emph{input/data-level DP}, where the adversary has access to the data used to train the ML model; \emph{model-level DP}, where the adversary has full access to the weights of the trained model; and \emph{prediction-level DP}, where the adversary has access only to model outputs when presented with test points. 

We will consider the first two of these options, adapted to shuffled DP. 
Input/data-level DP means the adversary sees all communication sent to the analyzer (equivalently, the analyzer itself is the adversary). Note that communication from the users to the shuffler is never exposed (that would void the premise of shuffled DP); in practice, this line of communication is implemented cryptographically, exploiting on the restricted nature of the shuffler \citep{kairouz2021distributed}. However, the adversary can see all communication between the shuffler and the analyzer, as well as all direct communication (if any) between the users and the analyzer. We refer to this as the \emph{communication-threat model}. 
In model-level DP, a weaker adversary sees only the trained model released by the analyzer after the protocol execution is complete; we refer to this as the \emph{model-threat model}. These threat models are typically not differentiated in prior work on shuffled DP, since they often coincide; however, in our case, the trained model would leak less privacy than the communication used to learn it. 

\noindent\textbf{Bit-width and discretization.} 
\citet{kairouz2021distributed} emphasize that in practice, the shuffler implementation often requires modular arithmetics for 
cryptographic secure aggregation. Therefore, the shuffled DP protocol's numerical values must be discretized, and its bit precision (called \emph{bit-width}) needs to be explicitly bounded. Neglecting to account for the bit-width may lead to impractical communication costs and to larger errors (due to discretization) than a real-valued analysis predicts.
We will therefore incorporate discretization into our protocol and account for it in the error analysis.

\begin{algorithm}[t]
 \caption{Shuffled DP KDE protocol from bitsums} 
\label{alg:shufdpkde}
 \begin{multicols}{2}
 \DontPrintSemicolon
  \myinit{\textit{$\;\;$// all data here is public}}
  {
    \KwInput{shuffled DP bitsum protocol $\Pi$; $(Q,R,S)$-LSQ family $\mathcal Q$; integer $I>0$}
    \For{$i=1,\ldots,I$}{
      $(f_i,g_i)\leftarrow$ independent sample from $\mathcal Q$ \textit{$\;\;\;\;$// using shared/public randomness}\;
      \For{$j=1,\ldots,Q$}{
        $\Pi_{ij}\leftarrow$ independent instance of $\Pi$\;
      }
    }
    \KwPublish{$(f_i,g_i)$ for all $i=1,\ldots,I$}
  }
  \myrandomizer{\textit{$\;\;$// each user runs this locally with private randomness}}{
    \KwInput{private data point $x\in\R^d$}
     \For{$(i,j)\in[I]\times[Q]$}{
       $b_{ij}\leftarrow\mathrm{Bernoulli}(0.5(1+(f_i(x))_j/R))$\;
       $\Gamma_{ij}\leftarrow$ run the randomizer of $\Pi_{ij}$ on $b_{ij}$\;
       \For{message $\gamma$ in $\Gamma_{ij}$}{
         send $(\gamma, (i,j))$ to the shuffler 
       }
    }
  }
  \myanalyzer{\textit{$\;\;$// runs after the shuffler}}{
    \KwInput{shuffled sequence of messages $\widetilde\Gamma$ from $n$ users}
    \For{$(i,j)\in[I]\times[Q]$}{
        $\widetilde\Gamma_{ij}\leftarrow$ empty sequence\;
    }
    \For{message $(\gamma,(i,j))$ in $\widetilde\Gamma$}{
        append $\gamma$ to $\widetilde\Gamma_{ij}$\;
    }
    \For{$(i,j)\in[I]\times[Q]$}{
        $\widetilde B_{ij}\leftarrow$ run the analyzer of $\Pi_{ij}$ on $\widetilde\Gamma_{ij}$\;
        $\widetilde F_{ij}\leftarrow (2\widetilde B_{ij}-n)R$\;
    }
    \KwPublish{$\widetilde F_{ij}$ for all $i,j$}
  }
  \vspace{2pt}
  \myquery{\textit{$\;\;$// runs on the analyzer's published output arbitrarily many times}}{
    \KwInput{query point $y\in\R^d$}
    \KwReturn{$\frac{1}{nI}\sum_{i=1}^I\sum_{j=1}^Q\widetilde F_{ij}\cdot(g_i(y))_j$}
  }
  \end{multicols}
\end{algorithm}

\subsection{Shuffled DP KDE from Bitsum Protocols}\label{sec:shufdpkde}

We now present the theoretical backbone of our private learning approach, a shuffled DP protocol for KDE. 
We start by defining the KDE problem in the shuffled DP model.

\begin{definition}[shuffled DP KDE]\label{def:shufdpkde}
In the shuffled DP KDE problem, a dataset $X\in(\R^d)^n$ of $n$ points in $\R^d$ is distributed across $n$ users, one point per user. The goal is to devise a shuffled DP protocol $\Pi_{\mathrm{KDE}}$ in which the analyzer releases a function description $\widetilde K(\cdot)$, required to be $(\varepsilon,\delta)$-DP w.r.t.~$X$. The supremum root mean square error (abbrev.~supRMSE) of the protocol is defined as 
\[ \mathrm{supRMSE}(\Pi_{\mathrm{KDE}}) := \sup_{y\in\R^d}\sqrt{\E\left[\left(\widetilde K(y) - KDE_X(y)\right)^2\right]} . \]
\end{definition}

Our main technical result is the following theorem, which is a reduction from KDE to bitsum protocols in the shuffled DP model, for kernels with the LSQ property defined in \Cref{def:lsq}. 
The resulting shuffled DP KDE protocol is given in Algorithm~\ref{alg:shufdpkde}. 


\begin{theorem}\label{thm:main}
Let $\kk$ be a $\beta$-approximate $(Q,R,S)$-LSQable kernel (cf.~\Cref{def:lsq}). 
Suppose we have an unbiased $(\varepsilon_0,\delta_0)$-DP bitsum protocol $\Pi$ in the shuffled DP model, with RMSE $\mathcal E_{\Pi}$. 
Then, for every $\delta'>0$ and integer $I>0$, Algorithm~\ref{alg:shufdpkde} is a shuffled DP KDE protocol, which is $(\varepsilon,\delta)$-DP in the communication-threat model, where $\varepsilon=\varepsilon_0S(e^{\varepsilon_0S}-1)I + \varepsilon_0S\sqrt{2I\ln(1/\delta')}$ and $\delta=IS\delta_0+\delta'$, 
with supRMSE $\sqrt{4\beta^2 + I^{-1}\cdot 16R^4S \left(S + (\mathcal E_{\Pi}/n)^2\right)}$. The protocol has optimal bit-width $1$.
\end{theorem}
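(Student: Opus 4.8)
The plan is to prove the three assertions of the theorem---the supRMSE bound, the $(\varepsilon,\delta)$-DP guarantee, and the bit-width---separately, using the bitsum protocol $\Pi$ and the LSQ family $\mathcal Q$ only as black boxes through their unbiasedness, RMSE, DP, and sparsity properties. For utility I would first trace an unbiasedness chain. Fix a query $y$ and condition on the sampled pairs $(f_i,g_i)$. Since $(f_i(x))_j\in[-R,R]$, the Bernoulli parameter $0.5(1+(f_i(x))_j/R)$ is a valid probability; summing over the $n$ users and using that $\Pi$ is unbiased gives $\E[\widetilde F_{ij}\mid f_i,g_i]=\sum_{x\in X}(f_i(x))_j$, hence $\E[\widetilde K(y)\mid\{f_i,g_i\}]=\tfrac1{nI}\sum_i\sum_x f_i(x)^Tg_i(y)=:A(y)$. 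Taking expectation over $\mathcal Q$ and invoking the LSQ bound $|\kk(x,y)-\E[f(x)^Tg(y)]|\le\beta$ shows $|\E[\widetilde K(y)]-KDE_X(y)|\le\beta$, so the squared bias is at most $\beta^2\le 4\beta^2$.

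Next I would bound the variance via the split $\widetilde K(y)-KDE_X(y)=(\widetilde K(y)-A(y))+(A(y)-KDE_X(y))$, controlling each piece with $(a+b)^2\le 2a^2+2b^2$ (the nested applications of this inequality are what produce the loose constants $4$ and $16$). The term $A(y)-KDE_X(y)$ averages $I$ i.i.d.\ copies of $\tfrac1n\sum_x f_i(x)^Tg_i(y)$; since $f_i(x)$ and $g_i(y)$ each have at most $S$ nonzero coordinates bounded by $R$, each summand is at most $SR^2$ in magnitude, giving variance $O(R^4S^2/I)$ on top of the $\beta^2$ bias. For $\widetilde K(y)-A(y)$ I would note that the $\widetilde F_{ij}$ are independent across $(i,j)$ given $\mathcal Q$, decompose $\Var[\widetilde F_{ij}]=4R^2\Var[\widetilde B_{ij}]$ into Bernoulli-sampling variance ($\le n/4$) and bitsum noise ($\le\mathcal E_\Pi^2$), and use $\sum_j(g_i(y))_j^2\le SR^2$ to collapse the sum; the $1/(nI)^2$ prefactor over $I$ repetitions then yields the $I^{-1}\cdot R^4S(S+(\mathcal E_\Pi/n)^2)$ form. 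Summing the two pieces against the target leaves ample slack.

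For privacy in the communication-threat model the object to analyze is the shuffler output. Because every message carries its public tag $(i,j)$, a global uniform shuffle restricts to an independent uniform shuffle within each tag, so the transcript is equivalent to $IQ$ independent shuffled bitsum transcripts, each $(\varepsilon_0,\delta_0)$-DP. The key structural fact is that a single user's point enters repetition $i$ only through the at most $S$ nonzero coordinates of $f_i(x)$---on all other coordinates $b_{ij}\sim\mathrm{Bernoulli}(0.5)$ is data-independent---so changing that user perturbs at most $S$ of the $Q$ instances in each repetition. I would then compose: basic composition of $\le S$ instances gives $(S\varepsilon_0,S\delta_0)$-DP \emph{within} each repetition, and advanced composition (Dwork--Roth) \emph{across} the $I$ independent repetitions gives exactly $\varepsilon=\varepsilon_0S(e^{\varepsilon_0S}-1)I+\varepsilon_0S\sqrt{2I\ln(1/\delta')}$ and $\delta=IS\delta_0+\delta'$; post-processing then extends DP to the published $\widetilde F_{ij}$ and to all query answers. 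The bit-width claim is immediate, as the only values sent to the shuffler are single-bit bitsum messages.

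I expect the privacy accounting to be the main obstacle, on two fronts: (i) justifying the per-tag independent-shuffle reduction together with the ``at most $S$ affected instances per repetition'' sensitivity count (being careful about the substitution versus add/remove neighboring relation, where a naive substitution bound would instead give $2S$); and (ii) choosing the grouping---basic composition within a repetition, advanced composition across repetitions---that reproduces the precise closed form for $\varepsilon$ rather than the weaker bound obtained by composing all $SI$ instances at once.
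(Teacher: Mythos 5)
Your proposal follows essentially the same route as the paper's proof: randomized rounding to reduce to $IQ$ independent bitsum instances, the LSQ sparsity bound to argue that one user affects at most $S$ instances per repetition, basic composition within a repetition followed by advanced composition across the $I$ repetitions (which is exactly how the paper obtains the stated $\varepsilon,\delta$), and a two-term bias--variance decomposition of the error. The one substantive difference is where the randomized-rounding fluctuation is booked. The paper splits the error as $\bigl(KDE_X(y)-\tfrac1{nI}\sum\bar F_{ij}g_i(y)_j\bigr)+\tfrac1{nI}\sum E_{ij}g_i(y)_j$, where $\bar F_{ij}$ sums the \emph{rounded} values; the rounding noise is then absorbed into the worst-case bound $|\kk(x_u,y)-(\bar f_i^{(u)})^Tg_i(y)|\le\beta+2R^2S$ and never appears as a separate variance term. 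You instead split around the unrounded $A(y)$, so the Bernoulli sampling variance ($\le n/4$ per instance) surfaces explicitly and contributes an extra $\Theta(SR^4/(nI))$ to the final bound. That term is not covered by the $(\mathcal E_\Pi/n)^2$ part (good protocols have $\mathcal E_\Pi^2\ll n$), so it must be folded into the $S^2$ term via $1/n\le 1\le S$; but the crude estimate $(\beta+2R^2S)^2\le 2\beta^2+8R^4S^2$ on the LSQ piece already exhausts the $16R^4S^2/I$ budget after doubling, so your ``ample slack'' claim is optimistic---you need a slightly sharper expansion of $(\beta+2R^2S)^2$ (or must accept a marginally larger constant) to land exactly on the stated bound. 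This is a constants issue, not a conceptual gap. On privacy, your accounting matches the paper's exactly, and your flag about substitution versus add/remove adjacency (naive substitution gives $2S$ affected instances rather than $S$) identifies a subtlety the paper's own argument passes over by phrasing adjacency as omitting a user.
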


Note that $\varepsilon,\delta$ take the familiar ``advanced composition'' form \citep{dwork2014algorithmic} of $I$ instances of an $(\varepsilon_0S,\delta_0S)$-DP mechanism. 
%
The proof of \Cref{thm:main} goes by showing that the LSQ coordinates can be discretized essentially without loss of accuracy, and invoking the bitsum protocol on the discretized coordinates, using a careful probabilistic analysis to bound the overall supRMSE from their individual RMSEs. It is given in full in \Cref{sec:mainproof}.
\Cref{sec:variants} also discusses additional variants of \Cref{thm:main}, for bitsum protocols whose error guarantee is given in other terms than the RMSE (like \citet{cheu2019distributed}), or that achieve pure DP (like \citet{ghazi2020pure,ghazi2023pure}). 

As a concrete instantiation of \Cref{thm:main}, we get the following result for the Gaussian kernel, by plugging the shuffled DP protocol from \citet{ghazi2020private} as the bitsum protocol, and random Fourier features \citep{rahimi2007random} as the LSQ family. The proof is in \Cref{sec:fullgaussian}. 
For completeness, the protocol for this special case is fully detailed in Algorithm~\ref{alg:shufdpkdegaussian} in the appendix.

\begin{theorem}[shuffled DP Gaussian KDE]\label{thm:gaussian}
There are constants $C,C'>0$ such that the following holds. 
    Let $\delta\in(0,1)$ and $\varepsilon\leq C\log(1/\delta)$. 
    For every $\alpha\geq C'\sqrt{\log(1/\delta)}/(\varepsilon n)$, there is an $(\varepsilon,\delta)$-DP Gaussian KDE protocol in the shuffled DP model (under the communication-threat model) with $n$ users and inputs from $\R^d$, which has: supRMSE $\alpha$, user running time $\min(O(d/\alpha^2), \tilde O(d+1/\alpha^4))$, expected communication of $\tilde O(1/\alpha^2)$ bits per user, expected analyzer running time $O(n/\alpha^2)$, KDE query time $\min(O(d/\alpha^2), \tilde O(d+1/\alpha^4))$, and optimal bit-width $1$. 
\end{theorem}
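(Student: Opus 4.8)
The plan is to derive \Cref{thm:gaussian} as a direct instantiation of \Cref{thm:main}, by substituting two known components into the general reduction: (i) the random Fourier features realization of the Gaussian kernel as the LSQ family, and (ii) the \citet{ghazi2020private} shuffled DP bitsum protocol. First I would recall that the paper already observes that the Gaussian kernel $\kk(x,y)=\exp(-\norm{x-y}_2^2)$ is exactly $(1,\sqrt2,1)$-LSQable via random Fourier features, so $\beta=0$, $Q=1$, $R=\sqrt2$, and $S=1$. These parameters collapse the supRMSE expression in \Cref{thm:main} dramatically: with $\beta=0$ and $S=1$ it becomes $\sqrt{I^{-1}\cdot 16R^4\bigl(1+(\mathcal E_\Pi/n)^2\bigr)}$, and since $R^4=4$ this is $\sqrt{64\,I^{-1}\bigl(1+(\mathcal E_\Pi/n)^2\bigr)}$.

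Second, I would plug in the quantitative guarantee of the \citet{ghazi2020private} bitsum protocol. That protocol achieves, for target privacy $(\varepsilon_0,\delta_0)$, an RMSE $\mathcal E_\Pi=O(\sqrt{\log(1/\delta_0)}/\varepsilon_0)$ with communication essentially polylogarithmic per user, and it is unbiased (or can be taken so), matching the hypothesis of \Cref{thm:main}. The key calculation is then to choose the composition count $I$ and the per-instance privacy budget $(\varepsilon_0,\delta_0)$ so that the overall privacy is $(\varepsilon,\delta)$ and the supRMSE equals the target $\alpha$. Because $S=1$, the privacy expression simplifies to $\varepsilon=\varepsilon_0(e^{\varepsilon_0}-1)I+\varepsilon_0\sqrt{2I\ln(1/\delta')}$ and $\delta=I\delta_0+\delta'$. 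Taking $\varepsilon_0$ to be a small constant (so $e^{\varepsilon_0}-1=O(\varepsilon_0)$) makes the first privacy term $O(\varepsilon_0^2 I)$ and the second $O(\varepsilon_0\sqrt{I\log(1/\delta')})$, and I would split $\delta$ evenly between $\delta'$ and $I\delta_0$. The plan is to set $I=\Theta(1/\alpha^2)$ so that the supRMSE $\sqrt{64 I^{-1}(1+(\mathcal E_\Pi/n)^2)}=\alpha$, which in turn forces $\mathcal E_\Pi/n=O(1)$, giving a constraint of the form $\sqrt{\log(1/\delta_0)}/(\varepsilon_0 n)=O(1)$, and then back-solving the privacy equations yields the stated lower bound $\alpha\geq C'\sqrt{\log(1/\delta)}/(\varepsilon n)$ together with the regime restriction $\varepsilon\leq C\log(1/\delta)$.

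Third, the resource bounds follow by bookkeeping over the $I$ instances. With $Q=1$ and $S=1$, each user runs $I=\Theta(1/\alpha^2)$ bitsum randomizers, each on a single Bernoulli bit obtained from one random Fourier feature; evaluating one feature costs $O(d)$, giving user time $O(d/\alpha^2)$, while the alternative $\tilde O(d+1/\alpha^4)$ bound comes from amortizing feature generation, which I would cite from \citet{wagner2023fast} or a fast-RFF argument. The per-user communication is $I$ times the polylogarithmic per-instance cost of \citet{ghazi2020private}, i.e. $\tilde O(1/\alpha^2)$ bits; the analyzer aggregates $n$ users' messages across $I$ instances for $O(n/\alpha^2)$ time; and the KDE query evaluates $I$ features against $y$, matching the user time. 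The optimal bit-width $1$ is inherited verbatim from \Cref{thm:main}.

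The main obstacle I anticipate is the privacy-to-accuracy back-solving in the second step: one must simultaneously satisfy the advanced-composition privacy constraint and hit the exact supRMSE target $\alpha$, and these couple $I$, $\varepsilon_0$, and $\delta_0$ nontrivially through the $\sqrt{2I\ln(1/\delta')}$ term. The delicate part is verifying that choosing $\varepsilon_0$ a constant keeps us in the valid regime of the \citet{ghazi2020private} protocol (which typically requires $\varepsilon_0=O(1)$ and $n$ large enough relative to $1/\varepsilon_0$) while still producing the clean final condition $\alpha=\Omega(\sqrt{\log(1/\delta)}/(\varepsilon n))$; I would handle this by treating $C,C'$ as absorbing the constants from both the composition bound and the bitsum protocol's own applicability range, and checking that the dominant privacy term is the $\sqrt{I}$ one so that $\varepsilon\approx\varepsilon_0\sqrt{I\log(1/\delta)}$, which pins down $\varepsilon_0\sqrt{I}\approx\varepsilon/\sqrt{\log(1/\delta)}$ and closes the argument.
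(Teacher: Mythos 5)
Your proposal is correct and follows essentially the same route as the paper's proof: instantiating \Cref{thm:main} with the $(1,\sqrt2,1)$-LSQ random Fourier features family and the \citet{ghazi2020private} bitsum protocol, choosing $I=\Theta(1/\alpha^2)$, $\delta'=\delta/2$, $\delta_0=\delta/(2I)$, and $\varepsilon_0=\varepsilon/\sqrt{I\log(1/\delta)}$ (which is exactly what your ``back-solving'' pins down), with the same resource bookkeeping. The only cosmetic difference is that the paper attributes the $\tilde O(d+1/\alpha^4)$ running-time alternative to the random-projection preprocessing of \citet{backurs2024efficiently} via the fast Johnson--Lindenstrauss transform, rather than to \citet{wagner2023fast}.
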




\subsection{Private Learning, Classification and Class Decoding}\label{sec:hdc}
We now describe our private learning approach for classification and class decoding. 
Recall that each user holds a private data point $x\in\R^d$ and a corresponding label $c\in[m]$. 


\noindent\textbf{Learning.}
The learner will aim to learn a KDE function representation per class, using the shuffled DP protocol from \Cref{thm:main}. As discussed in \Cref{sec:shuffled_dp_considerations}, this requires knowing in advance the number of participating users per class. We therefore start with a preliminary communication round designated to privately obtain these counts. This could be done with an off-the-shelf shuffled DP histogram protocol (e.g., \citet{ghazi2020private}); however, this again requires prior knowledge of the total number of users. To avoid this chicken-and-egg issue, we will use vanilla local DP for the preliminary communication round. It is a stronger form of privacy that requires no prior knowledge, and its accuracy, while degraded, is still sufficient for the simple task of private user counts.

Formally, let $\varepsilon_0,\delta_0,\varepsilon_{\mathrm{lbl}}>0$ be privacy parameters. 
Learning proceeds as follows. 
First, each user locally protects her label $c$ and publishes a privatized label $\tilde c$, using $m$-ary randomized response \citep{kairouz2014extremal,kairouz2016discrete}. Thus, $\tilde c$ is set to $c$ with probability $e^{\varepsilon_{\mathrm{lbl}}}/(e^{\varepsilon_{\mathrm{lbl}}}-1+m)$, and to a uniformly random label from $[m]\setminus\{c\}$ otherwise. This ensures that $\tilde c$ is $\varepsilon_{\mathrm{lbl}}$-DP, without shuffling.

Based on the published labels, the learner groups the users into their reported classes, and publishes the count of users $\tilde n_{\tilde c}$ in each reported class $\tilde c\in[m]$. These counts are $\varepsilon_{\mathrm{lbl}}$-DP by post-processing, and thus safe to publish. 
The users in each reported class then execute the shuffled DP KDE protocol in Algorithm~\ref{alg:shufdpkde}, using $\varepsilon_0,\delta_0$ as the privacy parameters for each instance of the bitsum protocol $\Pi$.
The learner acts as the analyzer in all these protocols, and through them learns an approximate KDE function $\widetilde K_c(\cdot)$ for each label $c\in[m]$. 
From \Cref{thm:main} and from basic DP composition, we have the following privacy guarantee:

\begin{corollary}
    The above learning protocol is $(\varepsilon,\delta)$-DP in the model-threat model, and $(\varepsilon+\varepsilon_{\mathrm{lbl}},\delta)$-DP in the communication-threat model, where $\varepsilon,\delta$ are given by $\varepsilon_0,\delta_0$ as stated in \Cref{thm:main}.
\end{corollary}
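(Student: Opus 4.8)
The plan is to decompose the protocol into its two communication phases, bound their privacy separately, and then combine them differently for the two threat models. The first phase is the $m$-ary randomized response on labels, which is $\varepsilon_{\mathrm{lbl}}$-DP per user by construction; since each reported label $\tilde c_i$ depends only on user $i$'s own data, a single user change perturbs only one coordinate of the label vector, so the entire label vector is $\varepsilon_{\mathrm{lbl}}$-DP. The second phase is the collection of per-class shuffled DP KDE protocols, each of which is $(\varepsilon,\delta)$-DP with the stated parameters by \Cref{thm:main}.

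The key structural observation for the feature phase is that each user participates in exactly one class protocol, indexed by her reported label. Conditioned on the reported labels, the users are thus partitioned into disjoint groups, one per class, each running an independent $(\varepsilon,\delta)$-DP mechanism on its feature vectors. Parallel composition then gives that the whole collection of released KDE functions is $(\varepsilon,\delta)$-DP with respect to the features, with no dependence on $m$.

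For the model-threat model I would argue that the released model is exactly this collection of per-class KDE functions, so its privacy is governed solely by the feature phase and equals $(\varepsilon,\delta)$. The randomized labels and the per-class counts are only intermediate quantities---used to route users and to normalize the functions---and are not themselves part of the released model, so they carry no additional privacy cost in this view. For the communication-threat model, the adversary additionally sees the published randomized-response labels, which are sent directly to the analyzer without shuffling and leak $\varepsilon_{\mathrm{lbl}}$; composing this with the $(\varepsilon,\delta)$-DP feature phase by basic sequential composition yields the stated $(\varepsilon+\varepsilon_{\mathrm{lbl}},\delta)$-DP.

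I expect the main obstacle to be making the model-threat bound fully rigorous, because the partition into classes is data-dependent: changing one user's label moves her contribution from one KDE instance to another, so naively two class protocols are perturbed at once. The cleanest fix is to condition on the realized reported labels, apply parallel composition within each fixed partition, and then check that marginalizing over the unreleased labels does not reintroduce the $\varepsilon_{\mathrm{lbl}}$ term in the model view---intuitively, the adversary never observes which class a user was routed to. I would formalize this by treating the randomized response together with the user's tagged bitsum messages as a single local randomizer, so that each user contributes to the released model through exactly one class and is absorbed by that class's $(\varepsilon,\delta)$ guarantee.
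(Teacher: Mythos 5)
Your decomposition into a label phase and a feature phase, with parallel composition across the disjoint per-class KDE protocols (conditioned on the reported labels) and basic sequential composition of $(\varepsilon_{\mathrm{lbl}},0)$ with $(\varepsilon,\delta)$, is exactly the argument the paper intends --- the paper itself dispatches this corollary in one line as ``\Cref{thm:main} plus basic DP composition'' --- and your treatment of the communication-threat model is correct and complete. Conditioning on the realized reported labels is also the right move for making the data-dependent partition rigorous: once the partition is fixed, a neighboring change affects only one class protocol, each class protocol runs with a fixed participant count, and the other protocols' inputs are untouched.

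The gap is in the model-threat half, and it sits precisely at the step you flag as needing a check: ``marginalizing over the unreleased labels does not reintroduce the $\varepsilon_{\mathrm{lbl}}$ term.'' As written, that check fails for two reasons. First, the released model is \emph{not} independent of the reported labels: the protocol publishes the per-class counts $\tilde n_{\tilde c}$, and these enter both the analyzer's output ($\widetilde F_{ij}=(2\widetilde B_{ij}-n)R$ with $n=\tilde n_{\tilde c}$) and the KDE normalization $\tfrac{1}{nI}$, so the functions $\widetilde K_c$ carry label information and the labels cannot simply be declared ``intermediate.'' Second, even if one restricted the model to quantities formally independent of the tags, the marginalization argument writes $\Pr_D[F\in T]=\sum_l\Pr_D[L=l]\Pr_D[F\in T\mid L=l]$, and while each conditional factor is controlled by $e^{\varepsilon}$ (plus $\delta$), the mixture weights $\Pr_D[L=l]$ and $\Pr_{D'}[L=l]$ differ by up to $e^{\varepsilon_{\mathrm{lbl}}}$, and the conditional laws $\Pr[F\in T\mid L=l]$ genuinely depend on $l$ (the class sizes determine each protocol's message counts and noise calibration). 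So the bound you recover is $e^{\varepsilon+\varepsilon_{\mathrm{lbl}}}$, not $e^{\varepsilon}$; the ``single local randomizer'' reformulation does not help, because the class tag survives shuffling. To be fair, the paper supplies no argument that closes this either --- its model-threat claim of $(\varepsilon,\delta)$ is only defensible under a narrower reading (e.g., privacy of the feature vector alone, with the label's leakage accounted separately by the $\varepsilon_{\mathrm{lbl}}$ randomized response) --- but your sketch, which promises to verify the marginalization step, would not go through as stated.
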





\noindent\textbf{Classification.}
The learner classifies a test point $y\in\R^d$ as the class where its private density estimate is maximized, namely as $c_y=\mathrm{argmax}_{c\in[m]}\widetilde K_c(y)$. 
We refer to this as the \emph{highest density class} (HDC) classifier. It can be viewed as generalizing the $k$-nearest neighbor (kNN) classifier, where the density of $y$ w.r.t.~class $c$ is measured by the number of its $k$-nearest neighbors labeled $c$, and of the nearest class center (NCC) classifier \citep{papyan2020prevalence}, where the density is measured by the distance between $y$ to the mean of all data points labeled $c$. Note that the kNN classifier is incompatible with DP, since its output relies on a small number of training points, while the NCC classifier is a special case of HDC, which we include in the experiments in the next section.

\noindent\textbf{Private class decoding.}
To illustrate class decoding, Suppose that the data points are embeddings of text documents (even though the notion extends to other data modalities as well). Let $V\subset\R^d$ be a fixed public ``vocabulary'', say the embeddings of all words in an English dictionary. To ``decode'' a class $c$, the learner returns the top few vocabulary words $v\in V$ that maximize $\widetilde K_c(v)$. 
The goal is for those words to capture and convey the semantic meanings of texts from class $c$.  

To underline the distinction between classification and class decoding: classification relies on the ability of the collection of functions $\{\widetilde K_c\}_{c\in[m]}$ to produce meaningfully rankable scores over the different classes for a \emph{fixed input} $y\in\R^d$; class decoding relies on the ability of each \emph{fixed function} $\widetilde K_c$ to produce meaningfully rankable scores over a large collection of inputs $V\subset\R^d$. 
In general, we expect a learned representation of a class to encompass its semantic meaning and to be decodable. The particular challenge in shuffled DP is that the representation was learned without observing any raw training example from the class, i.e., prior to imposing differential privacy on the training data.

\section{Experiments}\label{sec:experiments}
We evaluate our method with several combinations of kernels and bitsum protocols. 
Our code is enclosed in the supplementary material and available online.

\noindent\textbf{Datasets.}
We use three textual datasets and one image dataset:
\begin{CompactItemize}
    \item \emph{DBPedia-14} \citep{NIPS2015_250cf8b5}: Text documents containing summaries of Wikipedia articles. Training examples: 560K, test examples: 70K, classes: 14, task: topic classification.
    \item \emph{AG news} \citep{NIPS2015_250cf8b5}: Text documents containing news articles. Training examples: 120K, test examples: 7.6K, classes: 4, task: topic classification.
    \item \emph{SST2} \citep{socher-etal-2013-recursive}: Sentences extracted from movie reviews. Training examples: 67.3K, test examples: 1.82K, classes: 2, task: sentiment classification (positive/negative).
    \item \emph{CIFAR-10} \citep{Krizhevsky09learningmultiple}: Images from different object categories. Training examples: 50K, test examples: 10K, classes: 10, task: depicted object classification.
\end{CompactItemize} 
The datasets are embedded in $\R^d$ using standard pretrained models. 
The textual datasets are embedded into 768 dimensions with the SentenceBERT ``all-mpnet-base-v2'' model \citep{reimers2019sentence}. CIFAR-10 is embedded into 6144 dimensions with the SimCLR ``r152\_3x\_sk1'' model \citep{chen2020simple}, pre-trained on Imagenet (these are the same embeddings used in \citet{backurs2024efficiently} for their central DP experiment). 
All embedding vectors are normalized to unit length.
We note that the datasets are not included in the pretraining set of the respective embedding models used to embed them, ensuring that the pretraining set does not leak privacy in our experiments.

\noindent\textbf{Kernels.}
We experiment with two kernels that fit into the framework of \Cref{thm:main}:
\begin{CompactItemize}
    \item\textbf{Gaussian:} $\kk(x,y)=\exp(-\norm{x-y}_2^2)$. As noted in \Cref{sec:prelim_lsq}, it is $(1,\sqrt2,1)$-LSQable by letting the functions in $\mathcal Q$ be random Fourier features, leading to \Cref{thm:gaussian}.
    \item\textbf{IP:} The inner product kernel $\kk(x,y)=x^Ty$. Since our embeddings are normalized, it is trivially $(d,1,d)$-LSQable. 
    It is also $(1,\sqrt d,1)$-LSQable by standard dimensionality reduction arguments (see \Cref{sec:iplsq}), 
    which leads to better parameters in \Cref{thm:main}. 
    Note that for a subset $X'$ of training points, 
    the IP KDE at $y$  
    is $\tfrac1{|X'|}\sum_{x\in X'|}y^Tx=y^T(\tfrac1{|X'|}\sum_{x\in X'}x)$. Thus, the HDC classifier labels $y$ by the most similar class mean, as the NCC classifier discussed in \Cref{sec:hdc}.
\end{CompactItemize}
%
To equalize the computational costs of the two kernels, we set the number of repetitions $I$ in Algorithm~\ref{alg:shufdpkde} to $d$ (the embedding dimension). 
Since the embeddings have unit length, there is no need to clip the vectors at a hyperparameter (as in \citet{kairouz2021distributed,backurs2024efficiently}) for the IP kernel, nor to optimize a bandwidth for the Gaussian kernel (it is just set to $1$). 

\noindent\textbf{Bitsum protocols.}
We use three shuffled DP bitsum protocols from the literature, each optimal in a different measure --- efficiency, accuracy and privacy, respectively (see also \Cref{sec:bitsumappendix}):
\begin{CompactItemize}
    \item \textbf{RR}: The classical randomized response protocol, as adapted to shuffled DP by \citet{cheu2019distributed}. This protocol has optimal communication efficiency of a single one-bit message per user.\footnote{See \Cref{sec:communication} for a detailed discussion on communication costs.}
  \item \textbf{3NB}: The correlated noise protocol of \citet{ghazi2020private}, which has asymptotically optimal accuracy. We call it 3NB since it relies on three samples from a negative binomial distribution.\footnote{See $\psi_1,\psi_2,\psi_3$ in Algorithm~\ref{alg:shufdpkdegaussian} in the appendix.}
  \item \textbf{Pure}: The pure DP ($\delta=0$) protocol of \citet{ghazi2023pure}. The other protocols use $\delta>0$.\footnote{To maintain purity in Algorithm~\ref{alg:shufdpkde} when composing instances of Pure, they are composed with ``basic'' (pure) rather than ``advanced'' (approximate) DP composition \citep{dwork2014algorithmic}. See \Cref{sec:variants}.}
\end{CompactItemize}

\noindent\textbf{Privacy parameters.} We follow the guidelines given in \citet{ponomareva2023dp}, who cite current machine learning deployments of DP as using $\varepsilon$ generally between $5$ to $15$, and advocate for $\varepsilon\leq10$ as an acceptable privacy regime. We use $\varepsilon\in(0,10)$ to protect the training point $x\in\R^d$ with $(\varepsilon,\delta)$-shuffle DP, and $\varepsilon_{\mathrm{lbl}}\in\{3,5,7,10\}$ to protect the label $c\in[m]$ with $(\varepsilon_{\mathrm{lbl}},0)$-local DP.  
We use $\delta=10^{-6}$ for DBPedia-14 and AG news, and $\delta=10^{-5}$ for SST2 and CIFAR-10, accounting for the different dataset sizes. For RR and 3NB, the $\delta$ ``budget'' in \Cref{thm:main} is split equally between the advanced composition parameter $\delta'$ and the total $IQ\delta_0$ term of the bitsum protocol instances.


\subsection{Private Classification Results}

\begin{figure}[t]
\centering

\begin{subfigure}[b]{0.28\textwidth}
    \centering
    \includegraphics[width=\textwidth,height=3.6cm]{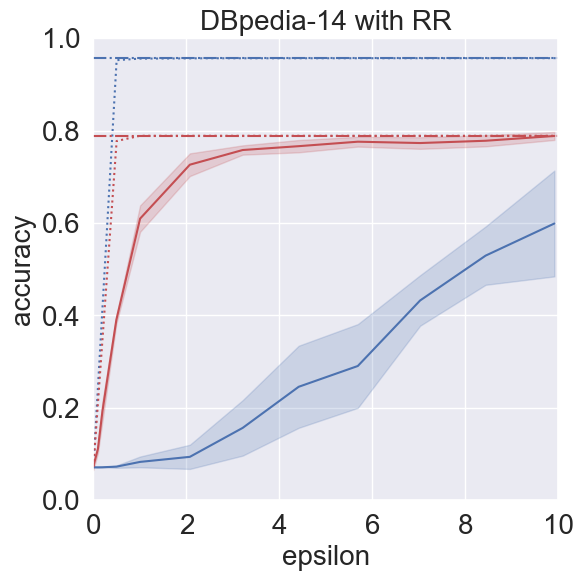}
\end{subfigure} \hspace{0.2in}
\begin{subfigure}[b]{0.28\textwidth}
    \centering
    \includegraphics[width=\textwidth,height=3.6cm]{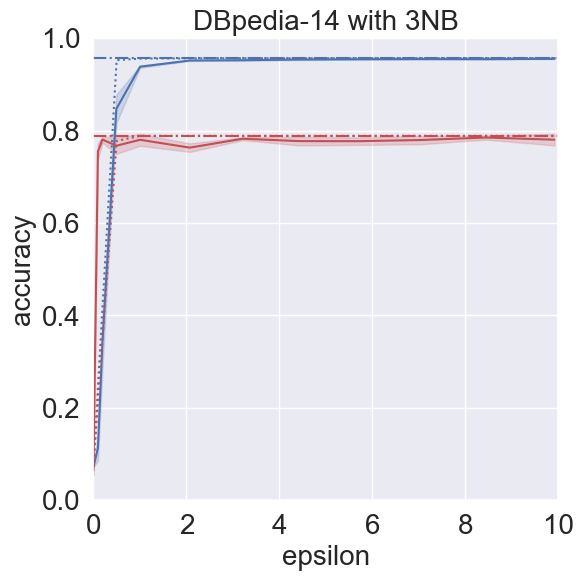}
\end{subfigure} \hspace{0.2in}
\begin{subfigure}[b]{0.28\textwidth}
    \centering
    \includegraphics[width=\textwidth,height=3.6cm]{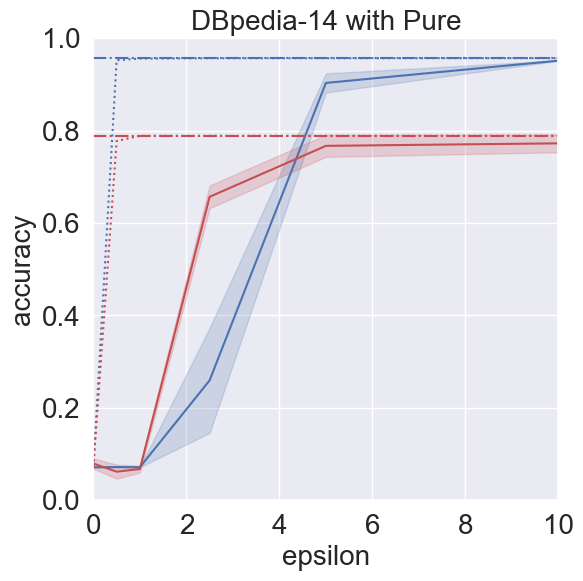}
\end{subfigure}\\

\begin{subfigure}[b]{0.28\textwidth}
    \centering
    \includegraphics[width=\textwidth,height=3.6cm]{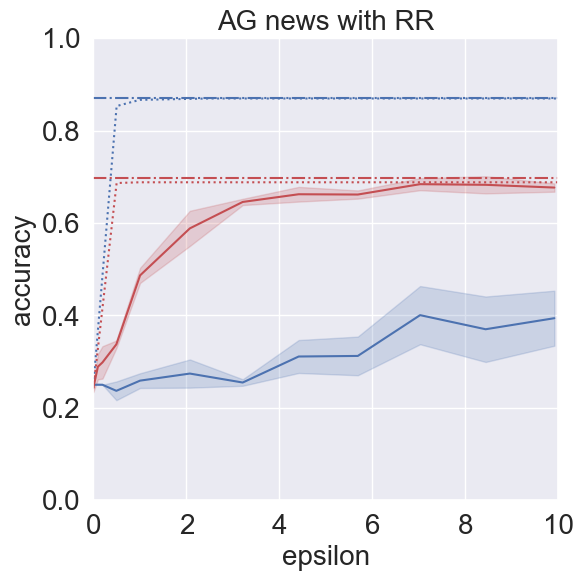}
\end{subfigure} \hspace{0.2in}
\begin{subfigure}[b]{0.28\textwidth}
    \centering
    \includegraphics[width=\textwidth,height=3.6cm]{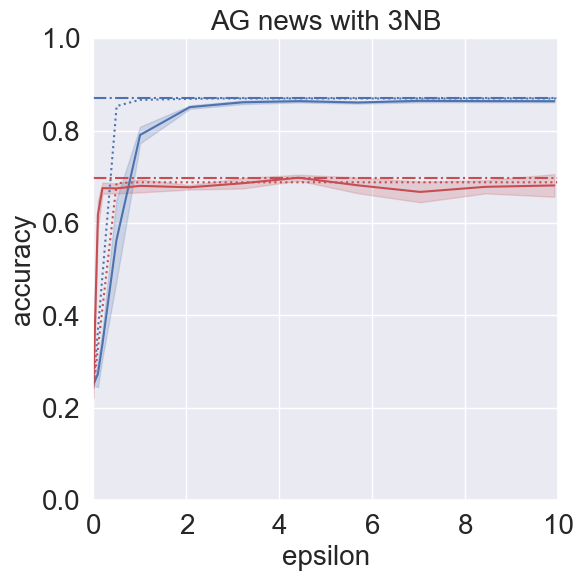}
\end{subfigure} \hspace{0.2in}
\begin{subfigure}[b]{0.28\textwidth}
    \centering
    \includegraphics[width=\textwidth,height=3.6cm]{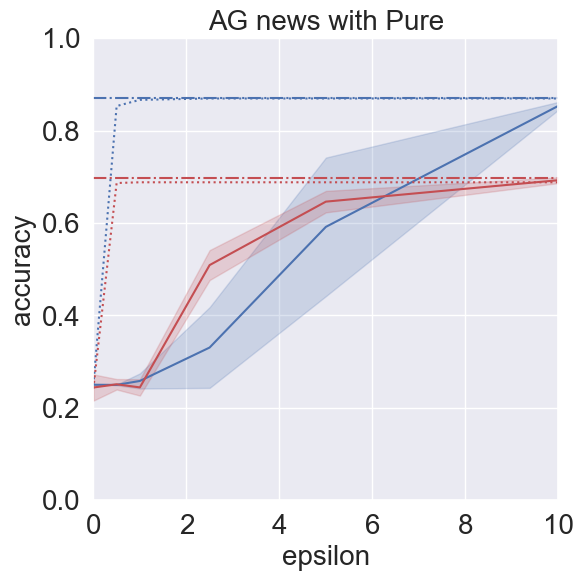}
\end{subfigure}\\

\begin{subfigure}[b]{0.28\textwidth}
    \centering
    \includegraphics[width=\textwidth,height=3.6cm]{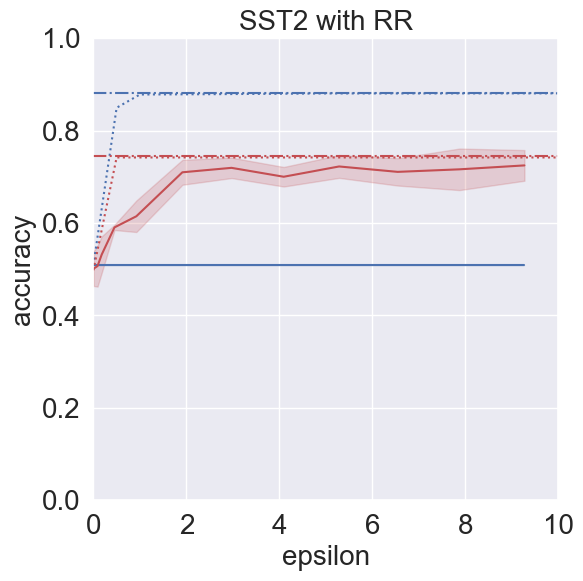}
\end{subfigure} \hspace{0.2in}
\begin{subfigure}[b]{0.28\textwidth}
    \centering
    \includegraphics[width=\textwidth,height=3.6cm]{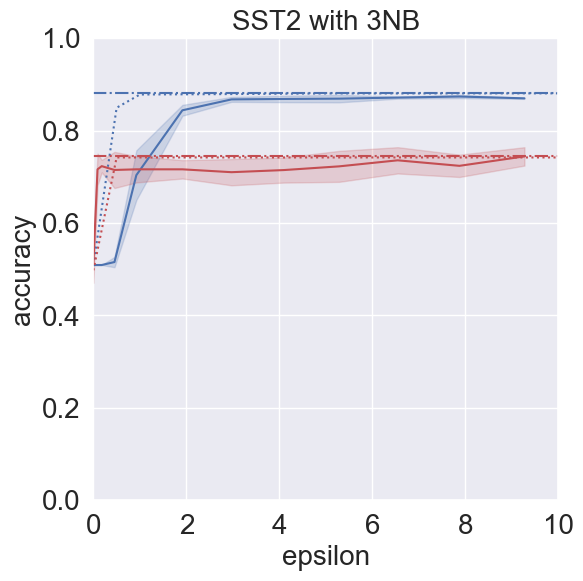}
\end{subfigure} \hspace{0.2in}
\begin{subfigure}[b]{0.28\textwidth}
    \centering
    \includegraphics[width=\textwidth,height=3.6cm]{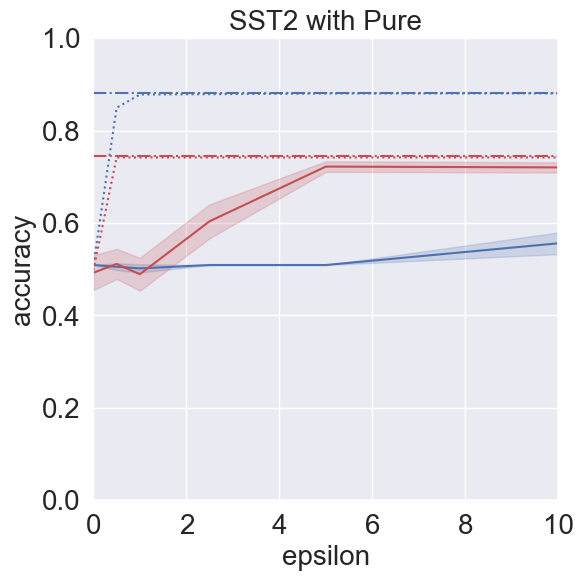}
\end{subfigure} \\

\begin{subfigure}[b]{0.28\textwidth}
    \centering
    \includegraphics[width=\textwidth,height=3.6cm]{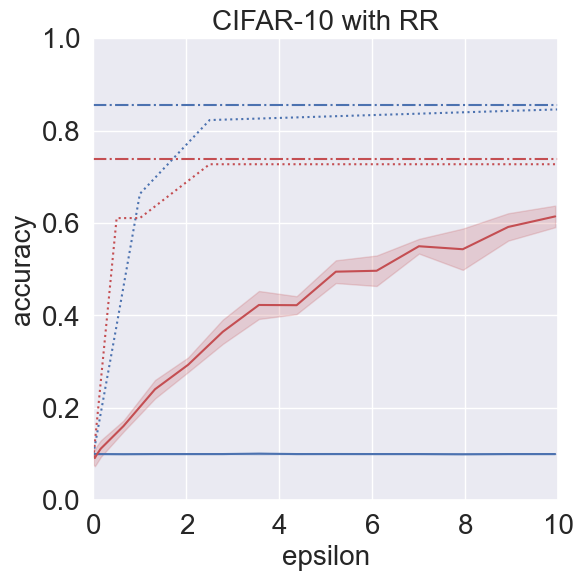}
\end{subfigure} \hspace{0.2in}
\begin{subfigure}[b]{0.28\textwidth}
    \centering
    \includegraphics[width=\textwidth,height=3.6cm]{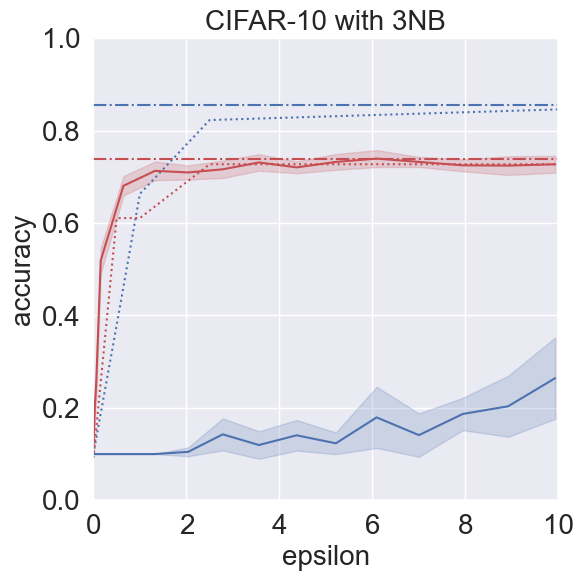}
\end{subfigure} \hspace{0.2in}
\begin{subfigure}[b]{0.28\textwidth}
    \centering
    \includegraphics[width=\textwidth,height=3.6cm]{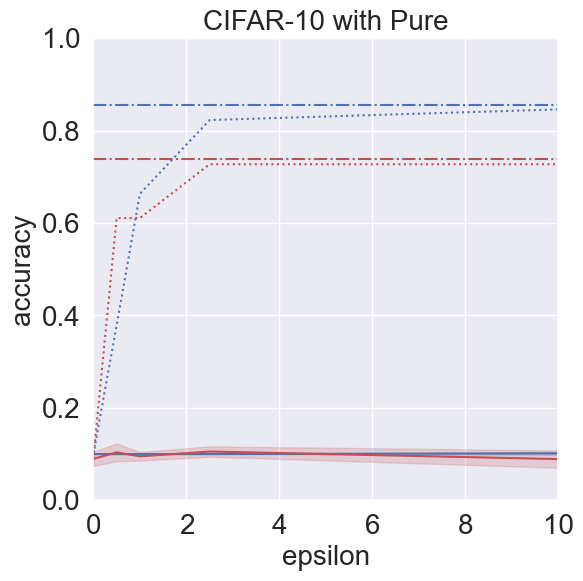}
\end{subfigure} \\

\begin{subfigure}[b]{\textwidth}
    \centering
    \includegraphics[width=\textwidth]{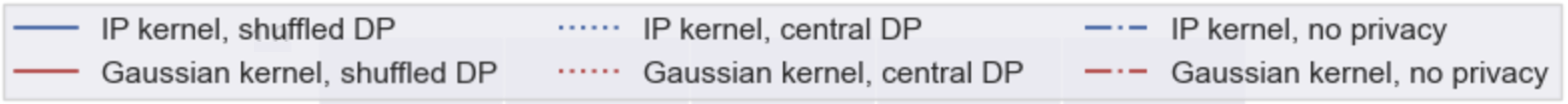}
\end{subfigure} \hspace{0.2in}
\begin{subfigure}[b]{0.28\textwidth}
    \centering
\end{subfigure}
\caption{Classification results with $\varepsilon_{\mathrm{lbl}}=5$}
\label{fig:epslbl5_primary}
\vspace{-0.1 in}
\end{figure}

We evaluate the HDC classification accuracy for each combination of kernel and bitsum protocol, $\{$Gaussian, IP$\}\times\{$RR, 3NB, Pure$\}$, on each the dataset. Figure~\ref{fig:epslbl5_primary} shows results for $\varepsilon_{\mathrm{lbl}}=5$ (solid lines). Results for other values of $\varepsilon_{\mathrm{lbl}}$ are similar and appear in the appendix  (\Cref{fig:epslbl10_primary,fig:epslbl7_primary,fig:epslbl5_again,fig:epslbl3_primary}).

As points of reference, we include the following two baselines for each kernel:
\begin{CompactItemize}
    \item\emph{$(\varepsilon,\delta)$-central DP (dotted lines)}: HDC with the bitsum protocols in Algorithm~\ref{alg:shufdpkde} replaced by the standard Gaussian DP mechanism (see Section A in~\citet{dwork2014algorithmic}). 
    \item\emph{No privacy (dash-dot lines)}: HDC with the bitsum protocols replaced by exact summation. 
\end{CompactItemize}
In the appendix (Figure~\ref{fig:ldp}), we also include an ablation against a local DP baseline.

The results exhibit consistent behavior. Without privacy, the IP kernel outperforms the Gaussian kernel in all settings. This corroborates the known effectiveness of the NCC classifier on neural embeddings (see \citet{papyan2020prevalence}). Moreover, central DP closely matches the corresponding non-DP downstream accuracy already at small values of $\varepsilon$, which corroborates a similar empirical finding reported in \citet{backurs2024efficiently}.

Under shuffled DP, however, this behavior varies in different settings. The Gaussian kernel is more resilient to errors than IP, and thus matches its central DP and non-DP HDC performance at broader parameter regimes. As a result, it often achieves better overall accuracy than IP, even though its baseline (non-DP) accuracy is lower. This phenomenon is more expressed the more error-prone the setting is -- both with lower privacy budgets (lower $\varepsilon$), and when the bitsum protocol is less optimized for accuracy (i.e., RR and Pure vs.~3NB). 
The upshot is that the shuffled DP model, due its more delicate interplay between communication, privacy and accuracy compared to the central DP and non-DP settings, may require different and more error-resilient mechanisms for better downstream performance, particularly under tighter privacy and communication constraints. 

%

\subsection{Communication Cost}\label{sec:communication}
The communication cost of our learning protocol depends on that of the bitsum protocol used within it. Specifically, for both the Gaussian and IP kernels, the communication cost is as follows:
\begin{CompactItemize}
    \item With RR: each user sends exactly $d$ messages.
    \item With 3NB: the number of messages sent by each user is a random variable (different per user), with expectation $(1+o(1))d$. (The $o(1)$ term vanishes as either $n$ or $\varepsilon$ grows.) 
    \item With Pure: the number of messages sent by each user is a random variable (different per user), with expectation $O(d^2\log(n)/\varepsilon_0)$. 
\end{CompactItemize}
With all three protocols, each message is of size $\lceil \log_2(d)\rceil+1$ bits.

Figure~\ref{fig:communication} displays the empirical number of messages on each dataset, for RR (whose communication is constant, as per above) and 3NB (whose communication is a random variable). 
Note while the cost of 3NB is asymptotically near-similar to RR, in practice its cost can be a few times larger, which may be significant in applications.  
Pure sends orders of magnitude more messages (as per above), which may render it impractical in tight communication settings, and cannot fit on the same plots.

\begin{figure}[t]
\centering

\begin{subfigure}[b]{0.21\textwidth}
    \centering
    \includegraphics[width=\textwidth,height=2.5cm]{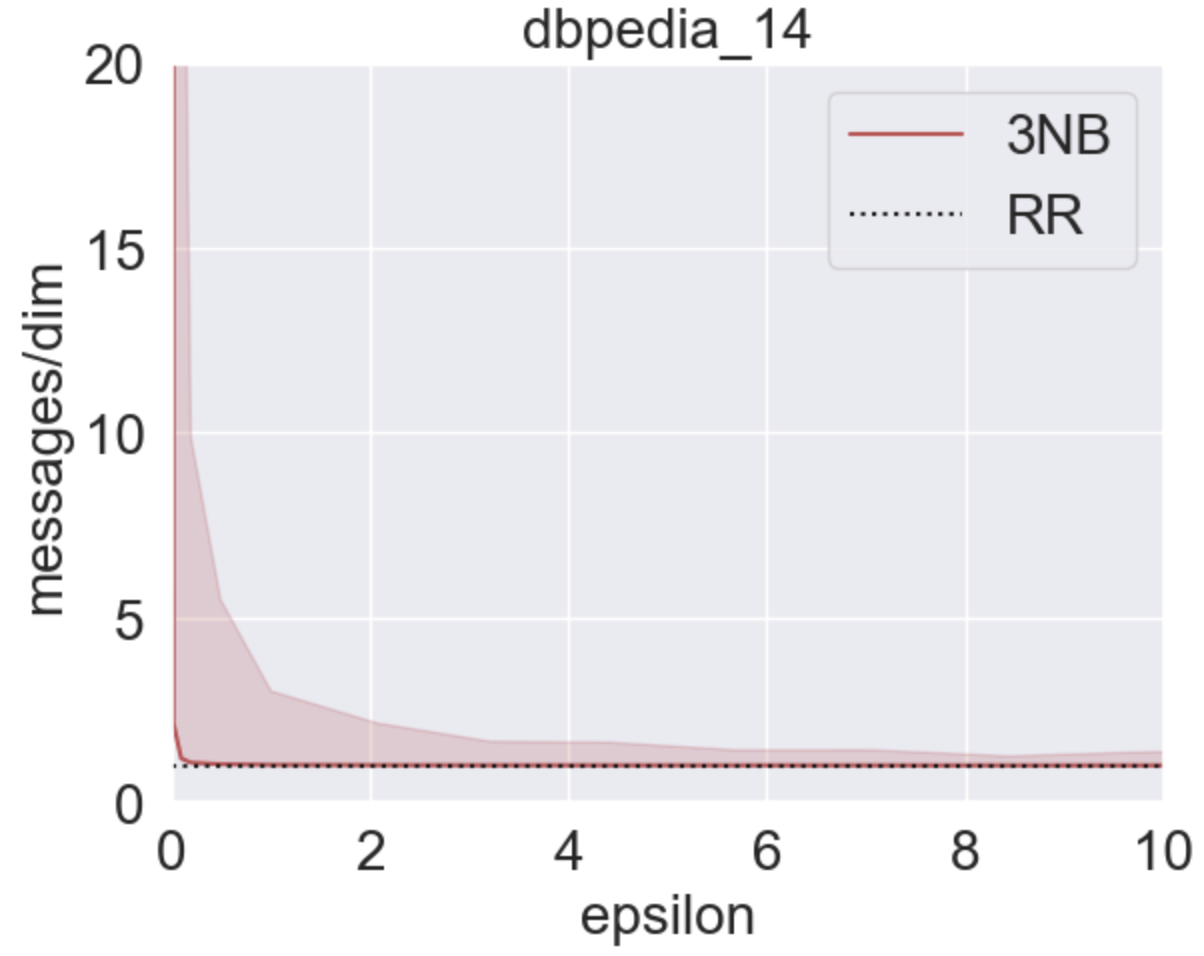}
\end{subfigure} \hspace{0.1in}
\begin{subfigure}[b]{0.21\textwidth}
    \centering
    \includegraphics[width=\textwidth,height=2.5cm]{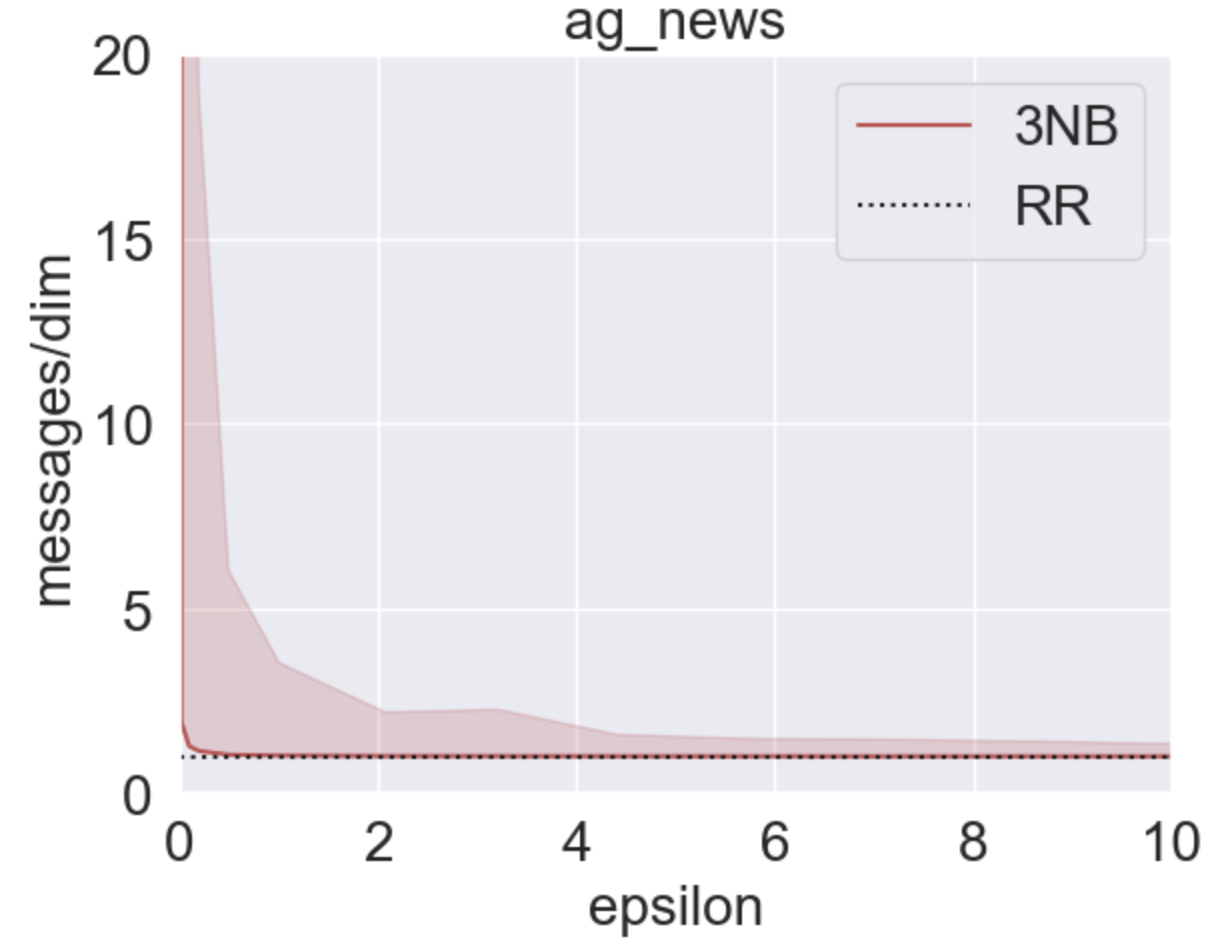}
\end{subfigure} \hspace{0.1in}
\begin{subfigure}[b]{0.21\textwidth}
    \centering
    \includegraphics[width=\textwidth,height=2.5cm]{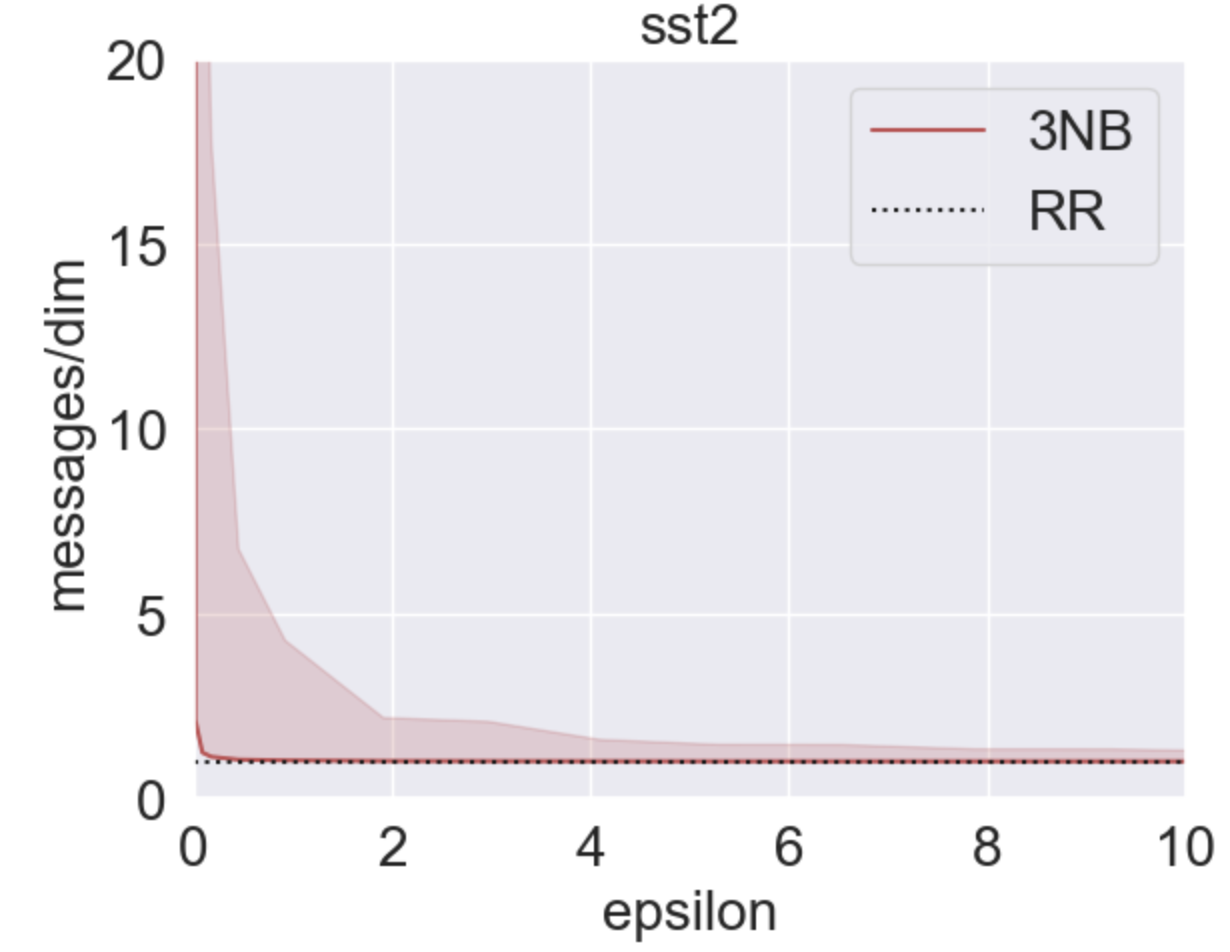}
\end{subfigure} \hspace{0.1in}
\begin{subfigure}[b]{0.21\textwidth}
    \centering
    \includegraphics[width=\textwidth,height=2.5cm]{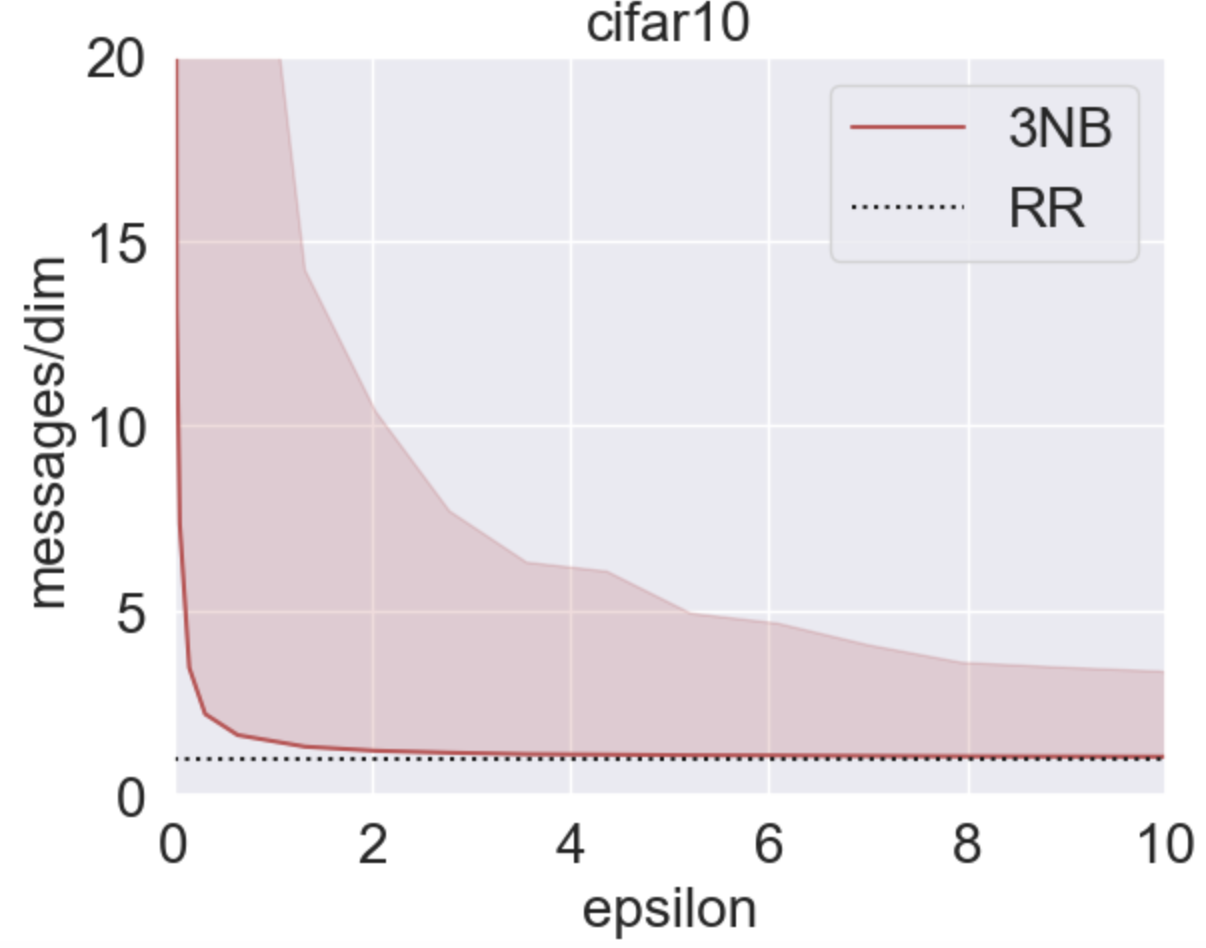}
\end{subfigure}
\caption{Empirical communication}
\label{fig:communication}
\vspace{-0.1 in}
\end{figure}

\subsection{Private KDE results}
We also directly evaluate \Cref{thm:gaussian} for the standalone task of private Gaussian KDE (without subsequent classification). The results are shown in Figure~\ref{fig:direct_kde}, with accuracy measured over 1K random queries from the query set of each dataset. They show that the KDE error generally tracks with the downstream classification accuracy reported above, with 3NB being the most accurate variant with error vanishing nearly as fast as central DP, followed by RR and Pure.

\begin{figure}[t]
\centering

\begin{subfigure}[b]{0.21\textwidth}
    \centering
    \includegraphics[width=\textwidth,height=2.2cm]{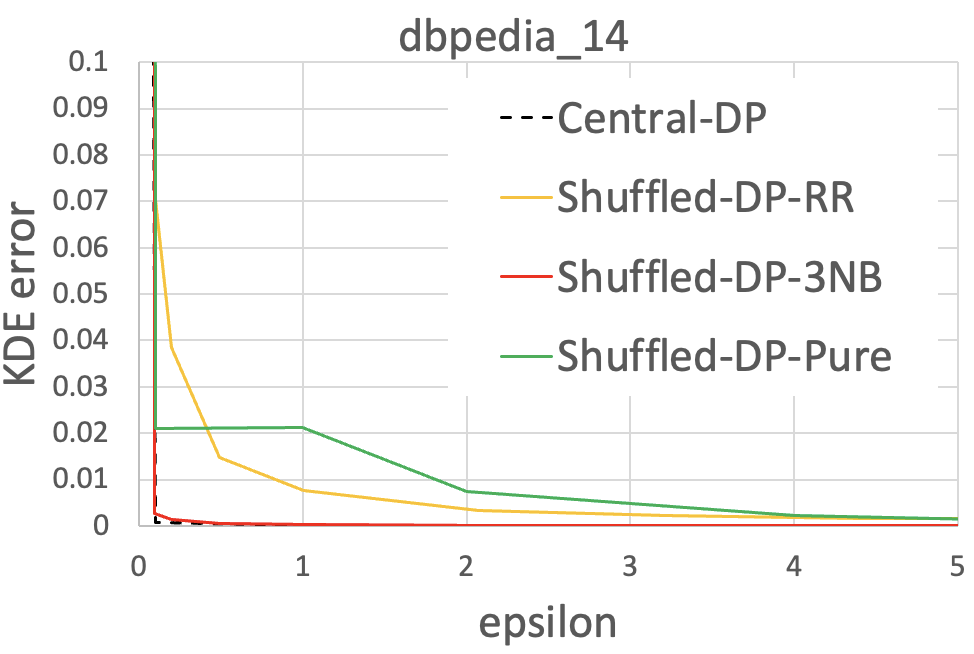}
\end{subfigure} \hspace{0.1in}
\begin{subfigure}[b]{0.21\textwidth}
    \centering
    \includegraphics[width=\textwidth,height=2.2cm]{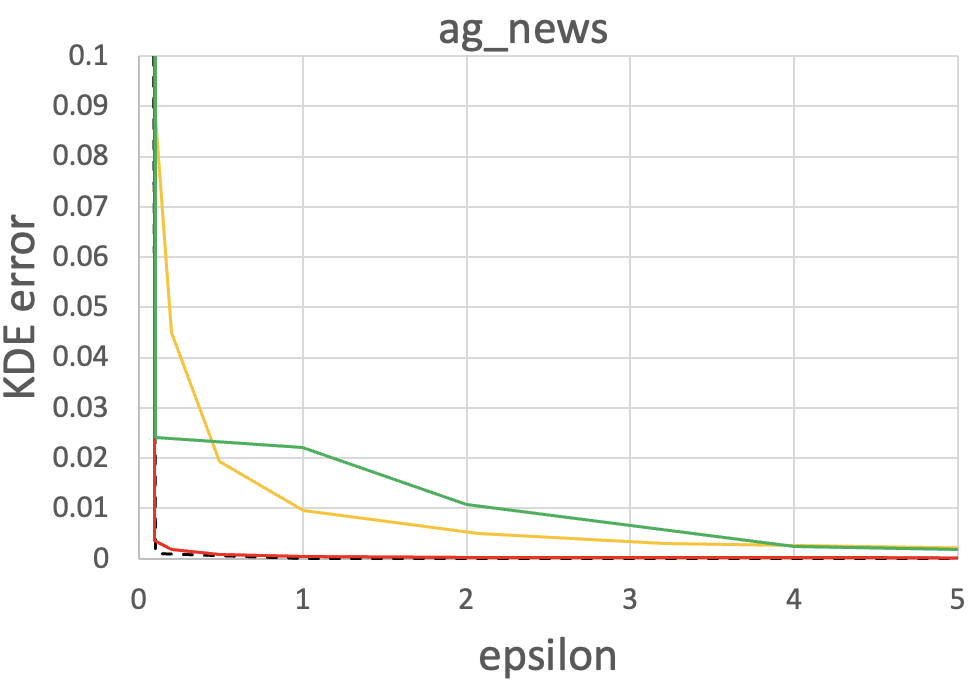}
\end{subfigure} \hspace{0.1in}
\begin{subfigure}[b]{0.21\textwidth}
    \centering
    \includegraphics[width=\textwidth,height=2.2cm]{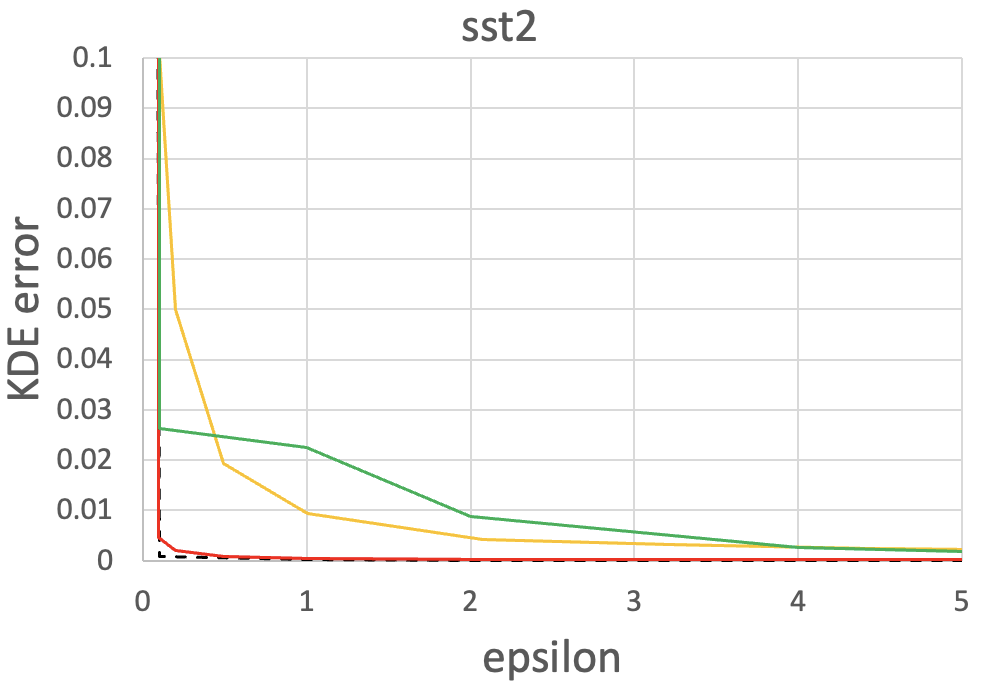}
\end{subfigure} \hspace{0.1in}
\begin{subfigure}[b]{0.21\textwidth}
    \centering
    \includegraphics[width=\textwidth,height=2.2cm]{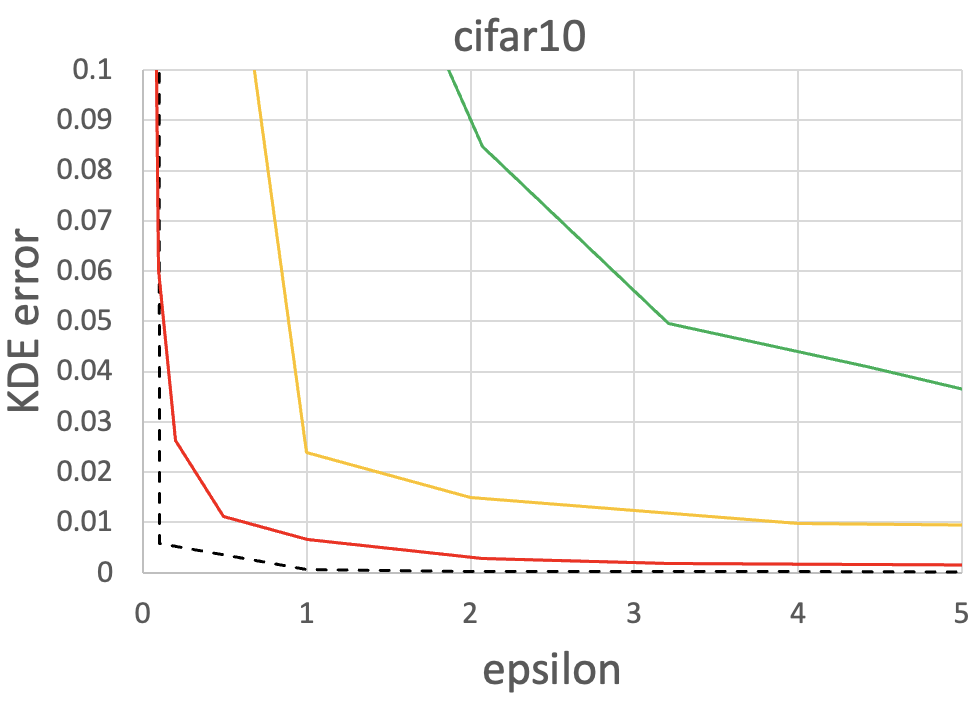}
\end{subfigure}
\caption{Gaussian KDE accuracy}
\label{fig:direct_kde}
\vspace{-0.1 in}
\end{figure}

\subsection{Private Class Decoding Results}
We perform class decoding, as described in \Cref{sec:hdc}, on the three textual datasets. 
As the public vocabulary $V$ we use GloVe 6B \citep{pennington2014glove}, 
consisting of 400K words extracted from public sources. 
Rather than using the embeddings from \citet{pennington2014glove}, we embed the terms in $\R^d$ with the same pre-trained SentenceBERT model used to embed the datasets. Then, for each class $c\in[m]$ of each dataset, we rank all vocabulary terms according to their density as reported by the function $\widetilde K_c(\cdot)$ privately learned by our shuffled KDE protocol for that class, and report the top-3 scoring terms. We repeat this for every combination of kernel and bitsum protocol. 

We make no attempt at quantifying a measure of class decoding performance, since semantic relatedness is inherently somewhat subjective, and may furthermore depend on external knowledge (for example, when the ``artist'' or ``athlete'' classes are decoded into names of specific artists or athletes). 
Rather, our goal in this experiment to is gain qualitative insight into what the shuffled DP KDE protocol succeeds in learning, despite its lack of access to unprotected training examples, and to complement the quantitative classification accuracy results. 

\Cref{tbl:decodingmain} includes decoding results with $\varepsilon\approx3.2$ and $\varepsilon_{\mathrm{lbl}}=5$ (the $\varepsilon$ values are slightly different across the datasets and bitsum protocols, due to the use of different $\delta$ values and composition theorems, as detailed earlier in this section). It only includes some classes from each dataset, due to space limits; results for all classes, and for other values of $\varepsilon$, are in the appendix. 

Qualitatively, in settings where classification accuracy is nontrivial (per Figure~\ref{fig:epslbl5_primary}), the 
class decoding results in \Cref{tbl:decodingmain} also yield vocabulary words that are aligned with the topic of the class. 
This demonstrates that the class representations learned by shuffled DP KDE protocol capture the semantic meaning of the classes, and preserve the ability to rank not only inter-class similarities (as needed for classification), but also intra-class similarities (as needed for decoding a specific class).

\begin{table*}
\caption{Private class decoding results with $\varepsilon\approx3.2$ and $\varepsilon_{\mathrm{lbl}}=5$} \label{tbl:decodingmain}
\begin{centering}
\scriptsize
\begin{tabular}
{p{0.08\linewidth}p{0.06\linewidth}p{0.03\linewidth}p{0.34\linewidth}p{0.34\linewidth}}
\toprule
 \textbf{Dataset} & \textbf{Class} & \textbf{Bitsum} & \textbf{Gaussian KDE class decoding} & \textbf{IP KDE class decoding} \\
\midrule

\multirow{2}{*}{DBPedia-14} & \multirow{3}{0.1\linewidth}{Company} & RR & vendors, gencorp, servicers & firesign, wnews, usos \\
 &  & 3NB & molycorp, newscorp, mediacorp & companys, alicorp, interactivecorp \\
 &  & Pure & ameritech, alicorp, newscorp & alibabacom, oscorp, companies \\
\cmidrule{2-5}
 & \multirow{3}{0.1\linewidth}{Film} & RR & biopic, movie, screenplay & kaptai, kakhi, kaloi \\
 &  & 3NB & filmography, vanya, ghostbusters & movie, filmography, screenplay \\
 &  & Pure & filme, movie, videodrome & movie, film, filmmakers \\
\midrule
\multirow{2}{*}{AG news} & \multirow{3}{0.1\linewidth}{Sports} & RR & vizner, runnerups, dietrichson & ongeri, grandi, zarate \\
 &  & 3NB & injury, semifinalists, finalists & semifinalists, championship, standings \\
 &  & Pure & pensford, rematches, undefeated & chauci, teammates, nith \\
\cmidrule{2-5}
 & \multirow{3}{0.1\linewidth}{Business} & RR & repurchases, downtrend, equitywatch & sneed, timesnews, anxiousness \\
 &  & 3NB & enrononline, investcorp, comcorp & stockholders, nasdaq, marketwatchcom \\
 &  & Pure & corporations, consolidations, consolidated & merger, divestiture, stockholders \\
\midrule
\multirow{2}{*}{SST2} & \multirow{3}{0.1\linewidth}{Negative} & RR & beguile, inception, shallow & manipulating, uncouple, dissects \\
 &  & 3NB & melodrama, rawness, blandness & comedy, tastelessness, uneasiness \\
 &  & Pure & chumminess, meaningfulness, mootness & absurdities, chastisement, absurdity \\
\cmidrule{2-5}
 & \multirow{3}{0.1\linewidth}{Positive} & RR & kindliness, pleasantness, entertaining & enjoyments, academie, amusements \\
 &  & 3NB & salacious, movie, majestic & salaciousness, theatricality, memorability \\
 &  & Pure & spiritedness, spirited, perspicacious & exorcisms, fairytales, revisiting \\

\bottomrule
\end{tabular}
\end{centering}
\end{table*}

\section{Conclusion}
We showed how to use shuffled DP for ``one-shot'' data collection and learning from an undetermined pool of uncommitted end users, in contrast to prior work on ML with shuffled DP, which mostly focused on collaborative distributed training across committed parties over time. Due to the desirable accuracy of shuffled DP, our method is able to learn intricate data semantics while adhering to a distributed notion of privacy. Our experimental results highlight practical downstream considerations related to the delicate interplay between privacy, accuracy and communication cost. 

Future work would explore further ways to deploy shuffled DP in ML pipelines, and extend to more challenging settings, such as privately and continuously monitoring end user data over time. 

\subsubsection*{Acknowledgments}
We thank the anonymous reviewers for helpful feedback. 
Work supported by Len Blavatnik and the Blavatnik Family foundation and by an Alon Scholarship of the Israeli Council for Higher Education. Author is also with Amazon. 
This work is not associated with Amazon.

\bibliography{shufbib}
\bibliographystyle{iclr2025_conference}

\newpage
\appendix

\section{Proofs}\label{sec:proofs}
\subsection{Proof of \Cref{thm:main}}\label{sec:mainproof}

We restate the theorem for convenience:
\begin{theorem}[\Cref{thm:main}, restated]
Let $\kk$ be a $\beta$-approximate $(Q,R,S)$-LSQable kernel (cf.~\Cref{def:lsq}). 
Suppose we have an unbiased $(\varepsilon_0,\delta_0)$-DP bitsum protocol $\Pi$ in the shuffled DP model, with RMSE $\mathcal E_{\Pi}$. 
Then, for every $\delta'>0$ and integer $I>0$, Algorithm~\ref{alg:shufdpkde} is a shuffled DP KDE protocol, which is $(\varepsilon,\delta)$-DP in the communication-threat model, where $\varepsilon=\varepsilon_0S(e^{\varepsilon_0S}-1)I + \varepsilon_0S\sqrt{2I\ln(1/\delta')}$ and $\delta=IS\delta_0+\delta'$, 
with supRMSE $\sqrt{4\beta^2 + I^{-1}\cdot 16R^4S \left(S + (\mathcal E_{\Pi}/n)^2\right)}$. The protocol has optimal bit-width $1$. 
\end{theorem}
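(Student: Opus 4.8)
The plan is to bound the privacy and the utility of Algorithm~\ref{alg:shufdpkde} separately, in both cases leaning on the two defining LSQ properties (the $[-R,R]$-boundedness and the $S$-sparsity of $f_i,g_i$) together with the assumed unbiasedness of the bitsum protocol $\Pi$. For utility I would first pin down the bias. The Bernoulli encoding is rigged so that $\E[(2b_{ij}-1)R\mid f_i,x]=(f_i(x))_j$, and since $\Pi$ is unbiased, $\E[\widetilde B_{ij}]$ equals the true bit-sum, giving $\E[\widetilde F_{ij}]=\sum_{x\in X}(f_i(x))_j$. Hence each repetition's contribution to the query $\widetilde K(y)=\frac{1}{nI}\sum_{i,j}\widetilde F_{ij}(g_i(y))_j$ has conditional mean $\frac1n\sum_{x}f_i(x)^T g_i(y)$, whose expectation over $(f_i,g_i)\sim\mathcal Q$ lies within $\beta$ of $KDE_X(y)$ by \Cref{def:lsq}, uniformly in $y$. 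This settles the $\beta^2$ part of the error.

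For the variance I would apply the law of total variance, conditioning first on the draw $(f_i,g_i)$ and then on the users' bits, which separates three independent noise sources: the bitsum noise, entering as $\Var[\widetilde F_{ij}]=4R^2\Var[\widetilde B_{ij}]$ with $\Var[\widetilde B_{ij}]=\mathcal E_\Pi^2$ from $\Pi$; the Bernoulli sampling, whose per-user variance is at most $1/4$ and sums to at most $n/4$; and the randomness of $(f_i,g_i)$ itself. The decisive observation is that $g_i(y)$ has at most $S$ nonzero coordinates, each bounded by $R$, so $\sum_j (g_i(y))_j^2\le SR^2$ and only $S$ terms survive the sum over $j$, while $|f_i(x)^T g_i(y)|\le SR^2$ bounds the between-sample term. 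Because the $I$ repetitions are i.i.d., the total variance scales as $1/I$; collecting the three sources gives $I^{-1}\cdot 16R^4S\bigl(S+(\mathcal E_\Pi/n)^2\bigr)$, with the crude constants absorbing the lower-order $SR^4/n$ Bernoulli contribution. Crucially, every one of these bounds is independent of $y$, so adding bias$^2$ yields the stated supRMSE over all of $\R^d$.

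For privacy, note that since every message carries its tag $(i,j)$, the shuffler's output decomposes into $IQ$ independent instances of the $(\varepsilon_0,\delta_0)$-DP shuffled bitsum protocol. The key step is a sensitivity argument: replacing one user's point alters the \emph{distribution} of her bit $b_{ij}$ only for coordinates $j$ in the support of $f_i$, which by the $S$-sparsity property number at most $S$ per repetition; all other instances see an identically distributed input and leak nothing. Basic composition over these $S$ affected instances makes each repetition an $(\varepsilon_0 S,\delta_0 S)$-DP mechanism, and because the $I$ repetitions use independent public randomness they combine by advanced composition, giving exactly $\varepsilon=\varepsilon_0S(e^{\varepsilon_0S}-1)I+\varepsilon_0S\sqrt{2I\ln(1/\delta')}$ and $\delta=IS\delta_0+\delta'$ in the communication-threat model (the shuffler's output is precisely what this model exposes). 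The bit-width claim is immediate, since the reduction transmits only the bit-valued messages of $\Pi$.

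I expect the variance bound to be the main obstacle: one must disentangle three interacting noise sources, keep the dependence on $S,R,\mathcal E_\Pi,n,I$ tight, and—most importantly—make every estimate independent of the query $y$ so that the supremum over $\R^d$ is finite. The $S$-sparsity and $R$-boundedness of the LSQ functions are exactly what enables this; the same sparsity is what drives the $S$-factor in the privacy sensitivity, so verifying these structural bounds is the crux of both halves of the proof.
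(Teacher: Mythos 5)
Your proposal is correct and follows essentially the same route as the paper's proof: unbiasedness of the Bernoulli rounding and of $\Pi$ handles the $\beta$-bias, the $S$-sparsity and $R$-boundedness of the LSQ functions give query-independent bounds on the LSQ-sampling and bitsum-noise contributions (each scaling as $1/I$ by independence across repetitions), and privacy follows from basic composition over the at most $S$ affected bitsum instances per repetition followed by advanced composition over the $I$ repetitions. The only cosmetic difference is that you organize the error via an exact bias--variance decomposition with the law of total variance, whereas the paper splits the squared error into a discretized-LSQ term and a bitsum-noise term via $2a^2+2b^2$ and bounds the first by a worst-case bound $|Z_i|\leq\beta+2R^2S$; both land within the stated constants.
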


The proof proceeds in three steps: (i) discretize the LSQ coordinates of $f_i(x)$ locally at each user from $[-R,R]$ to $\{-R,R\}$, using randomized rounding to maintain the LSQ property; (ii) use the bitsum protocol to estimate the sum of each discretized coordinate (with shifting and scaling to turn bitsums into $\pm R$-sums); (iii) use the LSQ property together with the RMSE bound of the given bitsum protocol to bound the total error of any output KDE estimate.

\subsubsection{Discretization}\label{sec:proof_discretization}
We will index the users in the protocol by $u=1,\ldots,n$.
Fix $i\in[I]$. Let $(f_i,g_i)$ be the pair sampled from $\mathcal Q$ in the global initialization step of Algorithm~\ref{alg:shufdpkde}, and recall that $f_i,g_i:\R^d\rightarrow[-R,R]^Q$. 
Consider a user $u\in[n]$ with input $x_u\in\R^d$. 
In the randomizer of Algorithm~\ref{alg:shufdpkde}, for every $j\in[Q]$, the user samples $b_{ij}\sim\mathrm{Bernoulli}((f_i(x_u)_j+R)/2R)$ using private randomness, independently of the other users and of the other coordinates. 
To refer to local samples of different users, in this proof we will denote $b_{ij}$ by $b_{ij}^{(u)}$. 

Define $\bar f_i^{(u)}\in\{-R,R\}^Q$ by letting $\bar f_{ij}^{(u)}=(2b_{ij}^{(u)}-1)R$ for every $j$.

\begin{claim}\label{clm:discretization}
  For every $i\in [I]$, $u\in[n]$ and $y\in\R^d$,
  \[ \left|\E_{(f_i,g_i),\{b_{ij}^{(u)}\}_{j=1}^Q}\left[(\bar f_i^{(u)})^Tg_i(y)\right] - \kk(x,y)\right| \leq \beta, \]
  where the expectation is over both the sampling of $(f_i,g_i)\sim\mathcal Q$ in the global initialization part and the sampling of $\{b_{ij}^{(u)}:j\in[Q]\}$ in the randomizer part of Algorithm~\ref{alg:shufdpkde}. 
\end{claim}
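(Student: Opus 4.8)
The plan is to exploit the fact that the randomized rounding step is \emph{unbiased}, so that discretization does not change anything in expectation, and then to invoke the approximation guarantee of \Cref{def:lsq} essentially verbatim. The expectation in the claim involves two independent sources of randomness: the public draw of $(f_i,g_i)\sim\mathcal Q$ performed once in the global initialization, and the private Bernoulli samples $\{b_{ij}^{(u)}\}_{j=1}^Q$ drawn by user $u$ in the randomizer. Since these are independent, I would evaluate the expectation by the tower property, conditioning first on $(f_i,g_i)$ (so that $g_i(y)$ is a fixed vector) and averaging over the Bernoulli samples, and only afterwards averaging over the draw of $(f_i,g_i)$.

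First I would establish single-coordinate unbiasedness. For a fixed pair $(f_i,g_i)$ and a coordinate $j\in[Q]$, the user sets $b_{ij}^{(u)}\sim\mathrm{Bernoulli}\big((f_i(x_u)_j+R)/(2R)\big)$ and $\bar f_{ij}^{(u)}=(2b_{ij}^{(u)}-1)R$, so a one-line computation gives
\[ \E_{b_{ij}^{(u)}}\big[\bar f_{ij}^{(u)}\big] = \Big(2\cdot\tfrac{f_i(x_u)_j+R}{2R}-1\Big)R = (f_i(x_u))_j. \]
This step is where I would note that the Bernoulli parameter is legitimate, i.e.\ lies in $[0,1]$, precisely because $f_i$ maps into $[-R,R]^Q$; this is exactly what the range of the LSQ family buys us.

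Next, since the samples $\{b_{ij}^{(u)}\}_{j=1}^Q$ are drawn independently across $j$ and $g_i(y)$ is a constant once we condition on $(f_i,g_i)$, linearity of expectation yields
\[ \E_{\{b_{ij}^{(u)}\}_j}\big[(\bar f_i^{(u)})^T g_i(y)\big] = \sum_{j=1}^Q (f_i(x_u))_j\,(g_i(y))_j = f_i(x_u)^T g_i(y). \]
Taking the outer expectation over $(f_i,g_i)\sim\mathcal Q$ and applying the tower property, the whole quantity collapses to $\E_{(f_i,g_i)\sim\mathcal Q}[f_i(x_u)^T g_i(y)]$, which is exactly the object controlled by the first bullet of \Cref{def:lsq}; invoking that bound gives $|\E[f_i(x_u)^T g_i(y)]-\kk(x_u,y)|\le\beta$ and completes the claim. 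I do not expect a genuine obstacle here: the only point requiring care is keeping the two layers of randomness straight and explicitly using their independence, so that the tower property applies and $g_i(y)$ may be treated as fixed inside the inner expectation. I would also remark that the $S$-sparsity bullet of \Cref{def:lsq} plays no role in this claim, and is instead reserved for the later privacy and variance analysis.
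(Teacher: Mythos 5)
Your proposal is correct and follows essentially the same route as the paper's proof: condition on $(f_i,g_i)$, use the unbiasedness of the randomized rounding to collapse the inner expectation to $f_i(x_u)^Tg_i(y)$, then apply the tower property and the first bullet of \Cref{def:lsq}. Your added remarks (that the Bernoulli parameter is valid because $f_i$ maps into $[-R,R]^Q$, and that the sparsity condition is not needed here) are accurate and only make the argument slightly more explicit than the paper's.
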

\begin{proof}
It is immediate to check that $\E_{b_{ij}^{(u)}}[\bar f_{ij}^{(u)} \;\; | \;\; f_i]=f_i(x_u)_j$ for every $j$, hence,
    \begin{align*}
        \E_{(f_i,g_i),\{b_{ij}^{(u)}\}}\left[(\bar f_i^{(u)})^Tg_i(y)\right] &=  \E_{(f_i,g_i)}\left[\E_{\{b_{ij}^{(u)}\}}\left[(\bar f_i^{(u)})^Tg_i(y) \;\; | \;\; (f_i,g_i)\right]\right] \\
        &= \E_{(f_i,g_i)}\left[f_i(x)^Tg_i(y) \right],
    \end{align*}
and the claim follows from the LSQ property (\Cref{def:lsq}).
\end{proof}

\subsubsection{Instances of the Bitsum Protocol}
Fix $(i,j)\in[I]\times[Q]$. 
Let $\bar F_{ij}=\sum_{u=1}^n\bar f_{ij}^{(u)}$ and $B_{ij}=\sum_{u=1}^nb_{ij}^{(u)}$. 
The shuffled DP protocol in Algorithm~\ref{alg:shufdpkde} executes an independent instance of the given shuffled DP bitsum protocol $\Pi$ to estimate $B_{ij}$, and this estimate is denoted by $\widetilde B_{ij}$ in the analyzer in Algorithm~\ref{alg:shufdpkde}. 
We will denote this instance of $\Pi$ by $\Pi_{ij}$. 
Recall that $\Pi$ is an unbiased bitsum protocol and has RMSE $\mathcal E_{\Pi}$. Since $B_{ij}$ itself is a random variable determined by the sampling of $(f_i,g_i)\sim\mathcal Q$ and on the local randomized rounding by the users, conditioning on these, we have 
\[ \E_{\Pi_{ij}}[\widetilde B_{ij} - B_{ij} \;|\; (f_i,g_i),B_{ij}]=0 \;\;\;\; \text{and} \;\;\;\; \E_{\Pi_{ij}}[|\widetilde B_{ij} - B_{ij}|^2 \;|\; (f_i,g_i),B_{ij}]=\mathcal E_{\Pi}^2 . \]

The analyzer computes and publishes $\widetilde F_{ij}=(2\widetilde B_{ij}-n)R$. Considering this as an estimate of $\bar F_{ij}$, we denote
\[ E_{ij} := \widetilde F_{ij} - \bar F_{ij} . \]

\begin{claim}\label{clm:errorij}
$\{E_{ij} \;|\; (f_i,g_i),B_{ij}\}_{i,j}$ are independent random variables, and each satisfies
\[\E_{\Pi_{ij}}\left[E_{ij}\;|\; (f_i,g_i),B_{ij}\right]=0 \;\;\;\; \text{and} \;\;\;\;
   \E_{\Pi_{ij}}\left[\left|E_{ij}\right|^2\;|\; (f_i,g_i),B_{ij}\right]=(2R\cdot \mathcal E_{\Pi})^2 .
\]
\end{claim}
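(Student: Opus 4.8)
The plan is to reduce everything to the stated unbiasedness and RMSE guarantee of the bitsum protocol $\Pi$ via a single algebraic identity, and then read off independence from the construction. The crucial observation is that the shifting-and-scaling step in the analyzer of Algorithm~\ref{alg:shufdpkde} turns the bitsum error directly into the $\pm R$-sum error, up to a fixed multiplicative factor of $2R$.

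\textbf{Key steps.} First I would compute $\bar F_{ij}$ explicitly in terms of $B_{ij}$. By definition $\bar f_{ij}^{(u)} = (2b_{ij}^{(u)}-1)R$, so summing over users gives
\[
\bar F_{ij} = \sum_{u=1}^n \bar f_{ij}^{(u)} = \left(2\sum_{u=1}^n b_{ij}^{(u)} - n\right)R = (2B_{ij}-n)R .
\]
Since the analyzer sets $\widetilde F_{ij} = (2\widetilde B_{ij}-n)R$, the $-nR$ shifts cancel and we obtain the clean identity
\[
E_{ij} = \widetilde F_{ij} - \bar F_{ij} = 2R\,(\widetilde B_{ij} - B_{ij}).
\]
With this in hand, the first two assertions follow immediately from the bitsum guarantees recalled just before the claim: conditioning on $(f_i,g_i)$ and $B_{ij}$, we have $\E_{\Pi_{ij}}[\widetilde B_{ij}-B_{ij}]=0$ and $\E_{\Pi_{ij}}[|\widetilde B_{ij}-B_{ij}|^2]=\mathcal E_\Pi^2$, so by linearity and homogeneity of the conditional expectation, $\E_{\Pi_{ij}}[E_{ij}\mid\cdot]=0$ and $\E_{\Pi_{ij}}[|E_{ij}|^2\mid\cdot]=(2R)^2\mathcal E_\Pi^2=(2R\cdot\mathcal E_\Pi)^2$, as required.

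\textbf{Independence.} For the independence claim I would argue from the construction of Algorithm~\ref{alg:shufdpkde}: each $\Pi_{ij}$ is an \emph{independent} instance of the bitsum protocol, drawing its own internal randomness separately from the other instances. Once we condition on $(f_i,g_i)$ and $B_{ij}$ for all $i,j$, the inputs to every instance $\Pi_{ij}$ are fixed, so each $E_{ij}=2R(\widetilde B_{ij}-B_{ij})$ is a (deterministic, fixed-scale) function of the randomness of $\Pi_{ij}$ alone. Mutually independent randomness sources feeding into these disjoint functions yield mutually independent outputs, which gives the independence of $\{E_{ij}\mid(f_i,g_i),B_{ij}\}_{i,j}$.

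\textbf{Main obstacle.} There is no serious difficulty here; the claim is essentially bookkeeping. The only point requiring care is the conditioning: the randomness splits into three layers --- the public draws $(f_i,g_i)\sim\mathcal Q$, the users' local randomized rounding (which determines $B_{ij}$), and the internal randomness of the protocol instances $\Pi_{ij}$ --- and one must condition away the first two so that only the mutually independent third layer remains. Being explicit that the identity $E_{ij}=2R(\widetilde B_{ij}-B_{ij})$ isolates exactly the $\Pi_{ij}$-randomness is what makes both the moment computations and the independence claim clean.
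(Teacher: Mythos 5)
Your proposal is correct and follows essentially the same route as the paper: derive $\bar F_{ij}=(2B_{ij}-n)R$ from the definition of $\bar f_{ij}^{(u)}$, cancel the shift against $\widetilde F_{ij}=(2\widetilde B_{ij}-n)R$ to get $E_{ij}=2R(\widetilde B_{ij}-B_{ij})$, and then read off both conditional moments from the unbiasedness and RMSE guarantee of $\Pi$, with independence coming from the independent randomness of the instances $\Pi_{ij}$. Your explicit remark about conditioning away the first two layers of randomness so that only the $\Pi_{ij}$-randomness remains is a slightly more careful articulation of what the paper leaves implicit, but it is the same argument.
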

\begin{proof}
Probabilistic independence holds since we use independent randomness in the instance $\Pi_{ij}$ of $\Pi$ for different pairs $i,j$. 
For the first and second moments, recall that $\bar f_{ij}^{(u)}=(2b_{ij}^{(u)}-1)R$ for every user $u$, which implies $\bar F_{ij}=(2B_{ij}-n)R$ when summing over the users. Also recall from above that $\widetilde F_{ij}=(2\widetilde B_{ij}-n)R$. 
Thus,
\begin{align*}
  \E_{\Pi_{ij}}
\left[E_{ij}\;|\; (f_i,g_i),B_{ij}\right] &=  \E_{\Pi_{ij}}
\left[\widetilde F_{ij} - \bar F_{ij}\;|\; (f_i,g_i),B_{ij}\right] \\
  &= \E_{\Pi_{ij}}
\left[(2\widetilde B_{ij}-n)R - (2B_{ij}-n)R\;|\; (f_i,g_i),B_{ij}\right] \\
  &= 2R\cdot \E_{\Pi_{ij}}
\left[\widetilde B_{ij}-B_{ij}\;|\; (f_i,g_i),B_{ij}\right] \\
  &= 0,
\end{align*}
and
\begin{align*}
    \E_{\Pi_{ij}}
\left[\left|E_{ij}\right|^2\;|\; (f_i,g_i),B_{ij}\right] &= \E_{\Pi_{ij}}
\left[\left|\widetilde F_{ij} - \bar F_{ij}\right|^2\;|\; (f_i,g_i),B_{ij}\right] \\
    &= \E_{\Pi_{ij}}
\left[\left|(2\widetilde B_{ij}-n)R - (2B_{ij}-n)R\right|^2\;|\; (f_i,g_i),B_{ij}\right] \\
    &= (2R)^2\cdot \E_{\Pi_{ij}}
\left[\left|\widetilde B_{ij}-B_{ij}\right|^2\;|\; (f_i,g_i),B_{ij}\right] \\
    &= (2R\cdot \mathcal E_{\Pi})^2. 
\end{align*}
\end{proof}

\subsubsection{Bounding the supRMSE}
To bound the supRMSE of Algorithm~\ref{alg:shufdpkde}, fix $y\in\R^d$. The KDE query part of the protocol uses the analyzer's published output to estimate $KDE_X(y)$ by $\frac{1}{nI}\sum_{i=1}^I\sum_{j=1}^Q\widetilde F_{ij}g_i(y)_j$. We now bound the RMSE of this estimate. Substituting $E_{ij} := \widetilde F_{ij} - \bar F_{ij}$, we have
\begin{align*}
    & \E\left[\left| KDE_X(y) - \frac{1}{nI}\sum_{i=1}^I\sum_{j=1}^Q\widetilde F_{ij}g_i(y)_j \right|^2\right] \\
    & = \E\left[\left| KDE_X(y) - \frac{1}{nI}\sum_{i=1}^I\sum_{j=1}^Q\bar F_{ij}g_i(y)_j + \frac{1}{nI}\sum_{i=1}^I\sum_{j=1}^Q E_{ij}g_i(y)_j\right|^2\right] \\
    &\leq 2\E\left[\left| KDE_X(y) - \frac{1}{nI}\sum_{i=1}^I\sum_{j=1}^Q\bar F_{ij}g_i(y)_j\right|^2\right] + 2\E\left[\left|\frac{1}{nI}\sum_{i=1}^I\sum_{j=1}^Q E_{ij}g_i(y)_j\right|^2\right] . \numberthis \label{eq:twosummands}
\end{align*}

We handle the two summands in turn. 

\subsubsection{First Summand: Discretized LSQ Approximation Error}
For every $i$, let $\bar F_i\in\R^Q$ denote the vector with coordinates $\bar F_{ij}$. Recalling that $\bar F_{ij}=\sum_{u=1}^n\bar f_{ij}^{(u)}$, we have $\bar F_i=\sum_{u=1}^n\bar f_i^{(u)}$. 
We can thus write,
\begin{align*}
    \E\left[\left| KDE_X(y) - \frac{1}{nI}\sum_{i=1}^I\sum_{j=1}^Q\bar F_{ij}g_i(y)_j\right|^2\right] &= \E\left[\left| KDE_X(y) - \frac{1}{nI}\sum_{i=1}^I\bar F_i^Tg_i(y)\right|^2\right] \\
    &= \frac1{I^2}\E\left[\left| \sum_{i=1}^I \left(KDE_X(y) - \frac{1}{n}\bar F_i^Tg_i(y)\right)\right|^2\right] . \numberthis \label{eq:fullz}
\end{align*}
Denote the random variables,
\[ Z_i := KDE_X(y) - \frac{1}{n}\bar F_i^Tg_i(y) , \]
and for every $u\in[n]$,
\[ Z_{i,u} := \kk(x_u,y) - (\bar f_i^{(u)})^Tg_i(y) . \]
Observe that $Z_i=\frac1n\sum_{u=1}^nZ_{i,u}$, and that the rightmost side of \Cref{eq:fullz} is $\frac1{I^2}\E[|\sum_{i=1}^IZ_i|^2]$. Due to the probabilistic independence of samples for different values $i\in[I]$, we can expand this as
\begin{align*}
\E\left[\left|\sum_{i=1}^IZ_i\right|^2\right] &= \left|\sum_{i=1}^I\E[Z_i^2] + \sum_{i=1}^I\sum_{i'\neq i}\E[Z_i]\cdot\E[Z_{i'}]\right| \\
&\leq \sum_{i=1}^I\E[Z_i^2] + \sum_{i=1}^I\sum_{i'\neq i}|\E[Z_i]|\cdot|\E[Z_{i'}]|. \numberthis \label{eq:zexpand}
\end{align*}
\begin{claim}\label{clm:zbounds}
    For every $i$ we have $|\E[Z_i]|\leq\beta$ and $\E[Z_i^2]\leq(\beta + 2R^2S)^2$. 
\end{claim}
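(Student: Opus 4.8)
The plan is to prove the two bounds separately, exploiting the decomposition $Z_i=\frac1n\sum_{u=1}^n Z_{i,u}$ recorded just before the claim. For the first moment, I would apply linearity of expectation to write $\E[Z_i]=\frac1n\sum_{u=1}^n\E[Z_{i,u}]$, where each $\E[Z_{i,u}]=\kk(x_u,y)-\E[(\bar f_i^{(u)})^Tg_i(y)]$. \Cref{clm:discretization} already furnishes $|\E[(\bar f_i^{(u)})^Tg_i(y)]-\kk(x_u,y)|\leq\beta$ for each user $u$, so $|\E[Z_{i,u}]|\leq\beta$, and the triangle inequality over the $n$ users gives $|\E[Z_i]|\leq\frac1n\cdot n\beta=\beta$.

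For the second moment, I would establish the almost-sure bound $|Z_i|\leq\beta+2R^2S$, which immediately yields $\E[Z_i^2]\leq(\beta+2R^2S)^2$. Since $|Z_i|\leq\frac1n\sum_u|Z_{i,u}|$, it suffices to bound each $|Z_{i,u}|=|\kk(x_u,y)-(\bar f_i^{(u)})^Tg_i(y)|$ by $\beta+2R^2S$, which I would do termwise. For the discretized inner product, $g_i(y)$ has at most $S$ nonzero coordinates and $|\bar f_{ij}^{(u)}|=R$, $|g_i(y)_j|\leq R$, so $|(\bar f_i^{(u)})^Tg_i(y)|\leq SR^2$. For the kernel value, the LSQ property (\Cref{def:lsq}) gives $|\kk(x_u,y)|\leq\beta+|\E_{(f,g)\sim\mathcal Q}[f(x_u)^Tg(y)]|\leq\beta+SR^2$, where the second inequality uses that $|f(x_u)^Tg(y)|\leq SR^2$ holds almost surely by the same sparsity argument. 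Adding the two gives $|Z_{i,u}|\leq\beta+2R^2S$.

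The one subtlety to watch for is that after randomized rounding $\bar f_i^{(u)}\in\{-R,R\}^Q$ is a \emph{dense} vector (every coordinate is $\pm R$, including those where $f_i(x_u)$ vanished), so its own sparsity is unavailable. The bound $|(\bar f_i^{(u)})^Tg_i(y)|\leq SR^2$ must therefore be derived from the sparsity of $g_i(y)$, which caps the number of nonzero summands in the inner product at $S$. This is the only delicate point; the remainder is linearity of expectation and the triangle inequality.
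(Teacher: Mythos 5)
Your proof is correct and follows essentially the same route as the paper's: linearity of expectation combined with \Cref{clm:discretization} for the first moment, and the almost-sure bound $|Z_i|\le\beta+2R^2S$ obtained by bounding $|(\bar f_i^{(u)})^Tg_i(y)|$ and $|\kk(x_u,y)|$ separately by $R^2S$ and $R^2S+\beta$ for the second. Your observation that the sparsity must be supplied by $g_i(y)$ rather than by the rounded vector $\bar f_i^{(u)}$ (which is dense after randomized rounding) is the correct justification for a step the paper states somewhat loosely.
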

\begin{proof}
    For the first bound in the claim, observe that \Cref{clm:discretization} can be rewritten as $|\E[Z_{i,u}]|\leq\beta$ for every $i,u$. Since $Z_i=\frac1n\sum_{u=1}^nZ_{i,u}$, we get $|\E[Z_i]| \leq \frac1n\sum_{u=1}^n\E|Z_{i,u}| \leq \beta$.

    For the second bound in the claim, recall that by \Cref{def:lsq}, for every supported function pair $(f,g)$ in the LSQ family $\mathcal Q$, and every $x,y\in\R^d$, we have that $f(x)$ and $g(y)$ have coordinates in $[-R,R]$, and have at most $S$ non-zero coordinates each. Thus, $|f(x)^Tg(y)|\leq R^2S$. By recalling that $\bar f_i^{(u)}$ was generated from $f_i(x_u)$ by rounding its coordinates to $\{-R,R\}$, this implies in particular that $|(\bar f_i^{(u)})^Tg_i(y)|\leq  R^2S$ for every $u$. Moreover, since by \Cref{def:lsq} we have $|\kk(x,y)-\E_{(f,g)\sim\mathcal Q}[f(x)^Tg(y)]|\leq\beta$, this also implies that $|\kk(x,y)|\leq R^2S+\beta$. Therefore, unconditionally, 
    \begin{align*}
        |Z_i| &\leq \frac1n\sum_{u=1}^n|Z_{i,u}| \\
        &= \sum_{u=1}^n\left|\kk(x_u,y) - (\bar f_i^{(u)})^Tg_i(y)\right| \\
        &\leq \frac1n\sum_{u=1}^n\left(\left|\kk(x_u,y)\right| + \left|(\bar f_i^{(u)})^Tg_i(y)\right|\right) \\
        &\leq \beta + 2R^2S ,
    \end{align*}
    which implies in particular $\E[Z_i^2]\leq(\beta + 2R^2S)^2$.
\end{proof}

We now have,
\begin{align*}
    &\E\left[\left| KDE_X(y) - \frac{1}{nI}\sum_{i=1}^I\sum_{j=1}^Q\bar F_{ij}g_i(y)_j\right|^2\right] & \\
    \;\;\;\;\;\;\;\;\;\;\;\; &= \frac1{I^2}\E\left[\left| \sum_{i=1}^I \left(KDE_X(y) - \frac{1}{n}\bar F_i^Tg_i(y)\right)\right|^2\right] & \text{\Cref{eq:fullz}} \\
    &= \frac1{I^2}\E\left[\left|\sum_{i=1}^IZ_i\right|^2\right] & \text{definition of $Z_i$} \\
    &\leq \frac1{I^2}\left(\sum_{i=1}^I\E[Z_i^2] + \sum_{i=1}^I\sum_{i'\neq i}|\E[Z_i]|\cdot|\E[Z_{i'}]|.\right) & \text{\Cref{eq:zexpand}} \\
    &\leq
     \frac1{I^2} \left( I  (\beta + 2R^2S)^2 + I(I-1)\beta^2\right) & \text{\Cref{clm:zbounds}} \\
     &\leq \frac{8R^4S^2}{I} + 2\beta^2 . &
\end{align*}

This is our bound for the first summand in \Cref{eq:twosummands}.

\subsubsection{Second Summand: Total Bitsum Protocol Error}\label{sec:secondsummand}

For $i\in[I]$, let $Y_i$ be the random variable
\[ Y_i=\sum_{j=1}^QE_{ij}g_i(y)_j . \]
Note that the second summand in \Cref{eq:twosummands} equals $2(\frac1{nI})^2\E[(\sum_{i=1}^IY_i)^2]$. 

By \Cref{clm:errorij}, $\{E_{ij} \;|\; (f_i,g_i),B_{ij}\}_{i,j}$ are independent random variables. Each $Y_i$, when conditioned on $(f_i,g_i),\{B_{ij}\}_{j\in[Q]}$, is a linear combination of a subset of these random variables and the subsets are disjoint for $i\neq i'$, hence $\{Y_i \;|\; (f_i,g_i),\{B_{ij}\}_{j\in[Q]}\}_{i\in[I]}$ are also independent random variables. Furthermore, for every $i\in[I]$ we have
\begin{align*}
    \E_{\{\Pi_{ij}\}_{j\in[Q]}}\left[Y_i\;|\; (f_i,g_i),\{B_{ij}\}_{j\in[Q]}\right] &= \E_{\{\Pi_{ij}\}_{j\in[Q]}}\left[\sum_{j=1}^QE_{ij}g_i(y)_j\;|\; (f_i,g_i),\{B_{ij}\}_{j\in[Q]}\right] \\
    &= \sum_{j=1}^Qg_i(y)_j\E_{\Pi_{ij}}\left[E_{ij}\;|\; (f_i,g_i),B_{ij}\right] \\
    &= 0 ,
\end{align*}
and
\begin{align*}
    \E_{\{\Pi_{ij}\}_{j\in[Q]}}\left[Y_i^2\;|\; (f_i,g_i),\{B_{ij}\}_{j\in[Q]}\right] &= \E_{\{\Pi_{ij}\}_{j\in[Q]}}\left[\left(\sum_{j=1}^QE_{ij}g_i(y)_j\right)^2\;|\; (f_i,g_i),\{B_{ij}\}_{j\in[Q]}\right] \\
    &= \sum_{j=1}^Q(g_i(y)_j)^2\E_{\Pi_{ij}}\left[E_{ij}^2\;|\; (f_i,g_i),B_{ij}\right] \\
    &= \sum_{j=1}^Q(g_i(y)_j)^2(2R\mathcal E_\Pi)^2 ,
\end{align*}
having used $\E_{\Pi_{ij}}\left[E_{ij}\;|\; (f_i,g_i),B_{ij}\right]=0$ and $\E_{\Pi_{ij}}\left[\left|E_{ij}\right|^2\;|\; (f_i,g_i),B_{ij}\right]=(2R\cdot \mathcal E_{\Pi})^2$ from \Cref{clm:errorij}. 
Since by \Cref{def:lsq} $g_i(y)$ has at most $S$ non-zero entries and each is bounded in absolute value by $R$,
\[
  \E_{\{\Pi_{ij}\}_{j\in[Q]}}\left[|Y_i|^2\;|\; (f_i,g_i),\{B_{ij}\}_{j\in[Q]}\right] \leq 4SR^4\cdot \mathcal E_\Pi^2 .
\]
Therefore,
\begin{align*}
    & \E_{\{\Pi_{ij}\}_{i\in[I],j\in[Q]}}\left[\left(\sum_{i=1}^IY_i\right)^2 \;\; | \;\; \{(f_i,g_i),B_{ij}\}_{i\in[I],j\in[Q]} \right] \\
    &= \sum_{i=1}^I\E_{\{\Pi_{ij}\}_{j\in[Q]}}\left[Y_i^2 \;\; | \;\; (f_i,g_i),\{B_{ij}\}_{j\in[Q]} \right] \\
    &\leq I \cdot 4SR^4\cdot \mathcal E_\Pi^2 .
\end{align*}

Now we can bound the second summand in \Cref{eq:twosummands} as
\begin{align*}
& \frac{2}{n^2I^2}\E_{\{(f_i,g_i),B_{ij},\Pi_{ij}\}_{i\in[I],j\in[Q]}}\left[\left(\sum_{i=1}^IY_i\right)^2 \right] \\
 &= \frac{2}{n^2I^2}\E_{\{(f_i,g_i),B_{ij}\}_{i\in[I],j\in[Q]}}\left[\E_{\{\Pi_{ij}\}_{i\in[I],j\in[Q]}}\left[\left(\sum_{i=1}^IY_i\right)^2 \;\; | \;\; \{(f_i,g_i),B_{ij}\}_{i\in[I],j\in[Q]} \right]\right] \\
 &\leq \frac{8SR^4\mathcal E_{\Pi}^2}{n^2I} .
\end{align*}

\subsubsection{Finishing the Proof of \Cref{thm:main}}\label{sec:finishing}
\textbf{Accuracy:} 
By putting together the bounds on both summands in \Cref{eq:twosummands}, we get that the RMSE of estimating $KDE_X(y)$ is at most $\sqrt{4\beta^2 + \frac{16R^4S}{I}\left(S + \frac{\mathcal E_{\Pi}^2}{n^2}\right)}$. Since this holds for every $y\in\R^d$, this is a bound on the supRMSE.

\textbf{Privacy:} for every $i\in[I]$ and $j\in[Q]$, let $O_{ij}$ denote the output of the shuffler in protocol instance $\Pi_{ij}$. The fact that $\Pi_{ij}$ is an instance of the $(\varepsilon_0,\delta_0)$-DP protocol $\Pi$ means (by the definition of the shuffled DP model) that $O_{ij}$ is $(\varepsilon_0,\delta_0)$-DP w.r.t.~the collection of user inputs. 

First, fix $i\in[I]$. Recall the sparsity property of LSQ (\Cref{def:lsq}), namely that each $f_i$ has at most $S$ non-zero entries per user. This means that if the input of one user is omitted from the dataset, the inputs of at most $S$ of the $Q$ protocols $\{\Pi_{ij}\}_{j=1}^Q$ are changed. Since these protocol instances use independent randomness, then by standard (``basic'') DP composition arguments \citep{dwork2014algorithmic}, the collection $\{O_{ij}\}_{j=1}^Q$ is $(\varepsilon_0S,\delta_0S)$-DP. In other words, the protocol $\Psi_i$ obtained by composing the protocols $\{\Pi_{ij}\}_{j=1}^Q$ is $(\varepsilon_0S,\delta_0S)$-DP in the shuffled model (see \citet{cheu2019distributed} for the definition of protocol composition in the shuffled DP model). 

Now, by ``advanced'' composition for shuffled DP protocols (Lemma 3.6 in \citet{cheu2019distributed}) over the $I$ protocols $\{\Psi_i\}$, we get that the collection of shuffler outputs $\{O_{ij}:(i,j)\in[I]\times[Q]\}$ is $(\epsilon,\delta)$-DP, with $\epsilon,\delta$ as stated in \Cref{thm:main}. Since the analyzer in Algorithm~\ref{alg:shufdpkde} is a post-processing of these shuffler outputs, the protocol in Algorithm~\ref{alg:shufdpkde} is $(\epsilon,\delta)$-DP in the shuffled model.

\textbf{Efficiency:}
The computational parameters of Algorithm~\ref{alg:shufdpkde} is straightforward to calculate from those of the bitsum protocol $\Pi$ and the LSQ family $\mathcal Q$: the global initialization samples $I$ pairs $(f_i,g_i)$ from $\mathcal Q$; each user evaluates $f_i(x)$ on her input $x$ for every $i$; the users, the shuffler and the analyzer perform $IQ$ instances of $\Pi$, thus incurring $IQ$ times its computational and communication cost; the the final KDE evaluation part evaluates $g_i$ on the query $y$ for every $i\in[I]$. \qed

\subsubsection{Variants of \Cref{thm:main}}\label{sec:variants}
The foregoing proof of \Cref{thm:main} can be adapted in various ways to accommodate bitsum protocols with different properties than those stated in the theorem. For example,
\begin{CompactItemize}
    \item If the accuracy guarantee of $\Pi$ is given in terms of absolute error rather than RMSE (as in \citet{cheu2019distributed}), the proof can be repeated with bounding the supremum absolute error of Algorithm~\ref{alg:shufdpkde} instead of its supRMSE (this yields a very similar and somewhat simpler version of the proof given above).
    \item If $\Pi$ has a pure DP guarantee, the advanced composition step in the privacy analysis from \Cref{sec:finishing} can be replaced by standard pure composition, resulting in $\varepsilon=IS\varepsilon_0$ (compare this to $\varepsilon\sim\sqrt{I}S\varepsilon_0$ in \Cref{thm:main}) and $\delta=0$. In the analogous instantiation of \Cref{thm:gaussian}, the lower bound on the error $\alpha$ changes from $\sqrt{\log(1/\delta)}/(\varepsilon n)$ to $1/\sqrt{\varepsilon n}$. 
    \item If $\Pi$ is not unbiased, the proof (specifically \Cref{sec:secondsummand}) can be slightly modified to accommodate its bias, resulting in a corresponding term in the final supRMSE bound.
\end{CompactItemize}

\subsection{Proof of \Cref{thm:gaussian}}\label{sec:fullgaussian}
\begin{algorithm}[t]
 \caption{Shuffled DP Gaussian KDE protocol, based on either RR or 3NB bitsum protocol} 
\label{alg:shufdpkdegaussian}
 \begin{multicols}{2}
 \DontPrintSemicolon
  \myinit{\textit{$\;\;$// all data here is public}}
  {
    \KwInput{integer $I>0$; \\
    parameters for RR bitsum: $p_{RR}\in(0,1)$; \\
    parameters for 3NB bitsum: $r,r',p,p'>0$}
    \vspace{5pt}
    \For{$i=1,\ldots,I$}{
      \textit{// i.i.d.~samples using shared/public randomness:}\;
      $\omega_i\sim N(0,\mathbb I_d)\;\;\;\;$ \textit{// $d$-dim normal r.v.}\;
      $\beta_i\sim\mathrm{Uniform}[0,2\pi)$\;
    }
    \KwPublish{$\omega_i$ and $\beta_i$ for all $i$}
  }
  \vspace{5pt}
  \myrandomizer{\textit{$\;\;$// each user runs this locally with private randomness}}{
    \KwInput{private data point $x\in\R^d$}
     \For{$i=1,\ldots,I$}{
       $\varphi_i\leftarrow\cos(\sqrt2\omega_i^Tx + \beta_i)$\;
       $b_i\sim\mathrm{Bernoulli}((1+\varphi_i)/2)$\;
       \If{bitsum protocol is RR}{
         $b_i\leftarrow$ flip with probability $p_{RR}$\;
         send $(b_i,i)$ to the shuffler\;
       }
       \If{bitsum protocol is 3NB}{
         $\psi_1\sim \mathrm{NegativeBinomial}(r,p)$\;
         $\psi_2\sim \mathrm{NegativeBinomial}(r,p)$\;
         $\psi_3\sim \mathrm{NegativeBinomial}(r',p')$\;
         \For{$j=1,\ldots,b_i+\psi_1+\psi_3$}{
           send $(1,i)$ to the shuffler\;
         }
         \For{$j=1,\ldots,\psi_2+\psi_3$}{
           send $(-1,i)$ to the shuffler\;
         }
       }
    }
  }
  \myanalyzer{\textit{$\;\;$// runs after the shuffler; analyzer is the same for both RR and 3NB bitsum protocols}}{
    \KwInput{shuffled sequence of messages $\widetilde\Gamma$ from $n$ users}
    \For{$i=1,\ldots,I$}{
        $\widetilde B_i\leftarrow0$\;
    }
    \For{message $(\gamma,i)$ in $\widetilde\Gamma$}{
        $\widetilde B_i\leftarrow\widetilde B_i + \gamma$\;
    }
    \For{$i=1,\ldots,I$}{
        $\widetilde F_i\leftarrow 2\widetilde B_i-n$\;
    }
    \KwPublish{$\widetilde F_i$ for all $i$}
  }
  \vspace{5pt}
  \myquery{\textit{$\;\;$// runs on the analyzer's published output arbitrarily many times}}{
    \KwInput{query point $y\in\R^d$}
    \KwReturn{$\frac{2}{nI}\sum_{i=1}^I\widetilde F_i\cdot\cos(\sqrt2\omega_i^Ty + \beta_i)$}
  }
  \end{multicols}
\end{algorithm}

We restate \Cref{thm:gaussian} and prove it as a corollary of \Cref{thm:main}.
The corresponding protocol for Gaussian KDE is Algorithm~\ref{alg:shufdpkdegaussian} with the choice of 3NB as the bitsum protocol.\footnote{For completeness, Algorithm~\ref{alg:shufdpkdegaussian} also specifies how to use RR as the bitsum protocol. The flip probability $p_{RR}$ should be set according to Lemma 4.8 in \citet{chen2020distributed}.}
\begin{theorem}[\Cref{thm:gaussian}, restated]
There are constants $C,C'>0$ such that the following holds. 
    Let $\delta\in(0,1)$ and $\varepsilon\leq C\log(1/\delta)$. 
    For every $\alpha\geq C'\sqrt{\log(1/\delta)}/(\varepsilon n)$, there is an $(\varepsilon,\delta)$-DP Gaussian KDE protocol in the shuffled DP model (under the communication-threat model) with $n$ users and inputs from $\R^d$, which has: supRMSE $\alpha$, user running time $\min(O(d/\alpha^2), \tilde O(d+1/\alpha^4))$, expected communication of $\tilde O(1/\alpha^2)$ bits per user, expected analyzer running time $O(n/\alpha^2)$, KDE query time $\min(O(d/\alpha^2), \tilde O(d+1/\alpha^4))$, and optimal bit-width $1$. 
\end{theorem}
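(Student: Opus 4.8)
The plan is to derive \Cref{thm:gaussian} as a direct instantiation of the reduction in \Cref{thm:main}, with two concrete ingredients. For the kernel, I would use the random Fourier features LSQ family for the Gaussian kernel \citep{rahimi2007random}, which is $(1,\sqrt2,1)$-LSQable with $\beta=0$ (so $Q=1$, $R=\sqrt2$, $S=1$); this is exactly the feature $\cos(\sqrt2\omega_i^Tx+\beta_i)$ appearing in Algorithm~\ref{alg:shufdpkdegaussian}, with the $R=\sqrt2$ normalization folded into the explicit constants of its randomizer and query steps. For the bitsum black box $\Pi$, I would plug in the correlated-noise ``3NB'' protocol of \citet{ghazi2020private}. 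Substituting $\beta=0$, $R=\sqrt2$, $S=1$ collapses the supRMSE bound of \Cref{thm:main} to $\sqrt{I^{-1}\cdot 64\,(1+(\mathcal E_\Pi/n)^2)}$, and its privacy expressions to the $S=1$ special case $\varepsilon=\varepsilon_0(e^{\varepsilon_0}-1)I+\varepsilon_0\sqrt{2I\ln(1/\delta')}$, $\delta=I\delta_0+\delta'$.

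Next I would feed in the quantitative guarantees of 3NB: it is an unbiased $(\varepsilon_0,\delta_0)$-DP bitsum protocol over $n$ users with RMSE $\mathcal E_\Pi=O(1/\varepsilon_0)$ (matching the central-DP discrete-Laplace optimum), sending $(1+o(1))$ single-bit messages per user, which furnishes the optimal bit-width $1$. The crux is then to balance $(I,\varepsilon_0,\delta_0,\delta')$ against the targets $(\varepsilon,\delta,\alpha)$. Taking $\delta_0=\Theta(\delta/I)$ and $\delta'=\Theta(\delta)$ satisfies $\delta=I\delta_0+\delta'$, and in the small-$\varepsilon_0$ regime advanced composition reads $\varepsilon\approx\varepsilon_0\sqrt{2I\ln(1/\delta')}$, so that $\varepsilon_0=\Theta(\varepsilon/\sqrt{I\ln(1/\delta')})$. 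Plugging $\mathcal E_\Pi=O(1/\varepsilon_0)$ into the collapsed supRMSE gives
\[
\mathrm{supRMSE}^2 \;=\; O\!\left(\frac{1}{I} \;+\; \frac{\log(1/\delta)}{\varepsilon^2 n^2}\right),
\]
where the $\sqrt{\log(1/\delta)}$ appearing in the final threshold originates solely from the advanced-composition slack $\ln(1/\delta')$, not from the bitsum error. The first term is a pure repetition-variance term killed by setting $I=\Theta(1/\alpha^2)$, and the second is $I$-independent and forces precisely the stated lower bound $\alpha\ge C'\sqrt{\log(1/\delta)}/(\varepsilon n)$; the constraint $\varepsilon\le C\log(1/\delta)$ is exactly the condition ensuring the quadratic term $\varepsilon_0(e^{\varepsilon_0}-1)I\approx\varepsilon_0^2 I$ in advanced composition stays dominated by the linear one, so that the chosen $\varepsilon_0$ is feasible.

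The efficiency claims I would then read off mechanically from $I=\Theta(1/\alpha^2)$ and $Q=1$. Each user runs $I$ instances of 3NB, giving expected $\tilde O(1/\alpha^2)$ bits of communication (the $\log I$ overhead for tagging the instance index absorbed into $\tilde O$) and analyzer time $O(nI)=O(n/\alpha^2)$. The user and query running times amount to evaluating $I$ random Fourier features: the direct cost is $O(Id)=O(d/\alpha^2)$, while a faster, JL-accelerated feature evaluation (a preliminary fast Johnson--Lindenstrauss projection to $\tilde O(1/\alpha^2)$ coordinates in time $\tilde O(d)$, followed by $O(I)$ features on the reduced representation at cost $\tilde O(1/\alpha^4)$) yields $\tilde O(d+1/\alpha^4)$; taking the better of the two gives the stated $\min$.

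I expect the main obstacle to be bookkeeping rather than conceptual: verifying that 3NB's stated guarantees---unbiasedness, the $\mathcal E_\Pi=O(1/\varepsilon_0)$ trade-off, and the $(1+o(1))$ message count---genuinely hold in the induced small-$\varepsilon_0$ regime, and checking that the advanced-composition identity of \Cref{thm:main} admits a solution for $\varepsilon_0$ producing the clean threshold exactly when $\varepsilon\le C\log(1/\delta)$. The one ingredient beyond a direct invocation of \Cref{thm:main} is the $\tilde O(d+1/\alpha^4)$ fast-evaluation argument, where the delicate point is confirming that the accelerated feature map preserves the worst-case (supremum over $y\in\R^d$) accuracy underlying the supRMSE guarantee.
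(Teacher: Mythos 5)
Your proposal is correct and takes essentially the same route as the paper's proof: instantiating \Cref{thm:main} with the random Fourier feature $(1,\sqrt2,1)$-LSQ family and the 3NB bitsum protocol of \citet{ghazi2020private}, setting $I=\Theta(1/\alpha^2)$, $\varepsilon_0=\Theta(\varepsilon/\sqrt{I\log(1/\delta)})$, $\delta_0=\Theta(\delta/I)$, $\delta'=\Theta(\delta)$, and obtaining the fast $\tilde O(d+1/\alpha^4)$ running time via the fast Johnson--Lindenstrauss preprocessing of \citet{backurs2024efficiently}. Your derivation of the supRMSE decomposition $O(1/I+\log(1/\delta)/(\varepsilon n)^2)$ and your reading of the constraint $\varepsilon\leq C\log(1/\delta)$ as controlling the quadratic term in advanced composition match the paper's calculation.
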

\begin{proof}
Recall that the Gaussian kernel has an LSQ family with $\beta=0$, $Q=S=1$, $R=\sqrt{2}$ by random Fourier features. 
For clarity, we mostly suppress constants in this proof. 
For the given $\varepsilon,\delta,\alpha$ in \Cref{thm:gaussian}, we set the parameters in \Cref{thm:main} as follows: 
\[ I = \lceil\frac{1}{\alpha^2}\rceil \;\; ; \;\; \varepsilon_0 = \frac{\varepsilon}{\sqrt{I\log(1/\delta)}} \;\; ; \;\; \delta_0 = \frac{\delta}{2I} \;\; ; \;\; \delta'=\delta/2 . \]

\textbf{Privacy:}
It can be easily checked that plugging the above setting of parameters into the composed privacy parameters in \Cref{thm:main} yields an $(\varepsilon,\delta)$-DP guarantee in the shuffled model, provided that the given bitsum protocol $\Pi$ is $(\varepsilon_0,\delta_0)$-DP. 

To this end, we use the 3NB bitsum protocol from \citet{ghazi2020private} as $\Pi$, since it has near-optimal accuracy with low communication overhead. 
3NB has four parameters $r,r',p,p'$ (see Algorithm~\ref{alg:shufdpkdegaussian}) that \citet{ghazi2020private} show how to set to ensure the protocol is $(\varepsilon_0,\delta_0)$-DP in the shuffled model. Namely, they prove that setting $r=1/n$, $p=e^{-0.99\varepsilon_0}$, $r'=3(1+\log(2e^{0.99\varepsilon_0}/\delta_0))$, $p'=e^{-\Theta(1)\cdot\varepsilon_0/(\varepsilon_0 + \log(1/\delta_0))}$ guarantees 3NB is $(O(\varepsilon_0),O(\delta_0))$-DP, and the constants can be scaled so it is $(\varepsilon_0,\delta_0)$-DP. 

\textbf{Accuracy:}
The 3NB protocol is unbiased and has RMSE $\Theta(1/\varepsilon_0)=\Theta(\sqrt{\log(1/\delta)}/(\alpha\varepsilon))$. Plugging this into the supRMSE in \Cref{thm:main}, we get supRMSE $O\left(\sqrt{\alpha^2(1 + \log(1/\delta)/(\alpha\varepsilon n)^2)}\right)$ in Algorithm~\ref{alg:shufdpkdegaussian}. By the bound on $\alpha$ in the statement of \Cref{thm:gaussian}, this supRMSE is at most $O(\alpha)$, and we can scale the constants to get supRMSE $\alpha$. 

\textbf{Efficiency:}
We recall that in the 3NB protocol from \citet{ghazi2020private}, each user runs in $O(1)$ time and sends an expected nunber of $1+o(1)$ messages of $O(1)$ bits each, which the analyzer iterates over in time $O(n)$. Since we have $I=O(1/\alpha^2)$ instances of this protocol, each message needs to include $O(\log(1/\alpha))$ additional bits to identify which protocol instance it belongs to, yielding $O(\log(1/\alpha)/\alpha^2)$ expected bits of communication per user, and expected analyzer running time $O(n/\alpha^2)$. 

Each user also needs to compute the inner product $\omega_i^Tx$ for every $i\in[I]$. Similarly, the KDE query algorithm needs to compute $\omega_i^Ty$ for every $i\in[I]$. This takes time $O(d)$ per inner product, for a total of $O(dI)=O(d/\alpha^2)$ time. If $d\gg1/\alpha^2$, this time bound can be improved by using the faster preprocessing result of \citet{backurs2024efficiently}, who showed that one can first do a random projection of $x$ (for each user input $x$) and $y$ (for each KDE query $y$) onto $O(\log^2(1/\alpha)/\alpha^2)$ dimensions, and thus only distort the final DP KDE error up to a multiplicative constant (that can again be scaled). As shown by \citet{backurs2024efficiently}, the random projection can be done in time $\tilde O(d+1/\alpha^2)$ by the fast Johnson-Lindenstrauss transform \citep{ailon2009fast}, and then each of the $I=O(1/\alpha^2)$ inner products takes time $\tilde O(1/\alpha^2)$, for a total of $\tilde O(d+1/\alpha^4)$ time per user and per KDE query.
\end{proof}

\subsection{Inner Product LSQ}\label{sec:iplsq}
The inner product kernel $\kk(x,y)=x^Ty$ is trivially $(d,1,d)$-LSQable for unit length embeddings, by letting the LSQ family include a single pair of functions $(f,g)$ such that both are the identity over $\R^d$. We now observe it is also $(1,\sqrt d,1)$-LSQable. This allows better control over the privacy parameters and computational cost of the protocol in \Cref{thm:main}, since they depend on the parameters $S$ and $Q$ (respectively) of the $(Q,R,S)$-LSQ family.

To sample a pair $(f,g)\sim\mathcal Q$, we sample a vector $(\sigma_1,\ldots,\sigma_d)\in\{-1,1\}^d$ of i.i.d.~uniformly random signs, and let both $f$ and $g$ be the function $\R^d\rightarrow\R$ that maps $x=(x_1,\ldots,x_i)$ to $\sum_{i=1}^d\sigma_ix_i$. 
It is straightforward to check that $\E[f(x)^Tg(y)]=x^Ty$ for every $x,y\in\R^d$. To determine the upper bound $R$ on the only coordinate of $f(x)$, we observe, $|f(x)|=|\sum_{i=1}^d\sigma_ix_i|\leq\norm{x}_1\leq\sqrt{d}\norm{x}_2=\sqrt{d}$, since $x$ is unit length (in Euclidean norm). 

\section{More on Shuffled DP Summation}

\subsection{Bitsum Protocols}\label{sec:bitsumappendix}
In this section we describe some common techniques behind shuffled DP bitsum protocols, including the ones we use in our experiments (RR, 3NB and Pure). 

One bitsum protocol is the classical \emph{randomized response} (RR) \citep{warner1965randomized}: each user $i$ locally flips her bit $b_i$ with some probability, and sends the resulting bit to the shuffler. The analyzer received the anonymized received bits from the shuffler, and simply releases their sum. While originally introduced for local DP, \cite{cheu2019distributed} showed that in the shuffled DP model, the flip probability can be significantly smaller, leading to much better accuracy. 

Another popular technique for shuffled DP bitsums, which is the one underlying 3NB and Pure, is \emph{noise divisibility} \citep{goryczka2015comprehensive,balle2019privacy,balle2020private,ghazi2020private,ghazi2020pure,ghazi2021differentially,kairouz2021distributed}. Each user $i$ locally adds noise $\nu_i$, sampled from a distribution carefully chosen so that the aggregate noise $\sum_i\nu_i$ from all users, after shuffling, is distributed in a way that ensures central DP. 
Thus, the shuffled DP protocol simulates central DP, by having each user contribute a piece of the total ``divisible'' requisite noise. 

To be concrete, we describe the single-distribution protocol from \citet{ghazi2020private}. In this protocol, each user $i$ samples a non-negative integer noise random variable $\nu_i$, and sends to the shuffler a stream of $b_i+\nu_i$ identical content-less messages (where $b_i$ is user $i$'s private bit). The analyzer receives the unified streams of messages from all users after shuffling. Since the messages are identical and are now stripped of both content and sender identities, the only information they convey is their count $\sum_{i}(b_i+\nu_i$), which is released as the bitsum estimate. This equals the true bitsum $\sum_{i}b_i$ plus a total noise of $\sum_i\nu_i$. Thus, to ensure shuffled DP, it suffices for the $\nu_i$s to be such that their sum $\sum_i\nu_i$ is distributed in a way that ensures central DP for $\sum_{i}b_i$. \citet{ghazi2020private} show this can be achieved by either a Poisson or a negative binomial distribution. The 3NB and Pure bitsum protocols are more involved applications of this basic technique, designed to achieve better accuracy, lower communication cost, and (in the case of Pure) a pure DP guarantee. 

\subsection{Bitsums vs.~Real Sums}\label{sec:bitvsreal}
As mentioned in \Cref{sec:related}, there is also ample work on shuffled DP protocol for real number summation (abbrev.~\emph{realsum}), where each user holds an input number in a bounded range (say, $[-1,1]$). In this appendix we expand on the choice to base our approach on bitsum rather than realsum protocols. The answer has two parts: (1) why there is little potential gain in real summation, (2) why there is substantial advatnage in bit summation.

\noindent\textbf{Little gain in realsums.} 
Ostensibly, Algorithm~\ref{alg:shufdpkde} and \Cref{thm:main} could have used realsum instead of bitsum protocols, obviating the need to discretizate the LSQ coordinates with randomized rounding. When summing $\ell$ real numbers in $[-1,1]$, discretization with randomized rounding generally leads to a Hoeffding-like error of order $\sqrt{\ell}$, which our protocol incurs. There are parameter regimes where shuffled DP realsum protocols are more accurate than bitsum protocols, avoiding this Hoeffding-like error, and thus it may seem like an avenue to improve \Cref{thm:main}. 

However, this is in fact not the case. In our protocol, summation serves as a subroutine. The true sum is not the target quantity; rather, the true sum is a random variable (sampled according to the LSQ family), which only approximates the target quantity against which error is measured (the true KDE). This LSQ approximation already incurs the Hoeffding-like error
 (it is “built-into” LSQ). Thus, if the discretized bitsums were to be replaced with a shuffled DP realsum protocol, the final KDE error would still be dominated by the Hoeffding-like error, and any improvement would be restricted to low-order terms. Improving the final KDE error asymptotically, if this is indeed possible, would require a different approach than LSQ to private KDE, and we are currently not aware of a way to improve the KDE error in the shuffled DP model. 

\noindent\textbf{Advangate of bitsums.} 
At the same time, discretization has its own important advantages in shuffled DP and distributed learning. This was discussed in \Cref{sec:shuffled_dp_considerations}. To recap, in practical applications of shuffled DP, numerical values need to be discretized, and their bit-width bounded, in order to properly control their accuracy and communication cost. This was among the main motivations of \citet{kairouz2021distributed} in developing their Distributed Discrete Gaussian (DDG) protocol for shuffled DP realsum, rather than using prior protocols for this task. The DDG facilitates bounding the bit-width (though it is not as low as $1$), and its error analysis accounts for discretization errors. Our approach, which does not need to solve generic real summation, but only the specific case of KDE, attains the optimal bit-width of $1$
 by using binary discretization followed by bit summation, and our error analysis too accounts for the discretization error.

\section{Full Experimental Results}

\paragraph{Ablation: Local DP.}

Figure~\ref{fig:ldp} displays a comparison of our shuffled DP method with local DP. The local DP baseline is obtained by taking our private KDE protocol (\Cref{thm:main}), and replacing the shuffled DP bitsum with classical randomized response, which satisfies local DP. For the most direct comparison, the local DP plots are displayed compared to the RR plots (i.e., the leftmost column) in Figure~\ref{fig:epslbl5_primary}. 

We recall that the difference between the methods is that in classical local DP RR (the dashed lines in the plot) \citep{warner1965randomized}, each user locally flips her bit with probability that depends on the desired privacy parameter $\varepsilon$, but is independent of the overall number of user in the protocol (local DP RR makes no assumptions on other participating users). In contrast, in shuffled DP RR (the solid plots) \citep{cheu2019distributed}, each user flips her bit with probability that depends both on $\varepsilon$ and on the total number of participating user $n$; the larger $n$ is, the smaller the flip probability needs to be, since in shuffled DP, the user assumes her bit would also be anonymized and ``hidden'' among the bits received from the other $n-1$ users. 

The results in Figure~\ref{fig:ldp} show that as expected, shuffled DP attains considerably higher downstream accuracy than local DP. 

\paragraph{Ablation: Effect of $\varepsilon_{\mathrm{lbl}}$.}
\Cref{tbl:epslbl_dbpedia,tbl:epslbl_agnews,tbl:epslbl_sst2,tbl:epslbl_cifar10} show the effect of varying $\varepsilon_{\mathrm{lbl}}$ on various settings on the four datasets, respectively.

\paragraph{Additional parameter settings.}
\Cref{fig:epslbl10_primary,fig:epslbl7_primary,fig:epslbl5_again,fig:epslbl3_primary} display private classification accuracy results for $\varepsilon_{\mathrm{lbl}}=10,7,5,3$ respectively (\Cref{fig:epslbl5_again} repeats \Cref{fig:epslbl5_primary} from the main paper for convenience).

\Cref{tbl:decoding5,tbl:decoding4,tbl:decoding3} present private class decoding results with $\varepsilon_{\mathrm{lbl}}=5$ and $\varepsilon\approx5.7,4.4,3.2$ respectively (\Cref{tbl:decoding3} is the full version of \Cref{tbl:decodingmain} from the main paper).


\begin{figure}[ht]
\centering

\begin{subfigure}[b]{0.3\textwidth}
    \centering
    \includegraphics[width=\textwidth,height=4cm]{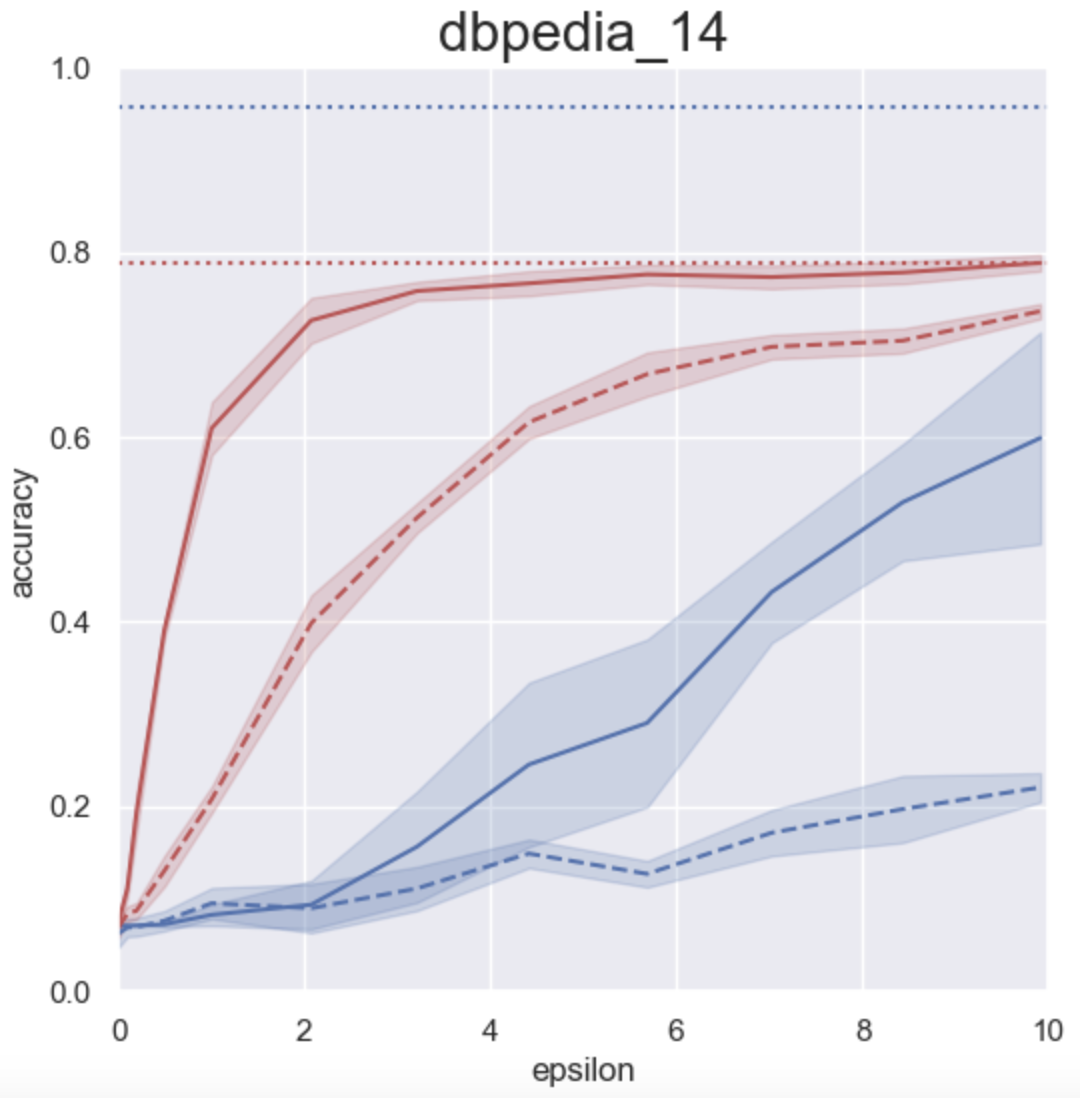}
\end{subfigure} \hspace{0.2in}
\begin{subfigure}[b]{0.3\textwidth}
    \centering
    \includegraphics[width=\textwidth,height=4cm]{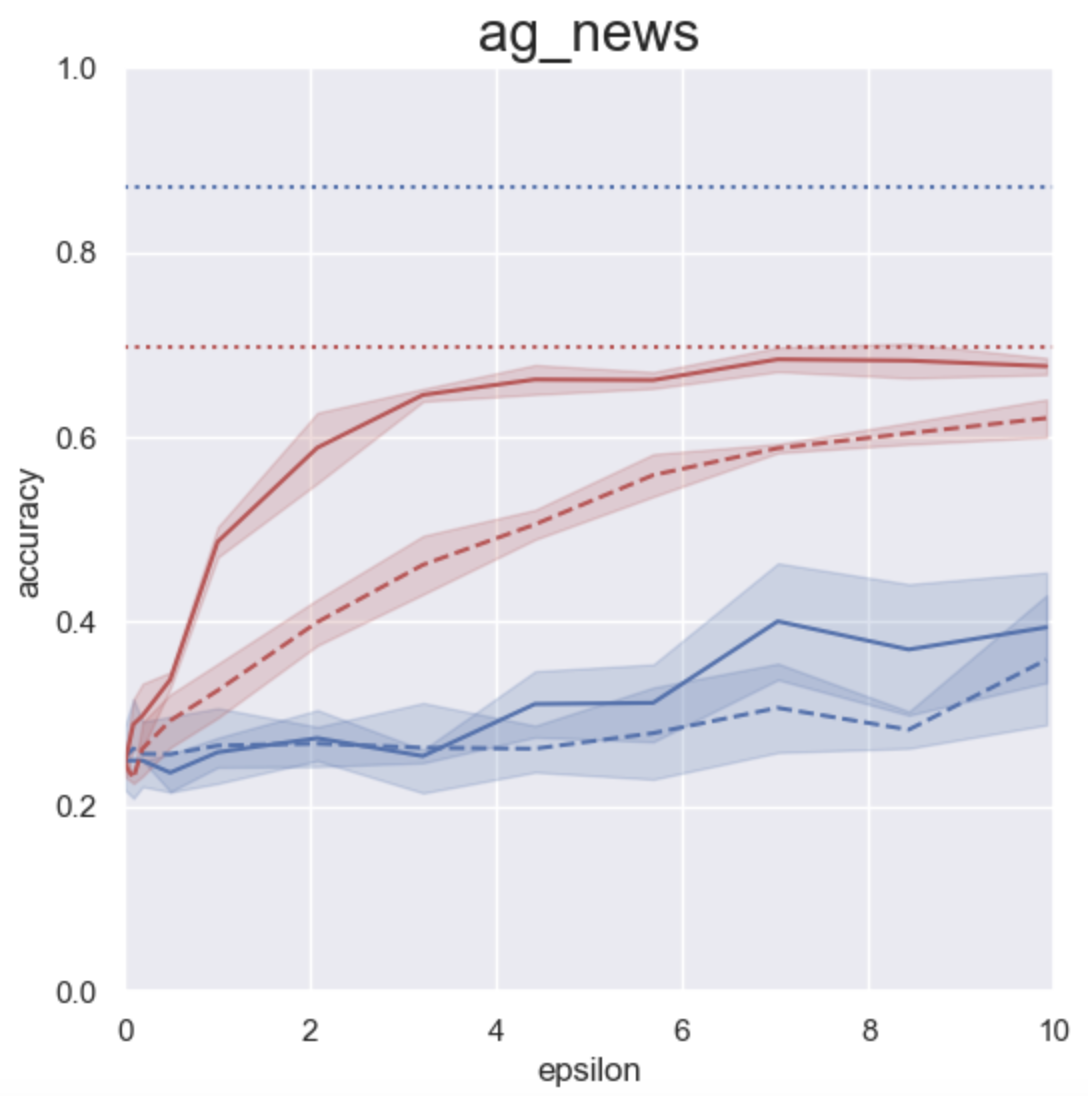}
\end{subfigure}\\

\begin{subfigure}[b]{0.3\textwidth}
    \centering
    \includegraphics[width=\textwidth,height=4cm]{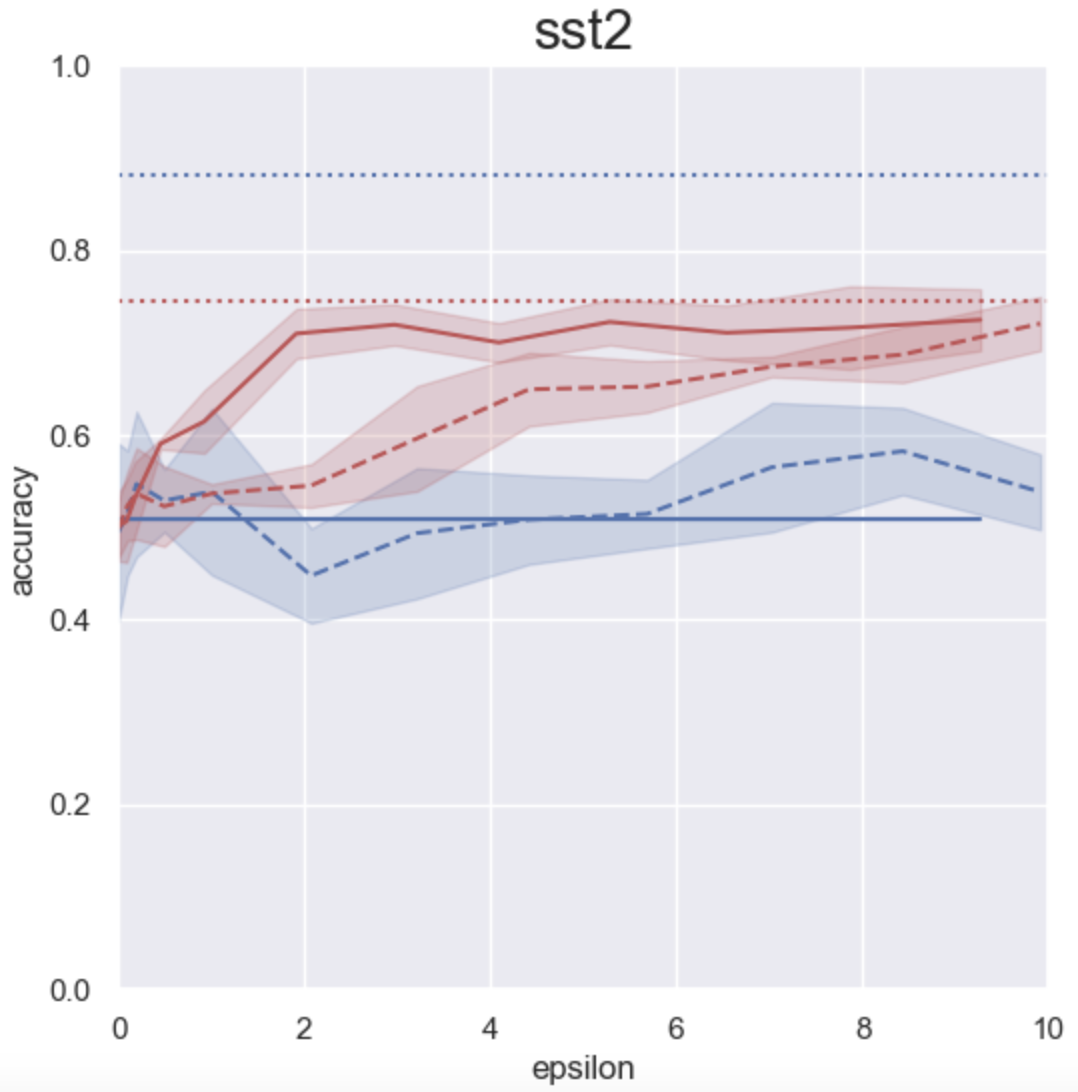}
\end{subfigure} \hspace{0.2in}
\begin{subfigure}[b]{0.3\textwidth}
    \centering
    \includegraphics[width=\textwidth,height=4cm]{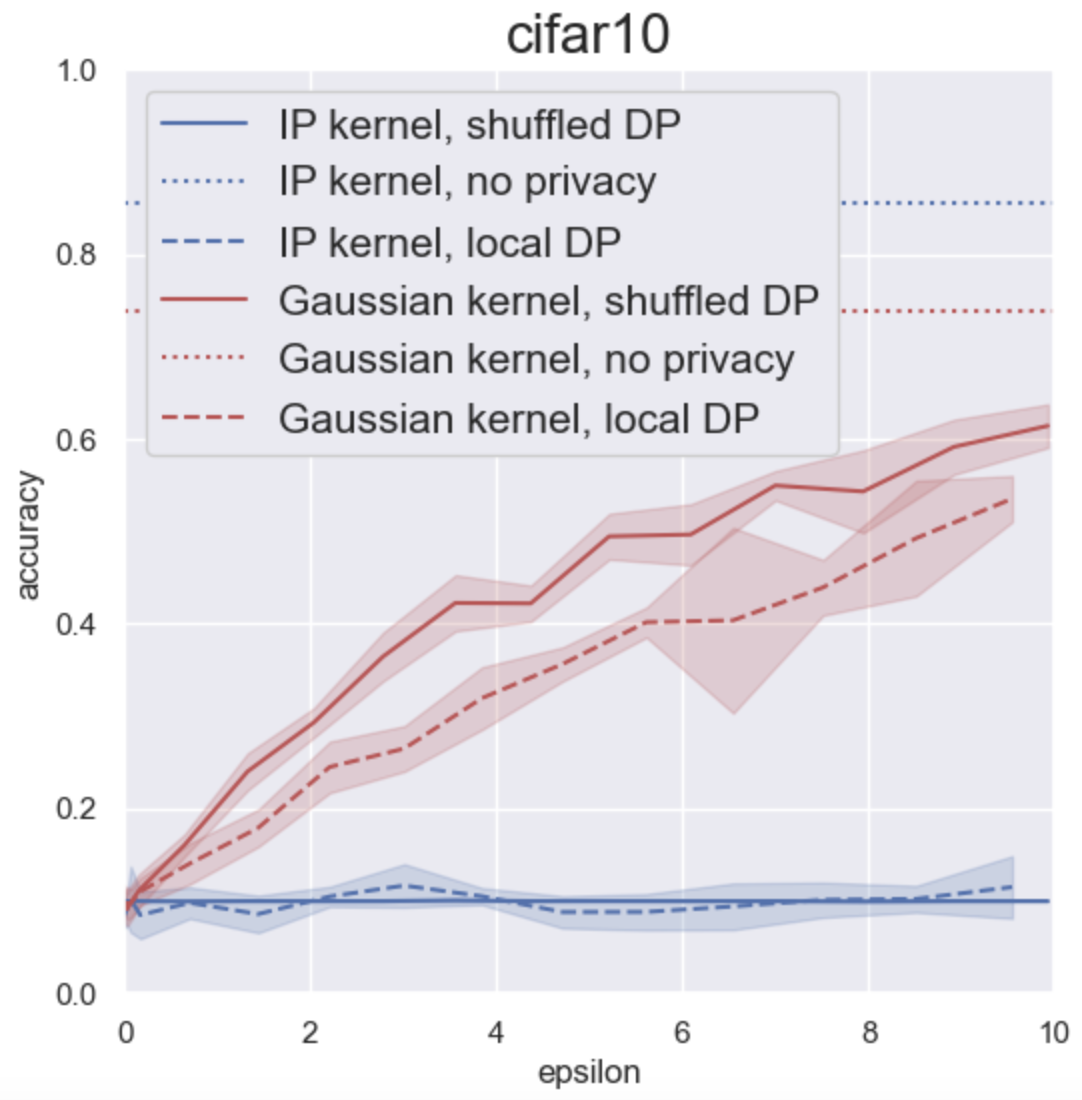}
\end{subfigure}
\caption{Classification accuracy comparison with a local DP baseline (overlaid on the shuffled DP RR plots with $\varepsilon_{\mathrm{lbl}}=5$, from the leftmost column in Figure~\ref{fig:epslbl5_primary}).}
\label{fig:ldp}
\vspace{-0.1 in}
\end{figure}

\begin{table*}[ht]
{\renewcommand{\arraystretch}{2}
\caption{Effect of $\varepsilon_{lbl}$ on accuracy, DBPedia-14.}
\label{tbl:epslbl_dbpedia}
\begin{centering}
\small
\begin{tabular}{clc|lllll}
\toprule
   & & & \multicolumn{5}{c}{\% Classification accuracy ($\pm$std) with $\varepsilon_{lbl}$:}  \\
Kernel & Bitsum & $\varepsilon$   & $\infty$ & 7 & 5 & 3 & 1 \\ 
\midrule
\multirow{6}{*}{Gaussian} & RR & 2 & $75.5_{\pm1.1}$ & $76.1_{\pm1.8}$ & $73.6_{\pm1.1}$ & $57.1_{\pm2.6}$ & $15.2_{\pm1.0}$  \\
 & RR & 4.5 & $79.7_{\pm1.3}$ & $79.3_{\pm1.9}$ & $76.1_{\pm0.9}$ & $61.9_{\pm2.5}$ & $15.9_{\pm0.7}$  \\
 & RR & 7 & $79.5_{\pm1.5}$ & $79.1_{\pm0.8}$ & $77.4_{\pm1.7}$ & $63.6_{\pm1.5}$ & $17.6_{\pm0.7}$  \\
 & 3NB & 2 & $79.2_{\pm1.6}$ & $79.6_{\pm1.6}$ & $79.4_{\pm0.8}$ & $65.0_{\pm1.0}$ & $15.8_{\pm1.0}$  \\
 & 3NB & 4.5 & $80.3_{\pm1.5}$ & $78.4_{\pm0.9}$ & $76.9_{\pm1.1}$ & $64.7_{\pm1.3}$ & $15.9_{\pm1.4}$  \\
 & 3NB & 7 & $80.0_{\pm0.8}$ & $79.1_{\pm1.0}$ & $77.5_{\pm0.8}$ & $65.2_{\pm2.1}$ & $17.0_{\pm1.9}$   \\
\midrule
\multirow{6}{*}{IP} & RR & 2 & $27.0_{\pm0.8}$ & $27.3_{\pm4.9}$ & $23.0_{\pm1.8}$ & $15.4_{\pm1.8}$ & $7.8_{\pm1.2}$  \\
 & RR & 4.5 & $50.5_{\pm3.5}$ & $51.3_{\pm1.6}$ & $46.2_{\pm3.1}$ & $30.0_{\pm2.8}$ & $10.7_{\pm2.0}$  \\
 & RR & 7 & $65.4_{\pm2.1}$ & $66.7_{\pm2.0}$ & $63.0_{\pm1.6}$ & $45.2_{\pm3.3}$ & $12.0_{\pm1.8}$  \\
 & 3NB & 2 & $92.7_{\pm0.1}$ & $92.8_{\pm0.2}$ & $92.5_{\pm0.2}$ & $91.0_{\pm0.4}$ & $58.9_{\pm1.4}$  \\
 & 3NB & 4.5 & $93.1_{\pm0.1}$ & $93.1_{\pm0.0}$ & $93.1_{\pm0.1}$ & $92.5_{\pm0.3}$ & $72.2_{\pm1.2}$  \\
 & 3NB & 7 & $93.1_{\pm0.2}$ & $93.1_{\pm0.2}$ & $93.1_{\pm0.1}$ & $92.6_{\pm0.2}$ & $72.9_{\pm1.7}$  \\
\bottomrule
\end{tabular}
\end{centering}}
\end{table*}

\begin{table*}[ht]
{\renewcommand{\arraystretch}{2}
\caption{Effect of $\varepsilon_{lbl}$ on accuracy, AG News.}
\label{tbl:epslbl_agnews}
\begin{centering}
\small
\begin{tabular}{clc|lllll}
\toprule
   & & & \multicolumn{5}{c}{\% Classification accuracy ($\pm$std) with $\varepsilon_{lbl}$:}  \\
Kernel & Bitsum & $\varepsilon$   & $\infty$ & 7 & 5 & 3 & 1 \\ 
\midrule
\multirow{6}{*}{Gaussian} & RR & 2 & $60.5_{\pm3.6}$ & $60.4_{\pm2.4}$ & $61.7_{\pm0.8}$ & $57.2_{\pm2.4}$ & $36.3_{\pm2.2}$  \\
 & RR & 4.5 & $66.8_{\pm1.1}$ & $66.7_{\pm1.7}$ & $66.8_{\pm1.4}$ & $61.8_{\pm2.8}$ & $37.1_{\pm1.0}$  \\
 & RR & 7 & $67.7_{\pm0.8}$ & $67.9_{\pm1.6}$ & $67.9_{\pm1.4}$ & $62.9_{\pm2.7}$ & $40.7_{\pm1.6}$   \\
 & 3NB & 2 & $68.2_{\pm1.2}$ & $69.5_{\pm1.4}$ & $67.0_{\pm1.0}$ & $63.4_{\pm2.3}$ & $41.7_{\pm0.9}$  \\
 & 3NB & 4.5 & $68.7_{\pm1.1}$ & $69.1_{\pm1.2}$ & $68.5_{\pm1.1}$ & $63.3_{\pm2.5}$ & $41.0_{\pm2.6}$  \\
 & 3NB & 7 & $67.8_{\pm2.2}$ & $68.2_{\pm0.7}$ & $68.7_{\pm1.3}$ & $62.8_{\pm1.7}$ & $40.5_{\pm2.0}$   \\
\midrule
\multirow{6}{*}{IP} & RR & 2 & $30.3_{\pm5.8}$ & $33.5_{\pm1.7}$ & $35.7_{\pm2.7}$ & $31.1_{\pm2.0}$ & $25.4_{\pm3.0}$  \\
 & RR & 4.5 & $45.5_{\pm3.7}$ & $41.4_{\pm6.1}$ & $42.2_{\pm7.6}$ & $43.6_{\pm4.6}$ & $28.9_{\pm6.0}$   \\
 & RR & 7 & $50.5_{\pm3.7}$ & $48.8_{\pm4.8}$ & $50.9_{\pm4.4}$ & $41.9_{\pm3.9}$ & $31.5_{\pm3.3}$  \\
 & 3NB & 2 & $84.9_{\pm0.1}$ & $84.9_{\pm0.3}$ & $85.0_{\pm0.5}$ & $84.1_{\pm0.8}$ & $73.9_{\pm1.5}$  \\
 & 3NB & 4.5 & $85.9_{\pm0.4}$ & $85.9_{\pm0.4}$ & $85.6_{\pm0.3}$ & $85.6_{\pm0.2}$ & $79.5_{\pm1.6}$   \\
 & 3NB & 7 & $86.2_{\pm0.2}$ & $86.1_{\pm0.2}$ & $85.9_{\pm0.3}$ & $85.9_{\pm0.3}$ & $79.9_{\pm1.4}$   \\
\bottomrule
\end{tabular}
\end{centering}}
\end{table*}

\begin{table*}[ht]
{\renewcommand{\arraystretch}{2}
\caption{Effect of $\varepsilon_{lbl}$ on accuracy, SST2.}
\label{tbl:epslbl_sst2}
\begin{centering}
\small
\begin{tabular}{clc|lllll}
\toprule
   & & & \multicolumn{5}{c}{\% Classification accuracy ($\pm$std) with $\varepsilon_{lbl}$:}  \\
Kernel & Bitsum & $\varepsilon$   & $\infty$ & 7 & 5 & 3 & 1 \\ 
\midrule
\multirow{6}{*}{Gaussian} & RR & 2 & $68.2_{\pm2.9}$ & $66.7_{\pm3.3}$ & $69.0_{\pm2.8}$ & $69.1_{\pm1.7}$ & $61.2_{\pm3.7}$  \\
 & RR & 4.5 & $70.0_{\pm4.0}$ & $72.8_{\pm1.8}$ & $70.3_{\pm2.6}$ & $71.4_{\pm2.7}$ & $57.9_{\pm4.9}$   \\
 & RR & 7 & $73.1_{\pm1.3}$ & $74.3_{\pm2.5}$ & $72.9_{\pm3.2}$ & $72.1_{\pm1.2}$ & $61.9_{\pm1.8}$  \\
 & 3NB & 2 & $72.6_{\pm1.1}$ & $71.8_{\pm4.1}$ & $71.1_{\pm2.7}$ & $70.8_{\pm2.9}$ & $64.6_{\pm2.4}$  \\
 & 3NB & 4.5 & $71.5_{\pm3.7}$ & $74.1_{\pm2.0}$ & $72.2_{\pm1.0}$ & $71.5_{\pm2.2}$ & $61.1_{\pm4.6}$  \\
 & 3NB & 7 & $70.0_{\pm4.8}$ & $70.9_{\pm1.7}$ & $72.7_{\pm2.8}$ & $71.1_{\pm3.3}$ & $63.6_{\pm2.4}$  \\
\midrule
\multirow{6}{*}{IP} & RR & 2 & $27.0_{\pm0.8}$ & $27.3_{\pm4.9}$ & $23.0_{\pm1.8}$ & $15.4_{\pm1.8}$ & $7.8_{\pm1.2}$  \\
 & RR & 4.5 & $50.5_{\pm3.5}$ & $51.3_{\pm1.6}$ & $46.2_{\pm3.1}$ & $30.0_{\pm2.8}$ & $10.7_{\pm2.0}$  \\
 & RR & 7 & $65.4_{\pm2.1}$ & $66.7_{\pm2.0}$ & $63.0_{\pm1.6}$ & $45.2_{\pm3.3}$ & $12.0_{\pm1.8}$  \\
 & 3NB & 2 & $92.7_{\pm0.1}$ & $92.8_{\pm0.2}$ & $92.5_{\pm0.2}$ & $91.0_{\pm0.4}$ & $58.9_{\pm1.4}$  \\
 & 3NB & 4.5 & $93.1_{\pm0.1}$ & $93.1_{\pm0.0}$ & $93.1_{\pm0.1}$ & $92.5_{\pm0.3}$ & $72.2_{\pm1.2}$  \\
 & 3NB & 7 & $93.1_{\pm0.2}$ & $93.1_{\pm0.2}$ & $93.1_{\pm0.1}$ & $92.6_{\pm0.2}$ & $72.9_{\pm1.7}$  \\
\bottomrule
\end{tabular}
\end{centering}}
\end{table*}

\begin{table*}[ht]
{\renewcommand{\arraystretch}{2}
\caption{Effect of $\varepsilon_{lbl}$ on accuracy, CIFAR-10}
\label{tbl:epslbl_cifar10}
\begin{centering}
\small
\begin{tabular}{clc|lllll}
\toprule
   & & & \multicolumn{5}{c}{\% Classification accuracy ($\pm$std) with $\varepsilon_{lbl}$:}  \\
Kernel & Bitsum & $\varepsilon$   & $\infty$ & 7 & 5 & 3 & 1 \\ 
\midrule
\multirow{6}{*}{Gaussian} & RR & 1.5 & $24.1_{\pm1.8}$ & $21.6_{\pm1.6}$ & $21.4_{\pm1.6}$ & $16.9_{\pm2.4}$ & $11.6_{\pm0.6}$  \\
 & RR & 3 & $34.8_{\pm3.7}$ & $34.0_{\pm2.4}$ & $36.1_{\pm2.2}$ & $27.5_{\pm0.9}$ & $11.3_{\pm1.2}$  \\
 & RR & 4.7 & $46.7_{\pm0.2}$ & $45.2_{\pm2.1}$ & $44.3_{\pm4.0}$ & $36.8_{\pm2.0}$ & $16.7_{\pm1.3}$  \\
 & 3NB & 1.5 & $73.4_{\pm1.0}$ & $71.3_{\pm2.0}$ & $72.9_{\pm1.6}$ & $67.8_{\pm1.3}$ & $39.3_{\pm5.4}$  \\
 & 3NB & 3 & $73.1_{\pm0.9}$ & $72.2_{\pm1.5}$ & $71.9_{\pm2.2}$ & $67.7_{\pm1.2}$ & $42.8_{\pm3.5}$  \\
 & 3NB & 4.7 & $71.5_{\pm1.7}$ & $73.3_{\pm1.6}$ & $72.8_{\pm1.6}$ & $70.4_{\pm1.3}$ & $40.8_{\pm4.1}$  \\
\midrule
\multirow{6}{*}{IP} & RR & 1.5 & $11.4_{\pm1.6}$ & $10.7_{\pm1.4}$ & $10.6_{\pm2.1}$ & $12.0_{\pm1.6}$ & $10.3_{\pm2.0}$  \\
 & RR & 3 & $11.1_{\pm2.6}$ & $9.1_{\pm2.0}$ & $10.6_{\pm1.9}$ & $9.3_{\pm1.3}$ & $10.0_{\pm1.8}$  \\
 & RR & 4.7 & $12.0_{\pm1.7}$ & $8.6_{\pm1.8}$ & $10.7_{\pm1.8}$ & $8.7_{\pm1.8}$ & $9.8_{\pm1.5}$  \\
 & 3NB & 1.5 & $18.8_{\pm3.3}$ & $19.6_{\pm0.9}$ & $17.9_{\pm3.8}$ & $13.8_{\pm4.2}$ & $12.3_{\pm3.0}$  \\
 & 3NB & 3 & $30.4_{\pm4.0}$ & $28.7_{\pm2.4}$ & $26.6_{\pm5.1}$ & $24.4_{\pm2.2}$ & $12.2_{\pm2.1}$  \\
 & 3NB & 4.7 & $36.8_{\pm7.8}$ & $37.1_{\pm5.9}$ & $37.8_{\pm2.8}$ & $25.3_{\pm3.6}$ & $13.2_{\pm3.8}$  \\
\bottomrule
\end{tabular}
\end{centering}}
\end{table*}


\begin{figure}[ht]
\centering

\begin{subfigure}[b]{0.3\textwidth}
    \centering
    \includegraphics[width=\textwidth,height=4cm]{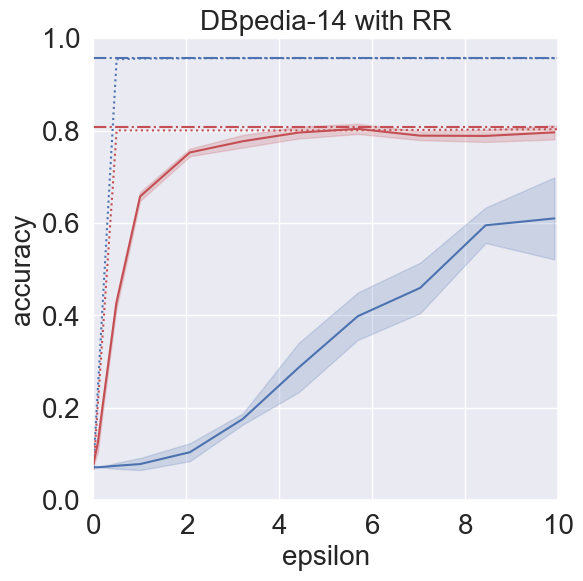}
\end{subfigure} \hspace{0.2in}
\begin{subfigure}[b]{0.3\textwidth}
    \centering
    \includegraphics[width=\textwidth,height=4cm]{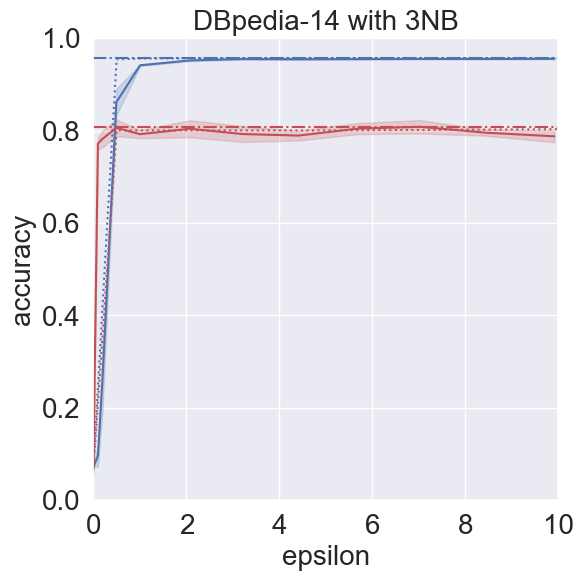}
\end{subfigure} \hspace{0.2in}
\begin{subfigure}[b]{0.3\textwidth}
    \centering
    \includegraphics[width=\textwidth,height=4cm]{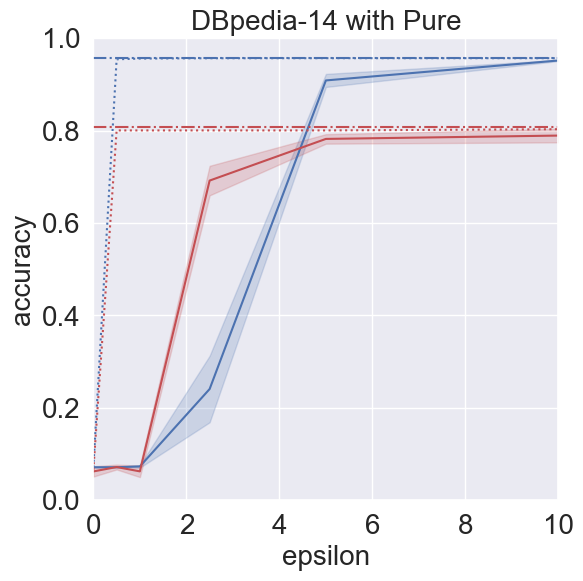}
\end{subfigure}\\

\begin{subfigure}[b]{0.3\textwidth}
    \centering
    \includegraphics[width=\textwidth,height=4cm]{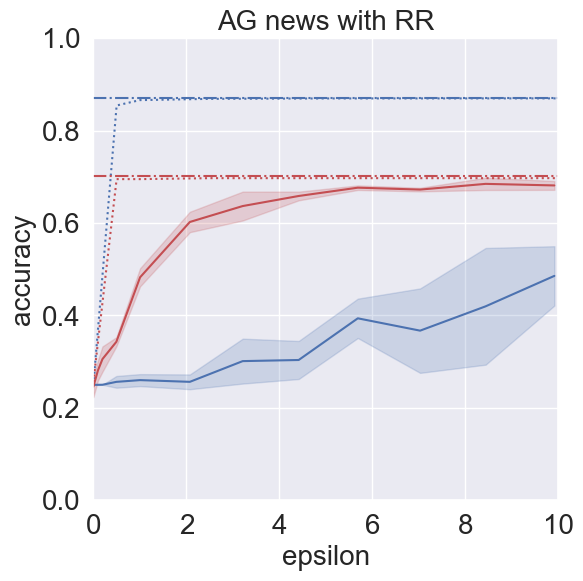}
\end{subfigure} \hspace{0.2in}
\begin{subfigure}[b]{0.3\textwidth}
    \centering
    \includegraphics[width=\textwidth,height=4cm]{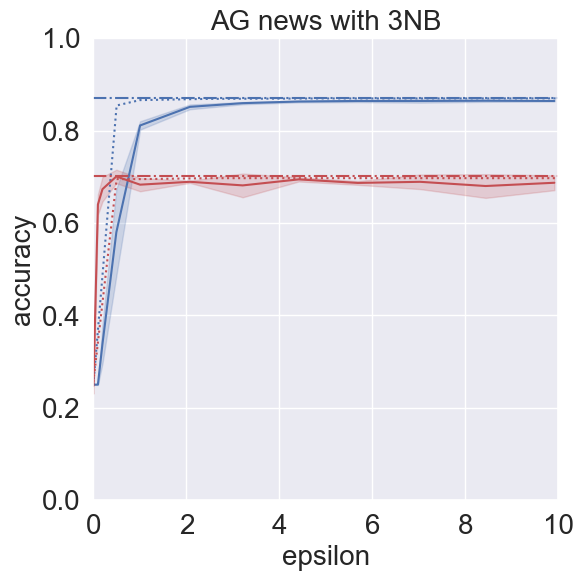}
\end{subfigure} \hspace{0.2in}
\begin{subfigure}[b]{0.3\textwidth}
    \centering
    \includegraphics[width=\textwidth,height=4cm]{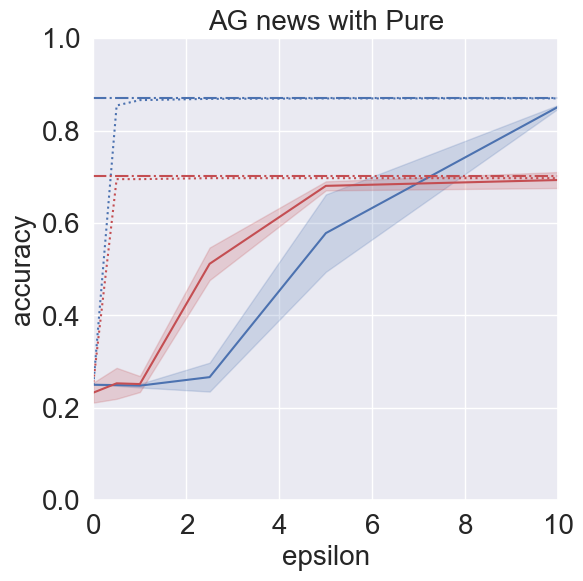}
\end{subfigure}\\

\begin{subfigure}[b]{0.3\textwidth}
    \centering
    \includegraphics[width=\textwidth,height=4cm]{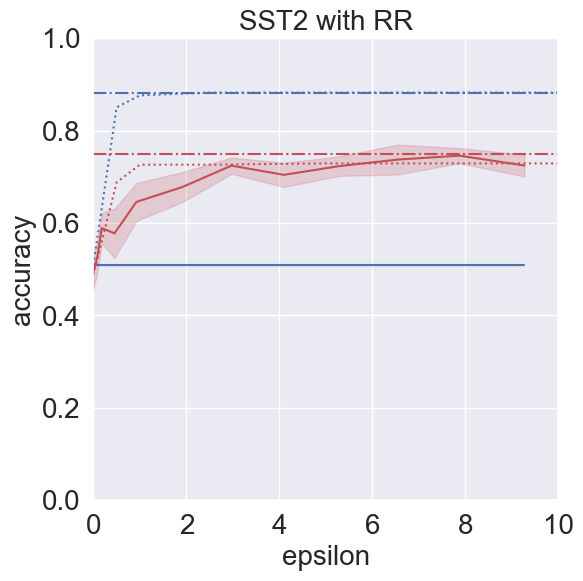}
\end{subfigure} \hspace{0.2in}
\begin{subfigure}[b]{0.3\textwidth}
    \centering
    \includegraphics[width=\textwidth,height=4cm]{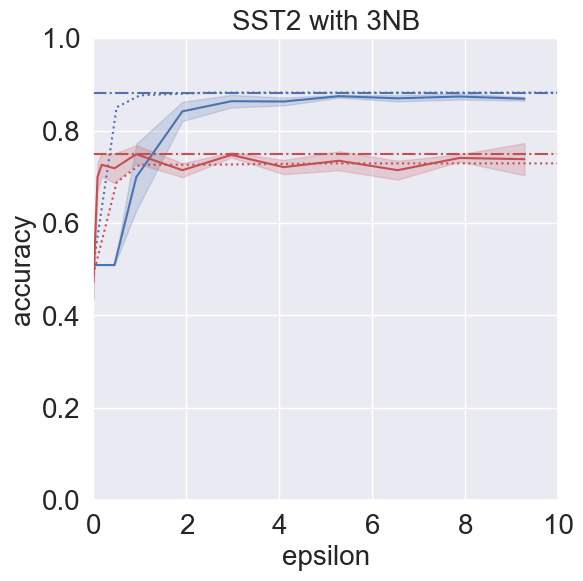}
\end{subfigure} \hspace{0.2in}
\begin{subfigure}[b]{0.3\textwidth}
    \centering
    \includegraphics[width=\textwidth,height=4cm]{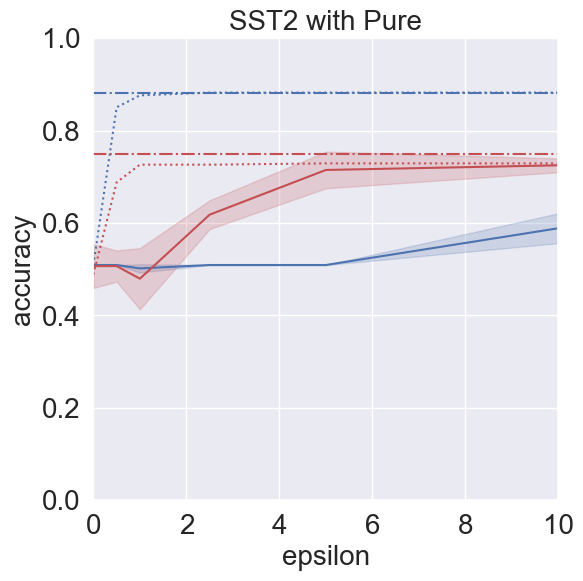}
\end{subfigure} \\

\begin{subfigure}[b]{0.3\textwidth}
    \centering
    \includegraphics[width=\textwidth,height=4cm]{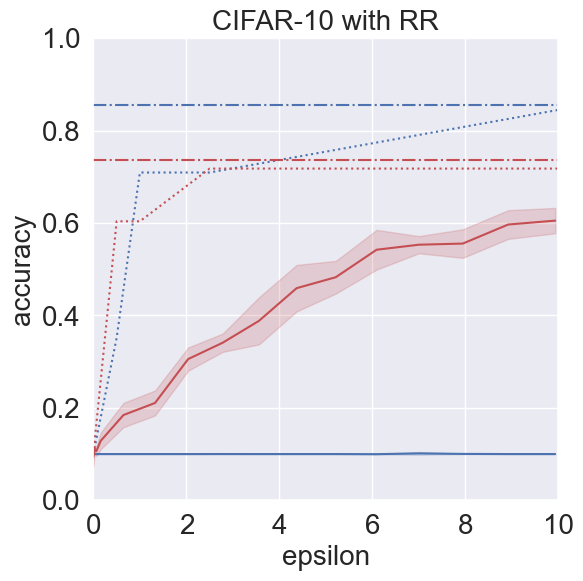}
\end{subfigure} \hspace{0.2in}
\begin{subfigure}[b]{0.3\textwidth}
    \centering
    \includegraphics[width=\textwidth,height=4cm]{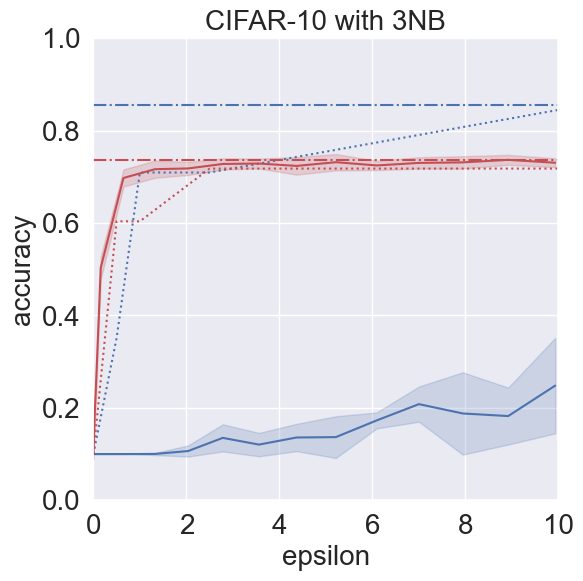}
\end{subfigure} \hspace{0.2in}
\begin{subfigure}[b]{0.3\textwidth}
    \centering
    \includegraphics[width=\textwidth,height=4cm]{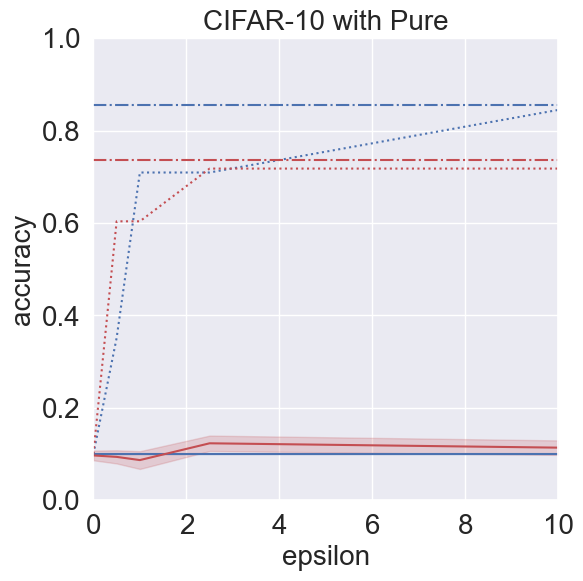}
\end{subfigure} \\

\begin{subfigure}[b]{\textwidth}
    \centering
    \includegraphics[width=\textwidth]{figures/legend.png}
\end{subfigure} \hspace{0.2in}
\begin{subfigure}[b]{0.3\textwidth}
    \centering
\end{subfigure}
\caption{Classification results with $\varepsilon_{\mathrm{lbl}}=10$}
\label{fig:epslbl10_primary}
\vspace{-0.1 in}
\end{figure}


\begin{figure}[ht]
\centering

\begin{subfigure}[b]{0.3\textwidth}
    \centering
    \includegraphics[width=\textwidth,height=4cm]{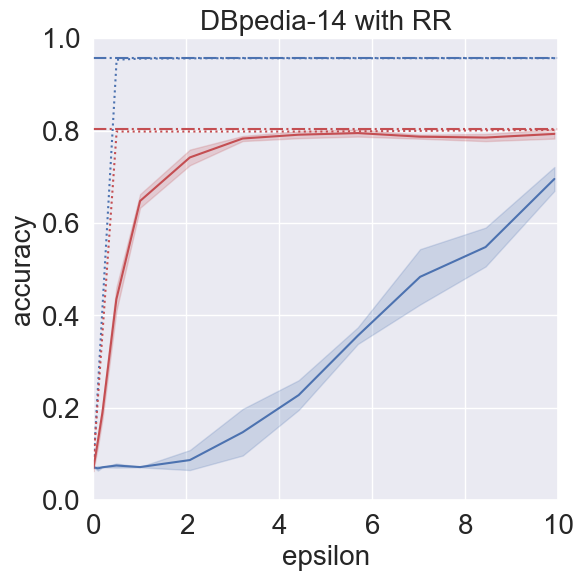}
\end{subfigure} \hspace{0.2in}
\begin{subfigure}[b]{0.3\textwidth}
    \centering
    \includegraphics[width=\textwidth,height=4cm]{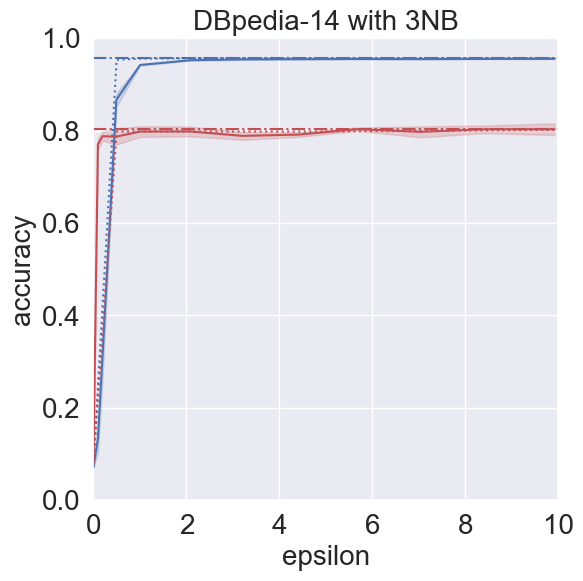}
\end{subfigure} \hspace{0.2in}
\begin{subfigure}[b]{0.3\textwidth}
    \centering
    \includegraphics[width=\textwidth,height=4cm]{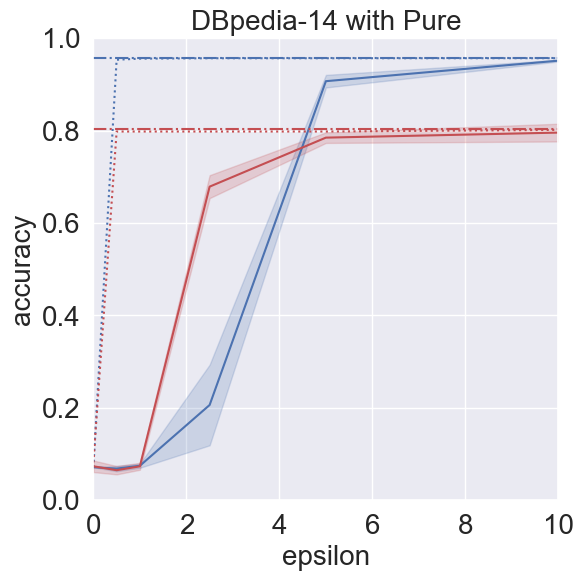}
\end{subfigure}\\

\begin{subfigure}[b]{0.3\textwidth}
    \centering
    \includegraphics[width=\textwidth,height=4cm]{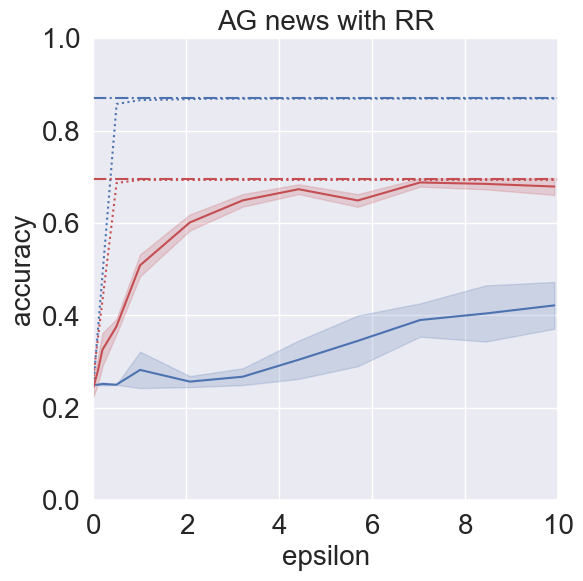}
\end{subfigure} \hspace{0.2in}
\begin{subfigure}[b]{0.3\textwidth}
    \centering
    \includegraphics[width=\textwidth,height=4cm]{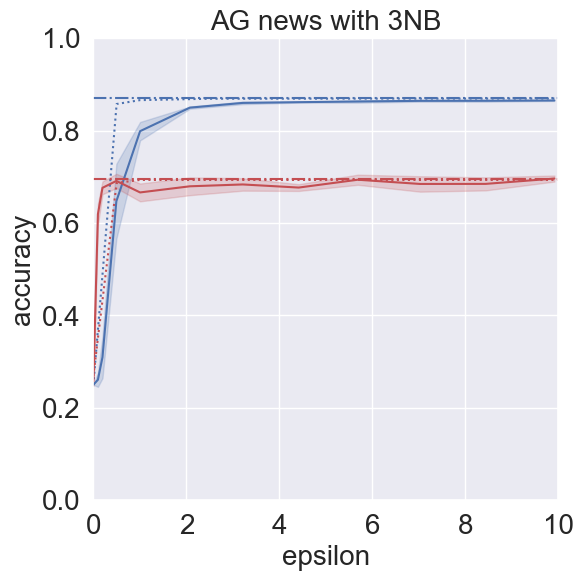}
\end{subfigure} \hspace{0.2in}
\begin{subfigure}[b]{0.3\textwidth}
    \centering
    \includegraphics[width=\textwidth,height=4cm]{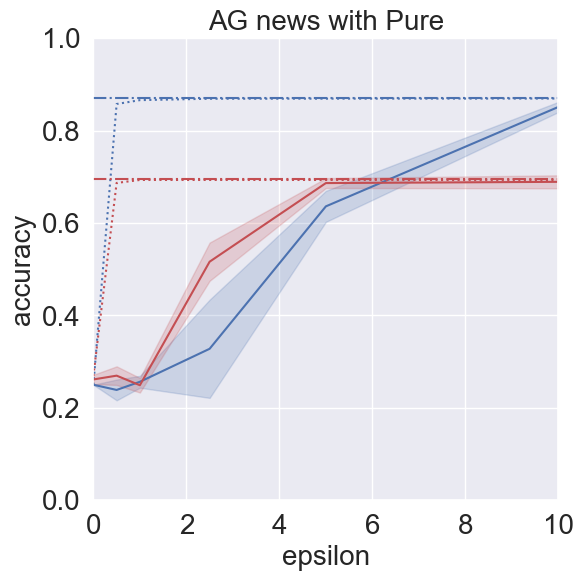}
\end{subfigure}\\

\begin{subfigure}[b]{0.3\textwidth}
    \centering
    \includegraphics[width=\textwidth,height=4cm]{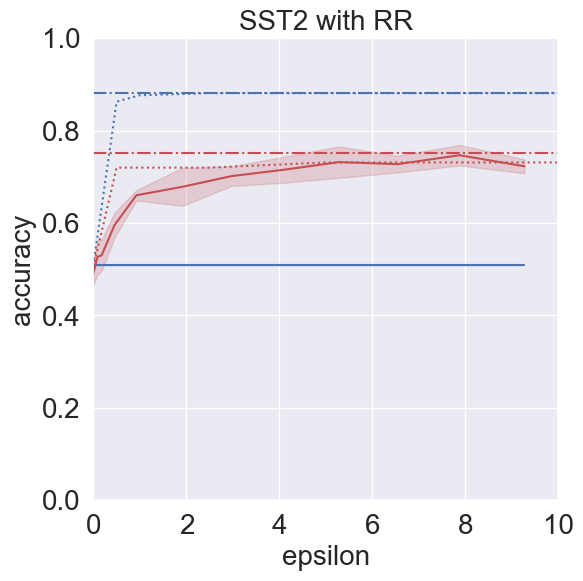}
\end{subfigure} \hspace{0.2in}
\begin{subfigure}[b]{0.3\textwidth}
    \centering
    \includegraphics[width=\textwidth,height=4cm]{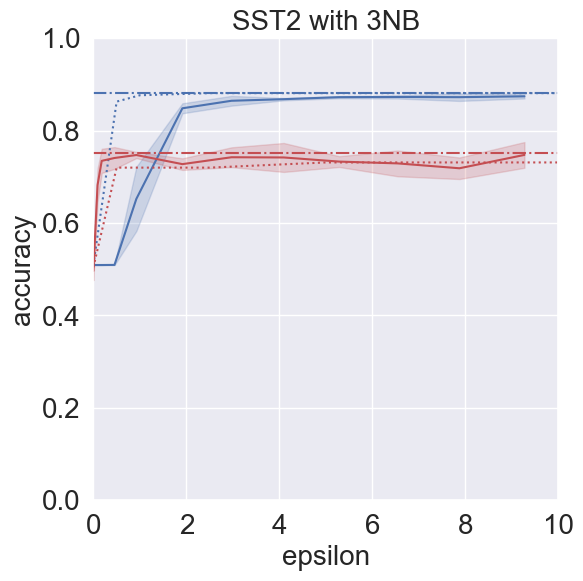}
\end{subfigure} \hspace{0.2in}
\begin{subfigure}[b]{0.3\textwidth}
    \centering
    \includegraphics[width=\textwidth,height=4cm]{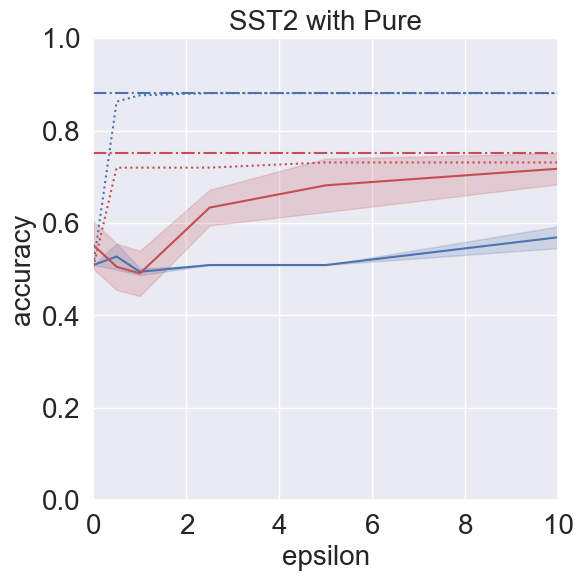}
\end{subfigure} \\

\begin{subfigure}[b]{0.3\textwidth}
    \centering
    \includegraphics[width=\textwidth,height=4cm]{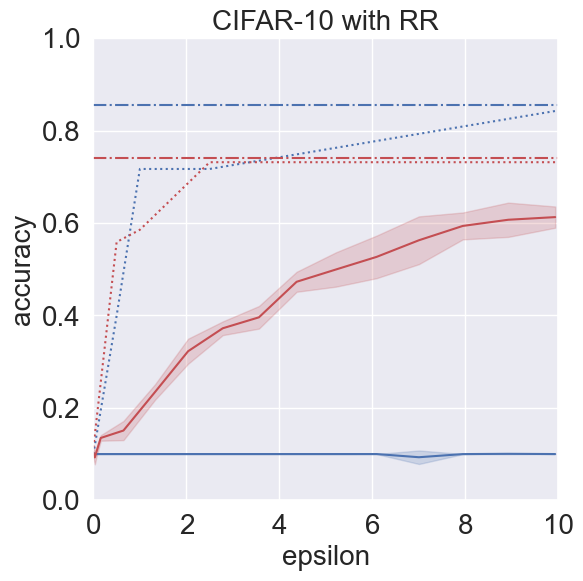}
\end{subfigure} \hspace{0.2in}
\begin{subfigure}[b]{0.3\textwidth}
    \centering
    \includegraphics[width=\textwidth,height=4cm]{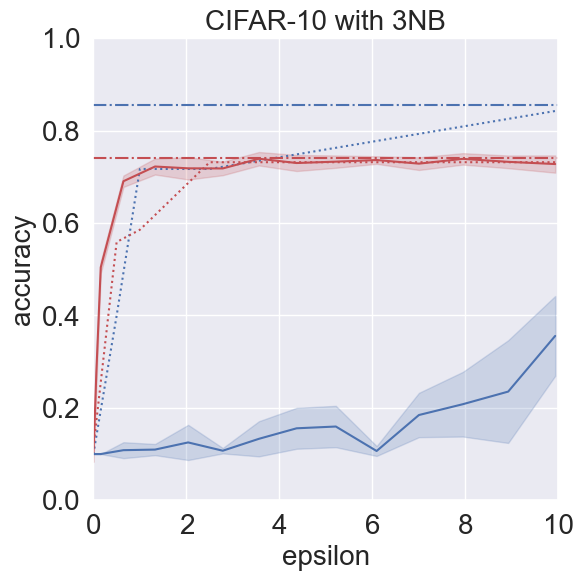}
\end{subfigure} \hspace{0.2in}
\begin{subfigure}[b]{0.3\textwidth}
    \centering
    \includegraphics[width=\textwidth,height=4cm]{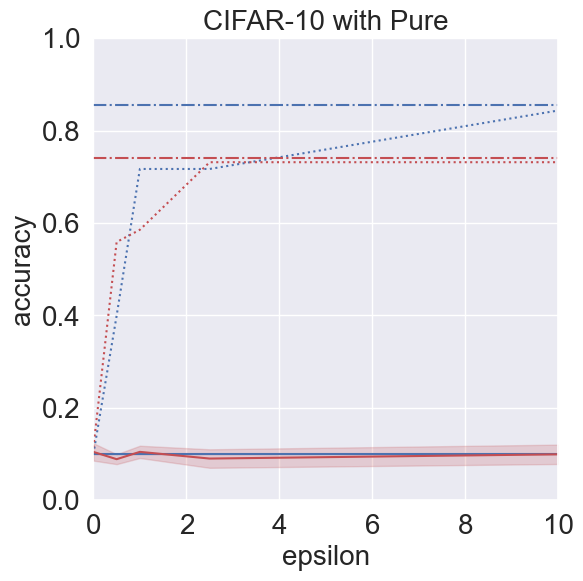}
\end{subfigure} \\

\begin{subfigure}[b]{\textwidth}
    \centering
    \includegraphics[width=\textwidth]{figures/legend.png}
\end{subfigure} \hspace{0.2in}
\begin{subfigure}[b]{0.3\textwidth}
    \centering
\end{subfigure}
\caption{Classification results with $\varepsilon_{\mathrm{lbl}}=7$}
\label{fig:epslbl7_primary}
\vspace{-0.1 in}
\end{figure}


\begin{figure}[ht]
\centering

\begin{subfigure}[b]{0.3\textwidth}
    \centering
    \includegraphics[width=\textwidth,height=4cm]{figures/dbpedia_14_rr_5.png}
\end{subfigure} \hspace{0.2in}
\begin{subfigure}[b]{0.3\textwidth}
    \centering
    \includegraphics[width=\textwidth,height=4cm]{figures/dbpedia_14_3negbin_5.png}
\end{subfigure} \hspace{0.2in}
\begin{subfigure}[b]{0.3\textwidth}
    \centering
    \includegraphics[width=\textwidth,height=4cm]{figures/dbpedia_14_pure_5.png}
\end{subfigure}\\

\begin{subfigure}[b]{0.3\textwidth}
    \centering
    \includegraphics[width=\textwidth,height=4cm]{figures/ag_news_rr_5.png}
\end{subfigure} \hspace{0.2in}
\begin{subfigure}[b]{0.3\textwidth}
    \centering
    \includegraphics[width=\textwidth,height=4cm]{figures/ag_news_3negbin_5.png}
\end{subfigure} \hspace{0.2in}
\begin{subfigure}[b]{0.3\textwidth}
    \centering
    \includegraphics[width=\textwidth,height=4cm]{figures/ag_news_pure_5.png}
\end{subfigure}\\

\begin{subfigure}[b]{0.3\textwidth}
    \centering
    \includegraphics[width=\textwidth,height=4cm]{figures/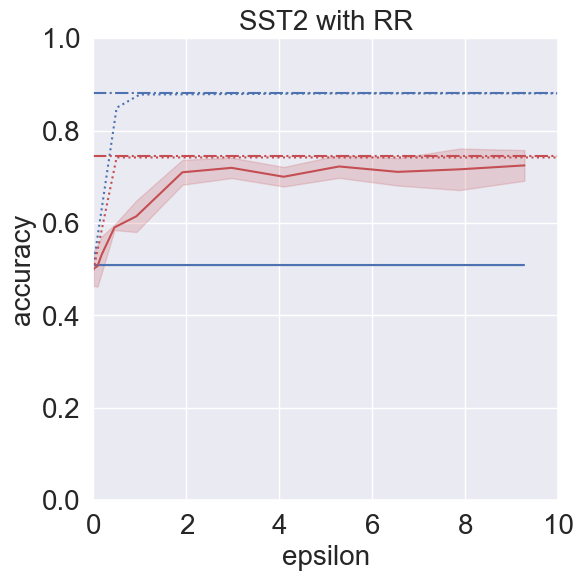}
\end{subfigure} \hspace{0.2in}
\begin{subfigure}[b]{0.3\textwidth}
    \centering
    \includegraphics[width=\textwidth,height=4cm]{figures/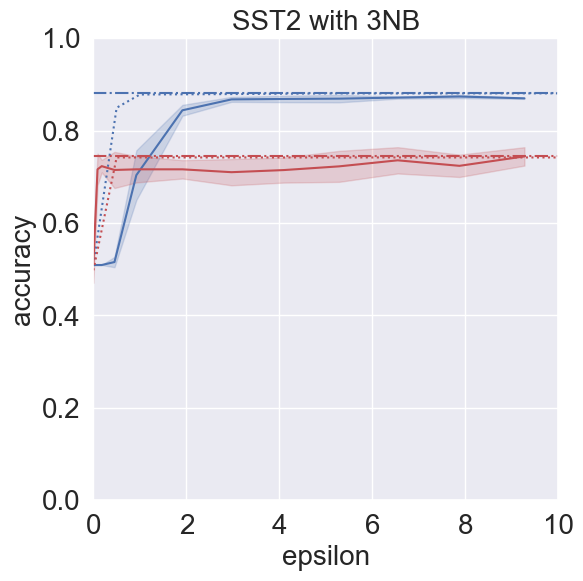}
\end{subfigure} \hspace{0.2in}
\begin{subfigure}[b]{0.3\textwidth}
    \centering
    \includegraphics[width=\textwidth,height=4cm]{figures/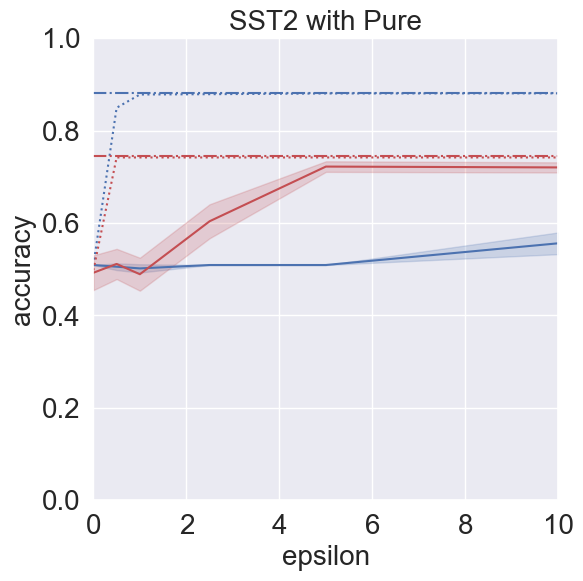}
\end{subfigure} \\

\begin{subfigure}[b]{0.3\textwidth}
    \centering
    \includegraphics[width=\textwidth,height=4cm]{figures/cifar10_rr_5.png}
\end{subfigure} \hspace{0.2in}
\begin{subfigure}[b]{0.3\textwidth}
    \centering
    \includegraphics[width=\textwidth,height=4cm]{figures/cifar10_3negbin_5.png}
\end{subfigure} \hspace{0.2in}
\begin{subfigure}[b]{0.3\textwidth}
    \centering
    \includegraphics[width=\textwidth,height=4cm]{figures/cifar10_pure_5.png}
\end{subfigure} \\

\begin{subfigure}[b]{\textwidth}
    \centering
    \includegraphics[width=\textwidth]{figures/legend.png}
\end{subfigure} \hspace{0.2in}
\begin{subfigure}[b]{0.3\textwidth}
    \centering
\end{subfigure}
\caption{Classification results with $\varepsilon_{\mathrm{lbl}}=5$ (this is a copy of Figure~\ref{fig:epslbl5_primary} for convenience)}
\label{fig:epslbl5_again}
\vspace{-0.1 in}
\end{figure}


\begin{figure}[ht]
\centering

\begin{subfigure}[b]{0.3\textwidth}
    \centering
    \includegraphics[width=\textwidth,height=4cm]{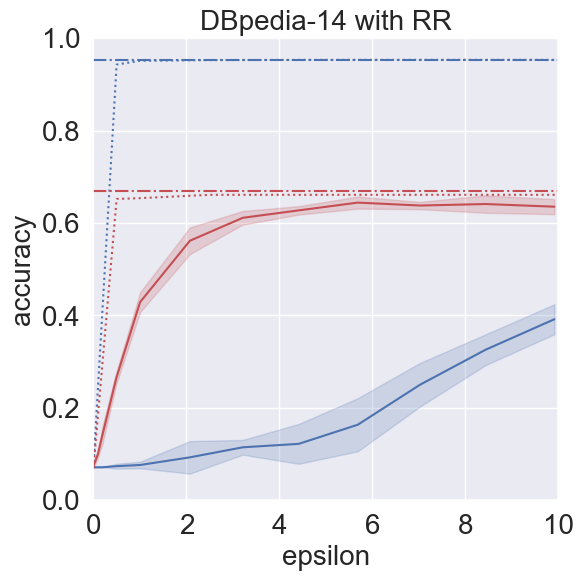}
\end{subfigure} \hspace{0.2in}
\begin{subfigure}[b]{0.3\textwidth}
    \centering
    \includegraphics[width=\textwidth,height=4cm]{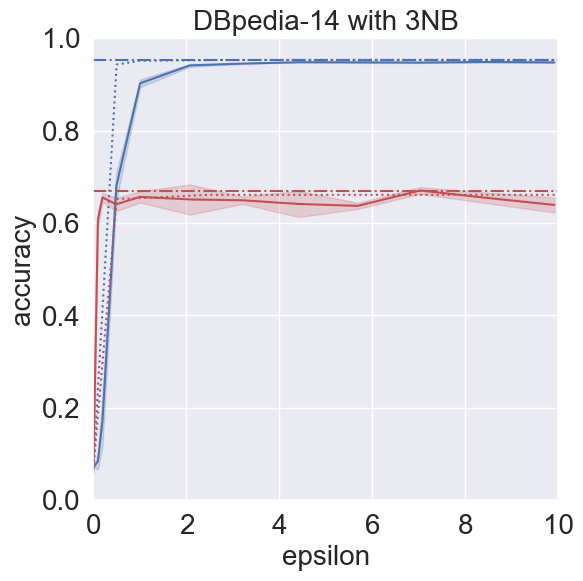}
\end{subfigure} \hspace{0.2in}
\begin{subfigure}[b]{0.3\textwidth}
    \centering
    \includegraphics[width=\textwidth,height=4cm]{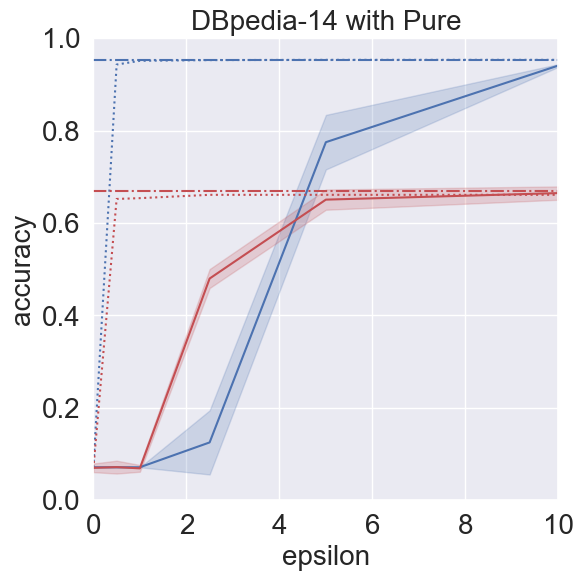}
\end{subfigure}\\

\begin{subfigure}[b]{0.3\textwidth}
    \centering
    \includegraphics[width=\textwidth,height=4cm]{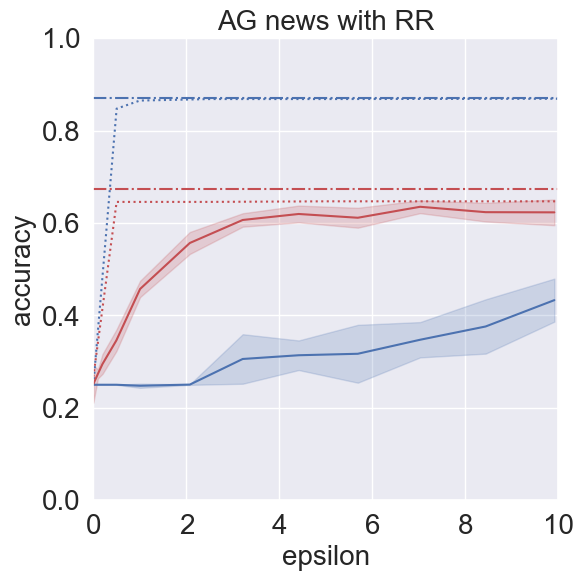}
\end{subfigure} \hspace{0.2in}
\begin{subfigure}[b]{0.3\textwidth}
    \centering
    \includegraphics[width=\textwidth,height=4cm]{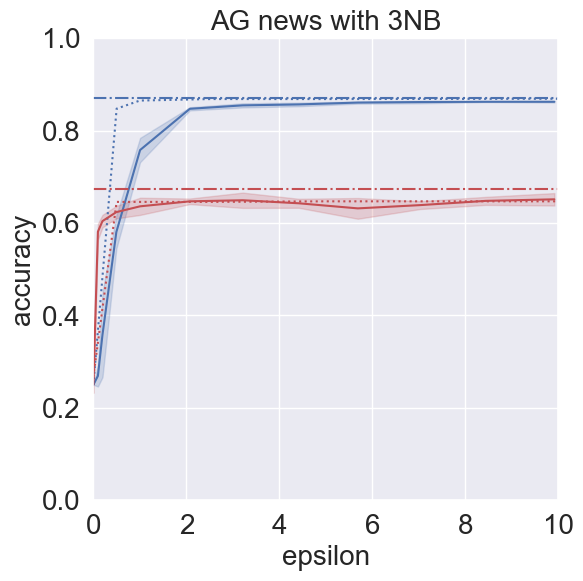}
\end{subfigure} \hspace{0.2in}
\begin{subfigure}[b]{0.3\textwidth}
    \centering
    \includegraphics[width=\textwidth,height=4cm]{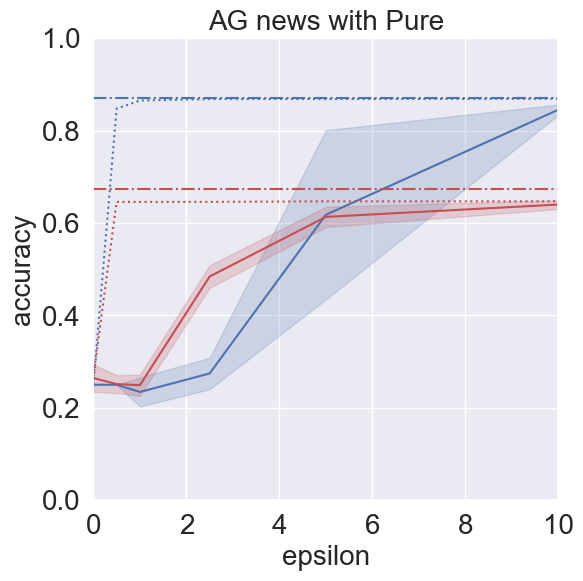}
\end{subfigure}\\

\begin{subfigure}[b]{0.3\textwidth}
    \centering
    \includegraphics[width=\textwidth,height=4cm]{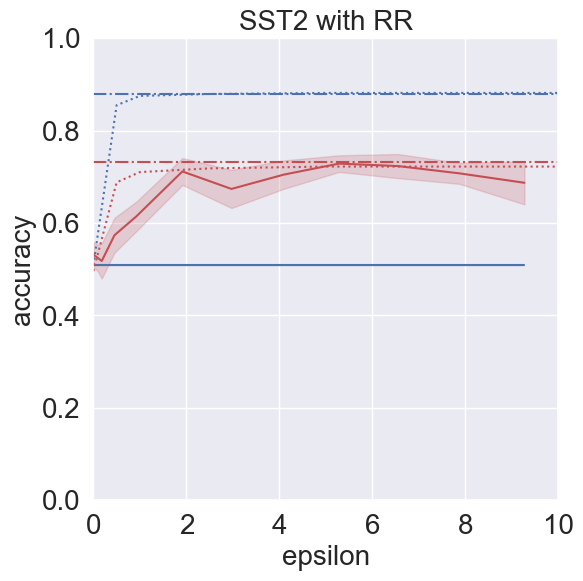}
\end{subfigure} \hspace{0.2in}
\begin{subfigure}[b]{0.3\textwidth}
    \centering
    \includegraphics[width=\textwidth,height=4cm]{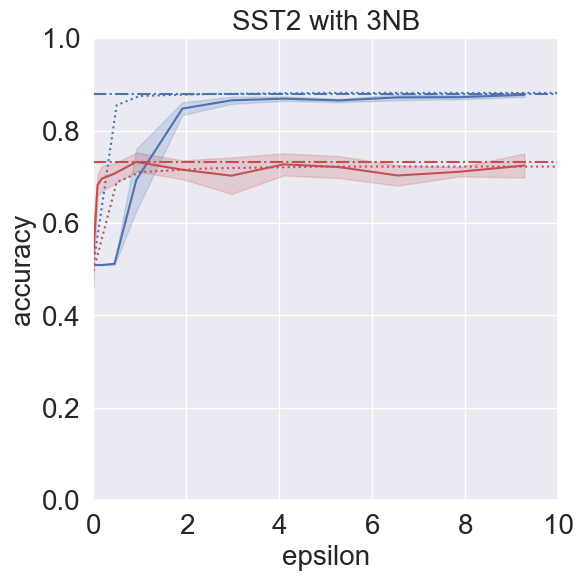}
\end{subfigure} \hspace{0.2in}
\begin{subfigure}[b]{0.3\textwidth}
    \centering
    \includegraphics[width=\textwidth,height=4cm]{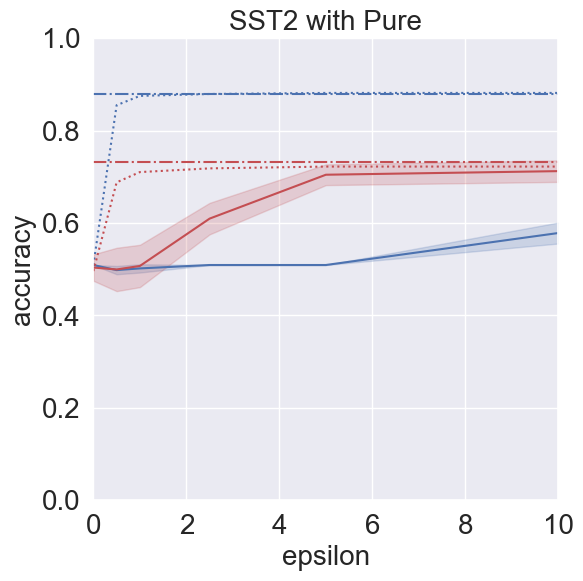}
\end{subfigure} \\

\begin{subfigure}[b]{0.3\textwidth}
    \centering
    \includegraphics[width=\textwidth,height=4cm]{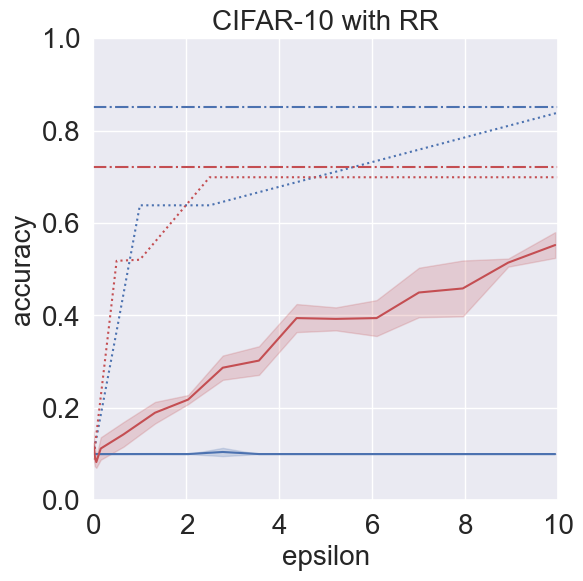}
\end{subfigure} \hspace{0.2in}
\begin{subfigure}[b]{0.3\textwidth}
    \centering
    \includegraphics[width=\textwidth,height=4cm]{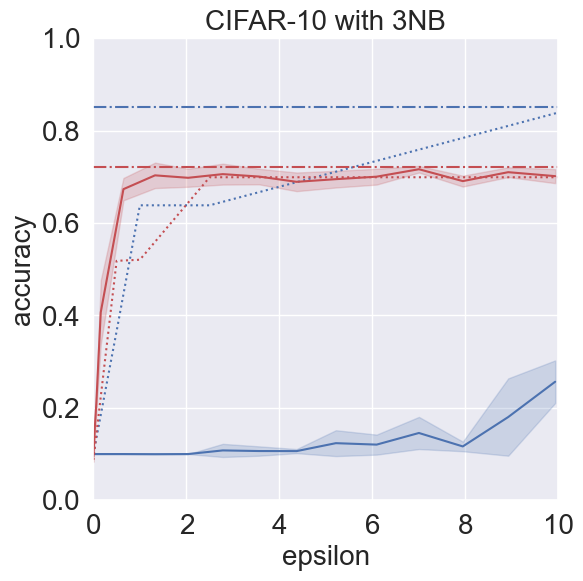}
\end{subfigure} \hspace{0.2in}
\begin{subfigure}[b]{0.3\textwidth}
    \centering
    \includegraphics[width=\textwidth,height=4cm]{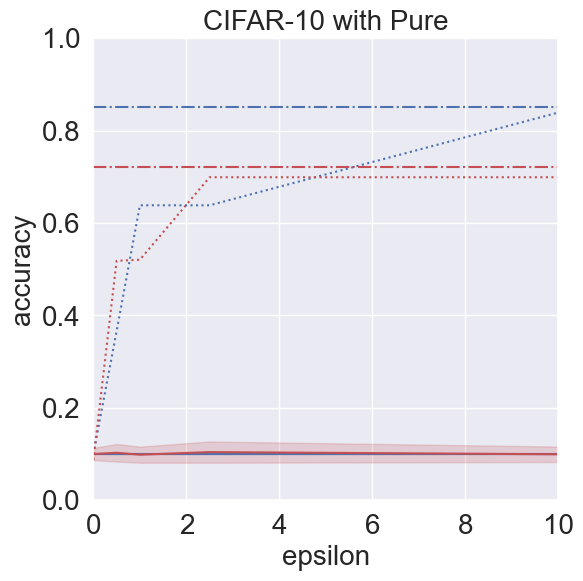}
\end{subfigure} \\

\begin{subfigure}[b]{\textwidth}
    \centering
    \includegraphics[width=\textwidth]{figures/legend.png}
\end{subfigure} \hspace{0.2in}
\begin{subfigure}[b]{0.3\textwidth}
    \centering
\end{subfigure}
\caption{Classification results with $\varepsilon_{\mathrm{lbl}}=3$}
\label{fig:epslbl3_primary}
\vspace{-0.1 in}
\end{figure}


\clearpage

\begin{table*}
\caption{Private class decoding results with $\varepsilon_{\mathrm{lbl}}=5$ and $\varepsilon\approx5.7$} \label{tbl:decoding5}
\begin{centering}
\scriptsize
\begin{tabular}
{p{0.08\linewidth}p{0.06\linewidth}p{0.03\linewidth}p{0.34\linewidth}p{0.34\linewidth}}
\toprule
 \textbf{Dataset} & \textbf{Class} & \textbf{Bitsum} & \textbf{Gaussian KDE class decoding} & \textbf{IP KDE class decoding} \\
\midrule

\multirow{14}{*}{DBPedia-14} & \multirow{3}{0.1\linewidth}{Company} & RR & seacorp, gencorp, southcorp & xuande, xinmi, xue \\
 &  & 3NB & europacorp, telekomunikasi, southcorp & companys, railcorp, interactivecorp \\
 &  & Pure & storebrand, railcorp, onbancorp & companys, companies, nokiacorp \\
\cmidrule{2-5}
 & \multirow{3}{0.1\linewidth}{Artist} & RR & mandolinist, bosacki, cofounder & ricketson, attributor, vijayendra \\
 &  & 3NB & author, novelist, writer & author, musician, biographically \\
 &  & Pure & raymonde, bertogliati, bonifassi & author, roberthoudin, musician \\
\cmidrule{2-5}
 & \multirow{3}{0.1\linewidth}{Office holder} & RR & legislator, politician, mulroney & cabinetmaker, chatham, provost \\
 &  & 3NB & ministerpresident, legislator, pastpresident & ministerpresident, legislator, congressperson \\
 &  & Pure & legislator, reelections, senatorial & ministerpresident, governorsgeneral, legislator \\
\cmidrule{2-5}
 & \multirow{3}{0.1\linewidth}{Building} & RR & proctorville, connellsville, fargomoorhead & friedrichwilhelmsuniversitt, nordwestmecklenburg, brandenburgbayreuth \\
 &  & 3NB & galehouse, hyannisport, beaconhouse & churchville, reisterstown, jeffersontown \\
 &  & Pure & headquarters, northcote, northvale & holyroodhouse, reisterstown, beaconhouse \\
\cmidrule{2-5}
 & \multirow{3}{0.1\linewidth}{Village} & RR & szewczenko, przodkowo, wodiczko & manasse, jeram, esfahan \\
 &  & 3NB & khuzistan, wojciechowice, szczawnica & khairabad, kyrghyzstan, kishanganj \\
 &  & Pure & kleveland, kurdamir, diyarbakirspor & kyrghyzstan, khairabad, yusefabad \\
\cmidrule{2-5}
 & \multirow{3}{0.1\linewidth}{Plant} & RR & araucariaceae, rubiaceae, araceae & succulents, cunoniaceae, chaetophoraceae \\
 &  & 3NB & celastraceae, rubiaceae, cactaceae & chenopodiaceae, chaetophoraceae, araucariaceae \\
 &  & Pure & sapindaceae, violaceae, chenopodiaceae & chenopodiaceae, loranthaceae, gesneriaceae \\
\cmidrule{2-5}
 & \multirow{3}{0.1\linewidth}{Film} & RR & screenplay, movie, filmore & dishonoring, dishonor, inglorious \\
 &  & 3NB & toyland, musketeers, imdb & movie, screenplay, biopic \\
 &  & Pure & screenplay, casablanca, filmmakers & movie, sicario, screenplay \\
\cmidrule{2-5}
 & \multirow{3}{0.1\linewidth}{Educational institution} & RR & schoolcollege, secondaryschool, boardingschool & humboldtuniversitt, everetts, aleksandrw \\
 &  & 3NB & bryancollege, boardingschool, schoolship & schoolcollege, boardingschool, polytechnic \\
 &  & Pure & boardingschool, publicschool, allschool & schoolcollege, boardingschool, polytechnic \\
\cmidrule{2-5}
 & \multirow{3}{0.1\linewidth}{Athlete} & RR & ajanovic, jovanovski, miloevi & bohuslav, denverbased, petersen \\
 &  & 3NB & sportsperson, gillenwater, khairuddin & sportsperson, footballer, handballer \\
 &  & Pure & alifirenko, kovalenko, ilyushenko & laliashvili, jamalullail, footballer \\
\cmidrule{2-5}
 & \multirow{3}{0.1\linewidth}{Mean of transport} & RR & warship, troopships, aircrafts & curtisswright, veteran, pilotless \\
 &  & 3NB & fleetness, warship, shipmasters & warship, frigate, torpedoboat \\
 &  & Pure & warship, battleships, sailed & warship, landcruiser, torpedoboat \\
\cmidrule{2-5}
 & \multirow{3}{0.1\linewidth}{Natural place} & RR & beringen, freshwater, merideth & bernardini, intermountain, varangians \\
 &  & 3NB & villeurbanne, riverina, curwensville & river, danube, rivermaya \\
 &  & Pure & fergushill, danube, waldenburg & floodplain, river, rivermaya \\
\cmidrule{2-5}
 & \multirow{3}{0.1\linewidth}{Animal} & RR & coraciidae, caeciliidae, cicadellidae & columbellidae, marginellidae, caractacus \\
 &  & 3NB & carangidae, fasciolariidae, scolopacidae & leiothrichidae, marginellidae, phasianellidae \\
 &  & Pure & coccinellidae, coraciidae, cardinalidae & dendrobatidae, margaritidae, catostomidae \\
\cmidrule{2-5}
 & \multirow{3}{0.1\linewidth}{Album} & RR & discography, vocals, stereophonics & album, korn, groupie \\
 &  & 3NB & album, instrumentals, tracklist & album, discography, tracklist \\
 &  & Pure & discography, pledgemusic, vocals & album, discography, allmusic \\
\cmidrule{2-5}
 & \multirow{3}{0.1\linewidth}{Written work} & RR & biographies, storybook, author & booksurge, huilai, huizhou \\
 &  & 3NB & bibliography, biographies, autobiography & nonfiction, author, biographies \\
 &  & Pure & wittgenstein, werman, fangoria & nonfiction, author, bibliography \\
\midrule
\multirow{4}{*}{AG news} & \multirow{3}{0.1\linewidth}{Sports} & RR & winningest, standings, playoff & mccolm, mccartt, inconclusive \\
 &  & 3NB & huels, kurkjian, darrington & playoff, championship, standings \\
 &  & Pure & gallardo, unfit, basketball & playoff, postseason, runsgriffey \\
\cmidrule{2-5}
 & \multirow{3}{0.1\linewidth}{Business} & RR & theba, buybacks, kulikowski & surcharging, nonhazardous, surcharges \\
 &  & 3NB & retrials, clawbacks, revaluation & nasdaq, enron, divestitures \\
 &  & Pure & nasdaq, exxonmobil, exxon & nasdaq, divestitures, nyse \\
\cmidrule{2-5}
 & \multirow{3}{0.1\linewidth}{World} & RR & terrorist, bombings, ilghazi & ppas, atpranking, deng \\
 &  & 3NB & hostages, baghdadi, nabaa & terrorists, militants, qaeda \\
 &  & Pure & qaeda, iraqstld, antifur & qaeda, ceasefire, intifadas \\
\cmidrule{2-5}
 & \multirow{3}{0.1\linewidth}{Sci/Tech} & RR & ibm, loango, launchpads & ough, iebc, oul \\
 &  & 3NB & microsoft, viacom, protv & microsoft, ibm, lucenttech \\
 &  & Pure & edgeware, proximity, shareware & ibm, infotrends, suntec \\
\midrule
\multirow{2}{*}{SST2} & \multirow{3}{0.1\linewidth}{Negative} & RR & overstating, dreadful, awfulness & gameplay, tactics, underplaying \\
 &  & 3NB & derailments, vagueness, breakage & blandness, dramaturgy, comedy \\
 &  & Pure & perversities, sentimentalism, overthinking & tragedy, blandness, melodrama \\
\cmidrule{2-5}
 & \multirow{3}{0.1\linewidth}{Positive} & RR & fervor, phenomenom, pageantry & embeddable, imbedding, embed \\
 &  & 3NB & cinema, screenplays, films & evocative, salaciousness, theatricality \\
 &  & Pure & majestically, dramatization, shrewdness & memorability, evocative, masterpieces \\
\bottomrule
\end{tabular}
\end{centering}
\end{table*}

\begin{table*}
\caption{Private class decoding results with $\varepsilon_{\mathrm{lbl}}=5$ and $\varepsilon\approx4.4$} \label{tbl:decoding4}
\begin{centering}
\scriptsize
\begin{tabular}
{p{0.08\linewidth}p{0.06\linewidth}p{0.03\linewidth}p{0.34\linewidth}p{0.34\linewidth}}
\toprule
 \textbf{Dataset} & \textbf{Class} & \textbf{Bitsum} & \textbf{Gaussian KDE class decoding} & \textbf{IP KDE class decoding} \\
\midrule

\multirow{14}{*}{DBPedia-14} & \multirow{3}{0.1\linewidth}{Company} & RR & companys, manufacturera, comcorp & airgroup, aerosystems, bluepoint \\
 &  & 3NB & manufactories, subsidiaries, originators & companys, railcorp, baycorp \\
 &  & Pure & railcorp, companys, comapny & companys, companies, manufactories \\
\cmidrule{2-5}
 & \multirow{3}{0.1\linewidth}{Artist} & RR & singer, musician, ewan & marxer, auditioner, surinder \\
 &  & 3NB & balladeer, musician, artiste & author, musician, novelist \\
 &  & Pure & author, artist, composers & author, biographically, musician \\
\cmidrule{2-5}
 & \multirow{3}{0.1\linewidth}{Office holder} & RR & mcclelland, mclellan, dreiberg & janetta, janette, jadakiss \\
 &  & 3NB & representant, representan, representantes & legislator, ministerpresident, congressperson \\
 &  & Pure & bashiruddin, ministerpresident, lazarescu & politician, ministerpresident, liberhan \\
\cmidrule{2-5}
 & \multirow{3}{0.1\linewidth}{Building} & RR & woodville, marksville, douglassville & poteet, hocutt, chestnutt \\
 &  & 3NB & stationhouse, randallstown, dovercourt & churchville, chapeltown, beaconhouse \\
 &  & Pure & headquarter, hyattsville, weaverville & holyroodhouse, fenchurch, charleswood \\
\cmidrule{2-5}
 & \multirow{3}{0.1\linewidth}{Village} & RR & krakowiak, krzynowoga, lubliniec & kieslowski, radiolocation, blenkiron \\
 &  & 3NB & kyrghyzstan, khazakstan, diyarbakir & kyrghyzstan, khairabad, khuzistan \\
 &  & Pure & taleyarkhan, voivodeship, mieszkowice & khuzestan, kyrghyzstan, diyarbakir \\
\cmidrule{2-5}
 & \multirow{3}{0.1\linewidth}{Plant} & RR & saxifragaceae, loranthaceae, sapotaceae & lauraceae, loganiaceae, annonaceae \\
 &  & 3NB & orobanchaceae, mycenaceae, bromeliaceae & chenopodiaceae, araucariaceae, loranthaceae \\
 &  & Pure & chenopodiaceae, podocarpaceae, cactaceae & cupressaceae, rubiaceae, chaetophoraceae \\
\cmidrule{2-5}
 & \multirow{3}{0.1\linewidth}{Film} & RR & biopic, imdb, screenplay & dollywood, isoroku, rakotomanana \\
 &  & 3NB & biopic, movie, silmarillion & movie, screenplay, biopic \\
 &  & Pure & biopic, cinemax, films & movie, sicario, biopic \\
\cmidrule{2-5}
 & \multirow{3}{0.1\linewidth}{Educational institution} & RR & ucda, madrassa, polytechnic & write, reflectometry, chathams \\
 &  & 3NB & schoolcollege, polytechnic, fachhochschule & schoolcollege, boardingschool, polytechnic \\
 &  & Pure & eduniversal, universits, universitat & schoolcollege, boardingschool, publicschool \\
\cmidrule{2-5}
 & \multirow{3}{0.1\linewidth}{Athlete} & RR & borgne, romanowski, brzezinski & sobolewski, khatemi, wlosowicz \\
 &  & 3NB & laliashvili, gianluigi, pirlo & sportsperson, handballer, ivanovic \\
 &  & Pure & pejaevi, milanovic, tomashova & konashenkov, laliashvili, kalynychenko \\
\cmidrule{2-5}
 & \multirow{3}{0.1\linewidth}{Mean of transport} & RR & battleships, navymarine, landcruiser & pinezhsky, pisetsky, ilyinsky \\
 &  & 3NB & spitfires, troopships, maersk & warship, landcruiser, frigate \\
 &  & Pure & warship, frigate, torpedo & warship, frigate, landcruiser \\
\cmidrule{2-5}
 & \multirow{3}{0.1\linewidth}{Natural place} & RR & krauchanka, gaucelm, kotonowaka & halethorpe, mapplethorpe, chloropaschia \\
 &  & 3NB & danube, tributary, vilfredo & river, rivermaya, rivervale \\
 &  & Pure & soligorsk, vassilakis, nordgau & floodplain, danube, river \\
\cmidrule{2-5}
 & \multirow{3}{0.1\linewidth}{Animal} & RR & coraciidae, glareolidae, phyllostomidae & mollusc, motacillidae, molluscan \\
 &  & 3NB & paludomidae, acrolepiidae, discodorididae & leiothrichidae, marginellidae, catostomidae \\
 &  & Pure & marginellidae, riodinidae, orthogoniinae & mantellidae, catostomidae, coraciidae \\
\cmidrule{2-5}
 & \multirow{3}{0.1\linewidth}{Album} & RR & album, vanilli, europop & melancholy, poetica, majra \\
 &  & 3NB & allmusic, remixes, remixed & album, discography, allmusic \\
 &  & Pure & housemusic, tracklist, musicology & album, discography, tracklist \\
\cmidrule{2-5}
 & \multirow{3}{0.1\linewidth}{Written work} & RR & booknotes, pulitzerprize, apocryphally & terrors, sarkies, dementyeva \\
 &  & 3NB & novelistic, magazine, novelist & nonfiction, author, novelist \\
 &  & Pure & authorites, novelette, novelist & nonfiction, bibliography, author \\
\midrule
\multirow{4}{*}{AG news} & \multirow{3}{0.1\linewidth}{Sports} & RR & lose, playoff, sportschannel & eddard, frankenfish, paeonian \\
 &  & 3NB & cbssportscom, hof, injuries & playoff, semifinalists, championship \\
 &  & Pure & byrd, garvin, deq & injury, guardino, tiedown \\
\cmidrule{2-5}
 & \multirow{3}{0.1\linewidth}{Business} & RR & gencorp, walkout, comstock & citywest, epson, arkwright \\
 &  & 3NB & opec, sirri, toyota & nasdaq, stockholders, divestitures \\
 &  & Pure & nonactors, goldcorp, archconfraternity & outbids, stockpiling, outselling \\
\cmidrule{2-5}
 & \multirow{3}{0.1\linewidth}{World} & RR & collusion, dahle, kejie & yawner, oswalt, russ \\
 &  & 3NB & islamiah, taliban, hurghada & hamas, qaeda, taliban \\
 &  & Pure & bomb, nomination, warmongering & baghdadi, occupiers, militants \\
\cmidrule{2-5}
 & \multirow{3}{0.1\linewidth}{Sci/Tech} & RR & baidu, microsoft, tencent & multicamera, intercambio, videoconferences \\
 &  & 3NB & ibm, httpwwwdaimlerchryslercom, computerware & microsoft, ibm, lucenttech \\
 &  & Pure & infotrends, ati, infogear & redesigns, ibm, lucenttech \\
\midrule
\multirow{2}{*}{SST2} & \multirow{3}{0.1\linewidth}{Negative} & RR & portrayal, dreariness, bleakness & repeated, twicetobeat, ringed \\
 &  & 3NB & murkiness, unexciting, vapidity & comedy, dramaturgy, dramaturgical \\
 &  & Pure & dramaturgy, portrayals, dramatising & unpleasantries, unpleasantness, deadness \\
\cmidrule{2-5}
 & \multirow{3}{0.1\linewidth}{Positive} & RR & reworded, comedies, critiques & rastignac, ruderman, gritschuk \\
 &  & 3NB & memorability, rausing, miserables & masterpieces, salaciousness, evocative \\
 &  & Pure & intiative, artistical, screenplays & vividness, evocative, presence \\

\bottomrule
\end{tabular}
\end{centering}
\end{table*}

\begin{table*}
\centering
\caption{Private class decoding results with $\varepsilon_{\mathrm{lbl}}=5$ and $\varepsilon\approx3.2$ (full version of \Cref{tbl:decodingmain})} \label{tbl:decoding3}
\begin{centering}
\scriptsize
\begin{tabular}
{p{0.08\linewidth}p{0.06\linewidth}p{0.03\linewidth}p{0.34\linewidth}p{0.34\linewidth}}
\toprule
 \textbf{Dataset} & \textbf{Class} & \textbf{Bitsum} & \textbf{Gaussian KDE class decoding} & \textbf{IP KDE class decoding} \\
\midrule

\multirow{14}{*}{DBPedia-14} & \multirow{3}{0.8\linewidth}{Company} & RR & vendors, gencorp, servicers & firesign, wnews, usos \\
 &  & 3NB & molycorp, newscorp, mediacorp & companys, alicorp, interactivecorp \\
 &  & Pure & ameritech, alicorp, newscorp & alibabacom, oscorp, companies \\
\cmidrule{2-5}
 & \multirow{3}{0.1\linewidth}{Artist} & RR & author, aristizabal, levesongower & catalani, macki, bacashihua \\
 &  & 3NB & artist, lyricists, musician & author, musician, roberthoudin \\
 &  & Pure & author, originator, mikhaylovsky & musician, artist, jacquesfranois \\
\cmidrule{2-5}
 & \multirow{3}{0.1\linewidth}{Office holder} & RR & louislegrand, legislator, lawmaker & sulphide, exclude, sulked \\
 &  & 3NB & patiashvili, kumaritashvili, biographers & ministerpresident, legislator, congressperson \\
 &  & Pure & polinard, bobrzaski, politician & ministerpresident, politician, polian \\
\cmidrule{2-5}
 & \multirow{3}{0.1\linewidth}{Building} & RR & reisterstown, benenson, hellertown & opened, mastered, poegaslavonia \\
 &  & 3NB & frenchtown, brookeville, kenansville & beaconhouse, manorville, reisterstown \\
 &  & Pure & huntingtonwhiteley, wrightstown, randallstown & hyattsville, roxboro, reisterstown \\
\cmidrule{2-5}
 & \multirow{3}{0.1\linewidth}{Village} & RR & lalganj, balrampur, manikganj & baluchestan, jagiellonia, nidderdale \\
 &  & 3NB & pazardzhik, tzintzuntzan, khuzistan & kyrghyzstan, kalinske, kalinski \\
 &  & Pure & poniewozik, mieszkowice, czerniewice & kazemabad, diyarbakir, khoramabad \\
\cmidrule{2-5}
 & \multirow{3}{0.1\linewidth}{Plant} & RR & chaetophoraceae, gentianaceae, rutaceae & chilensis, surinamensis, tampines \\
 &  & 3NB & cupressaceae, chaetophoraceae, podocarpaceae & chenopodiaceae, araucariaceae, loranthaceae \\
 &  & Pure & asclepiadaceae, cupressaceae, gentianaceae & chaetophoraceae, chenopodiaceae, araucariaceae \\
\cmidrule{2-5}
 & \multirow{3}{0.1\linewidth}{Film} & RR & biopic, movie, screenplay & kaptai, kakhi, kaloi \\
 &  & 3NB & filmography, vanya, ghostbusters & movie, filmography, screenplay \\
 &  & Pure & filme, movie, videodrome & movie, film, filmmakers \\
\cmidrule{2-5}
 & \multirow{3}{0.1\linewidth}{Educational institution} & RR & schoolcollege, boardingschool, allschool & eastern, marykane, kbe \\
 &  & 3NB & boardingschool, schoolcollege, publicschool & schoolcollege, boardingschool, polytechnic \\
 &  & Pure & schoolcollege, polytechnic, qschool & schoolcollege, publicschool, boardingschool \\
\cmidrule{2-5}
 & \multirow{3}{0.1\linewidth}{Athlete} & RR & kovaleski, kaessmann, miroshnichenko & torstensson, torstenson, torlakson \\
 &  & 3NB & fabianski, tarnowski, bochenski & sportsperson, laliashvili, konashenkov \\
 &  & Pure & rightfielder, leftfielder, konashenkov & lukasiewicz, sportsperson, marcinkiewicz \\
\cmidrule{2-5}
 & \multirow{3}{0.1\linewidth}{Mean of transport} & RR & warship, frigate, steamships & latrodectus, laax, herx \\
 &  & 3NB & landcruiser, warship, landships & warship, frigate, landcruiser \\
 &  & Pure & battlecruiser, warship, hmso & warship, landcruiser, connaught \\
\cmidrule{2-5}
 & \multirow{3}{0.1\linewidth}{Natural place} & RR & tributary, riverina, river & bimota, miercoles, mientras \\
 &  & 3NB & langenlonsheim, nordwestmecklenburg, schweinfurt & rivermaya, river, danube \\
 &  & Pure & riverbeds, azkoitia, zaporozhian & lakernotes, river, riverina \\
\cmidrule{2-5}
 & \multirow{3}{0.1\linewidth}{Animal} & RR & carangidae, caeciliidae, arctiidae & taricani, tardio, kambona \\
 &  & 3NB & leiothrichidae, coleoptera, acrolepiidae & marginellidae, limoniidae, catostomidae \\
 &  & Pure & coraciidae, poeciliidae, acrolepiidae & caeciliidae, heliozelidae, lasiocampidae \\
\cmidrule{2-5}
 & \multirow{3}{0.1\linewidth}{Album} & RR & discography, sevenfold, album & roadster, approximant, pantocrator \\
 &  & 3NB & discography, album, allmusic & album, discography, allmusic \\
 &  & Pure & discography, album, allmusic & album, decemberists, song \\
\cmidrule{2-5}
 & \multirow{3}{0.1\linewidth}{Written work} & RR & nonfiction, encyclopedia, nonfictional & sociolinguist, becc, sociolegal \\
 &  & 3NB & author, magazine, novelist & nonfiction, author, biographies \\
 &  & Pure & reganbooks, novelettes, fourbook & synopsis, nonfiction, biographies \\
\midrule
\multirow{4}{*}{AG news} & \multirow{3}{0.1\linewidth}{Sports} & RR & vizner, runnerups, dietrichson & ongeri, grandi, zarate \\
 &  & 3NB & injury, semifinalists, finalists & semifinalists, championship, standings \\
 &  & Pure & pensford, rematches, undefeated & chauci, teammates, nith \\
\cmidrule{2-5}
 & \multirow{3}{0.1\linewidth}{Business} & RR & repurchases, downtrend, equitywatch & sneed, timesnews, anxiousness \\
 &  & 3NB & enrononline, investcorp, comcorp & stockholders, nasdaq, marketwatchcom \\
 &  & Pure & corporations, consolidations, consolidated & merger, divestiture, stockholders \\
\cmidrule{2-5}
 & \multirow{3}{0.1\linewidth}{World} & RR & iraqi, hamas, darfur & tym, asg, tyo \\
 &  & 3NB & hezbollah, hamas, iraqstld & hamas, terrorists, baghdadi \\
 &  & Pure & kutayev, qaeda, yanukovych & shamkir, barricading, samaritans \\
\cmidrule{2-5}
 & \multirow{3}{0.1\linewidth}{Sci/Tech} & RR & snopes, cyberworks, hacktivists & meanings, collegefootballnewscom, multipolarity \\
 &  & 3NB & ibm, thermedics, flextech & microsoft, ibm, accenture \\
 &  & Pure & feedbacks, companywide, eurogroup & movedtech, techcrunch, swindlers \\
\midrule
\multirow{2}{*}{SST2} & \multirow{3}{0.1\linewidth}{Negative} & RR & beguile, inception, shallow & manipulating, uncouple, dissects \\
 &  & 3NB & melodrama, rawness, blandness & comedy, tastelessness, uneasiness \\
 &  & Pure & chumminess, meaningfulness, mootness & absurdities, chastisement, absurdity \\
\cmidrule{2-5}
 & \multirow{3}{0.1\linewidth}{Positive} & RR & kindliness, pleasantness, entertaining & enjoyments, academie, amusements \\
 &  & 3NB & salacious, movie, majestic & salaciousness, theatricality, memorability \\
 &  & Pure & spiritedness, spirited, perspicacious & exorcisms, fairytales, revisiting \\

\bottomrule
\end{tabular}
\end{centering}
\end{table*}

\clearpage

\end{document}